\title{One-Bit Quantization and Sparsification for Multiclass Linear Classification via Regularized Regression}
\newif\ifuniqueAffiliation
\author{ Reza Ghane$^*$ \\
        Department of Electrical Engineering
	\\
	California Institute of Technology\\
	Pasadena, CA 91125 \\
	\texttt{rghanekh@caltech.edu} \\
	\And
        Danil Akhtiamov$^*$ \\
	Department of Computing and Mathematical Sciences\\
	California Institute of Technology\\
	Pasadena, CA 91125 \\\texttt{dakhtiam@caltech.edu} \\
	\AND
        Babak Hassibi \\
	Department of Electrical Engineering\\
	California Institute of Technology\\
	Pasadena, CA 91125 \\
	\texttt{hassibi@caltech.edu} \\
}
\theoremstyle{plain}
\newtheorem{theorem}{Theorem}[section]
\newtheorem{lemma}[theorem]{Lemma}
\newtheorem{corollary}[theorem]{Corollary}
\theoremstyle{definition}
\theoremstyle{remark}
\newcommand{\calR}{\mathcal{R}}
\newcommand{\calN}{\mathcal{N}}
\newcommand{\bbE}{\mathbb{E}}
\newcommand{\bbP}{\mathbb{P}}
\newcommand{\bbR}{\mathbb{R}}
\newcommand{\msr}{\mathscr{r}}
\newcommand{\mutil}{\tilde{\mu}}
\newcommand{\ytil}{\tilde{y}}
\newcommand{\Ytil}{\tilde{Y}}
\newcommand{\phil}{\underline{\phi}_\lambda}
\newcommand{\phiu}{\bar{\phi}_\lambda}
\newcommand{\wl}{\underline{w}}
\newcommand{\wu}{\bar{w}}
\begin{document}
\maketitle
\def\thefootnote{*}\footnotetext{Equal contribution}
\maketitle

\begin{abstract}

We study the use of linear regression for multiclass classification in the over-parametrized regime where some of the training data is mislabeled. In such scenarios it is necessary to add an explicit regularization term, $\lambda f(w)$, for some convex function $f(\cdot)$, to avoid overfitting the mislabeled data. In our analysis, we assume that the data is sampled from a Gaussian Mixture Model with equal class sizes, and that a proportion $c$ of the training labels is corrupted for each class. Under these assumptions, we prove that the best classification performance is achieved when $f(\cdot) = \|\cdot\|^2_2$ and $\lambda \to \infty$. We then proceed to analyze the classification errors for $f(\cdot) = \|\cdot\|_1$ and $f(\cdot) = \|\cdot\|_\infty$ in the large $\lambda$ regime and notice that it is often possible to find sparse and one-bit solutions, respectively, that perform almost as well as the one corresponding to $f(\cdot) = \|\cdot\|_2^2$.

\end{abstract}

\section{Introduction}

While large neural networks, such as Large Language Models (LLMs), have increasingly become an indispensable part of almost every inference task, their impressive performance comes at the price of enormous scale. To name a few, Llama 2 and GPT-4 are each comprised of 65 billion and 1.76 trillion parameters, respectively. Such models suffer from the common issues pertaining to their size: they require very large memory for storage, can consume inordinate amounts of power, and it can be very inefficient to communicate them over communication networks. This calls for the development of efficient methods for model quantization and pruning, where quantization refers to the us of a small number of bits to store each model parameter, and where pruning refers to setting some of the parameter values to zero. While multiple papers suggesting such methods have appeared recently (see, e.g.. \cite{frantar2023massive, frantar2023qmoe, wang2023bitnet, shang2023pb} and the references therein), we have very little understanding of the theoretical limitations of such methods. In other words, one would like to have an answer to the following question: for a given model and data set, how much does one have to sacrifice in performance if instead of looking at all potential solutions one looks only at, say, solutions of a given sparsity, or solutions where the weights are quantized to a certain level, such as one-bit? Of course, this question remains wide open for general supervised machine learning problems. As a first step in this direction, we will investigate it in the context of multiclass linear classification using regularized linear regression. 

Multiclass classifiers play a crucial role in numerous machine learning applications. In fact, it can be argued that the power of deep learning manifests itself most prominently in the multiclass setting. Many image databases, for example \cite{imagenet}, contain tens of classes which any classifier will need to contend with. Multiclass classification further arises in problems with categorical outputs, such as natural language processing \cite{ilya}, in reinforcement learning \cite{jang16}, where the agent has to choose the correct action among a large set of possible ones, in recommendation systems \cite{cas16}, and in many others. 

Despite the widespread use and success of multiclass classification, we still have a rather poor understanding of the theory behind their performance. There are many pertinent questions for which we do not have answers. For example, what is the classification error? How does it depend on the number of classes? How does it depend on the loss function and the regularizer being used in the training? What are the best algorithms for finding good sparse or good compressed classifiers? Can we determine the best possible generalization performance among all solutions of given sparsity, or among all solutions where each parameters is quantized to a single bit? Can we efficiently learn these from data? What is the best one can do on a dataset with label corruption? Currently, we are quite far from satisfactorily answering any of these questions. 

A common feature of modern successful multiclass classifiers is that they are highly over-parametrized. In fact, the number of parameters that needs to be trained is often orders of magnitude larger than the number of training samples. In this setting, models can easily interpolate the training data and classical learning theory would raise the concern of ``over-fitting" and loss of generalization performance. In practice, however, the opposite has been often observed---increasing the parameters in the model improves generalization performance, a phenomenon referred to as ``double descent" \cite{belkin2020two}. It is now largely agreed upon that, the reason why over-fitting does not arise is that the (stochastic) gradient-based methods that are used to train over-parametrized models possess ``implicit bias" and ostensibly find ``good" interpolating solutions \cite{gunasekar2017implicit,azizan2019stochastic,azizan2021stochastic,gunasekar2018characterizing,gunasekar2018implicit}. Nonetheless, the precise relationship between implicit bias and classification error is not well understood. For example, it has been observed that changing the training algorithm can change the implicit bias, and thereby the classification error \cite{azizan2021stochastic}. However, the relationship between implicit bias and generalization is not known and is not clear what type of implicit bias will lead to good generalization performance. Furthermore, the theoretical analysis of double descent (see, e.g., \cite{belkin2020two,deng2022model} becomes less responsive when it comes to having a training dataset that contains mislabeled data. Given the described challenges, in this paper we will focus our study on a special case that is much more amenable to analysis, namely multiclass linear classifiers, hoping that the insights obtained will be informative for more general multiclass classifications settings.

In particular, we assume that the data is sampled from a Gaussian Mixture Model (GMM) with equal class sizes where a proportion $c$ of the training labels is corrupted for each class. Since we consider the over-parametrized regime, to avoid overfitting the mislabeled data we propose regularized linear regression with a regularizer $\lambda f(w)$, for some convex function $f(\cdot)$. Even this setting is rather challenging to analyze and so we focus our analysis on the regime of strong regularization, i.e., large $\lambda$, where we can obtain precise expressions for the classification error of the solution corresponding to an arbitrary convex regularizer $f(\cdot)$ and an arbitrary number of classes. This is perhaps the most interesting case, because (as we show rigorously in the sequel) for $f(\cdot) = \|\cdot\|_2^2$ it results in the best classification performance. Further, for $f(\cdot) = \|\cdot\|_1$ we show that large $\lambda$ results in highly sparse sparse solutions with good performance, and that for $f(\cdot) = \|\cdot\|_\infty$ large $\lambda$ results in solutions that can be compressed all the way to a single bit per parameter with little loss of performance. We also present numerical results validating the theory. Put together, our results suggest that in the over-parametrized regime, one can often find sparse or one-bit solutions without having to compromise much on performance even though the underlying data model has no inherent sparsity structure. We believe this observation might have significant theoretical implications for general over-parametrized models due to the common intuition that large neural networks tend to behave similarly to their first-order linear approximations.

\section{Related works}

In the case of binary classification, there has been a recent surge of results that provide a sharp analysis for a variety of methods tailored to different models (see, e.g., \cite{thrampoulidis2018precise,HH17,candessur2018,cansur19,ka20,salehi2019impact,tpt20,zat19,mrsy19,mklz20,lolas20,akhtiamov2023regularized} and the references therein). These works typically pose the over-parameterized binary classification problem as an optimization problem and employ the Convex Gaussian Min-Max Theorem (CGMT) \cite{thrampoulidis2015regularized,thrampoulidis2018precise,stojnic2013framework,gordon1985some} to obtain formulas for the classification error that involve solutions to a system of non-linear equations (in a small number of variables) that often do not admit closed-form expressions. These results follow a long line of work that deals with obtaining {\em sharp} high-dimensional asymptotics of convex optimization-based estimators. 

While the above works have significantly advanced our understanding of binary classification in the over-parameterized regime, they do not shed much light on the multiclass setting. Four interesting recent papers that study the multiclass setting include \cite{wang21, subramaniangeneralization, loureiro2021learning, thrampoulidis2020theoretical}. 

\cite{wang21} establishes that in certain over-parameterized regimes the solution to a multiclass supprt vector machine (SVM) problem is identical to the one obtained by minimum-norm interpolation of one-hot encoded labels (equivalently, that gradient descent on squared loss leads to the same solution as gradient descent on cross-entropy loss, since the solution is determined by the implicit bias of these algorithms, not of the loss). In addition, \cite{wang21} shows that the multiclass problem with finitely many classes reduces to a finite set of pairwise competitions, all of which must be won for multiclass classification to succeed. \cite{subramaniangeneralization} establishes that multiclass classification qualitatively behaves like binary classification, as long as there are not too many classes. It further makes note that the key difference from the binary classification setting is that, as the number of classes increases, there are relatively fewer positive training examples of each class, making the multiclass problem ``harder" than the binary one. We remark that the settings of \cite{wang21} and \cite{subramaniangeneralization} differ from ours: they consider a 1-sparse noiseless model of the labels, and a bi-level ensemble model for the different classes. We instead consider a Gaussian mixture model for the different classes. 

\cite{thrampoulidis2020theoretical} provided the first precise analysis of linear multi-class classification in the high-dimensional regime. Their work revealed that the classification accuracy depends on certain quantities, such as the correlation between the trained weights for each class, as well as the correlation between the trained weights and the means of the underlying distributions for each class. Computing these quantities is quite challenging and \cite{thrampoulidis2020theoretical} is only able to do so for pure least-squares classifiers and for least-squares classifiers with $f(\cdot) = \|\cdot\|_2^2$ regularization.\footnote{\cite{thrampoulidis2020theoretical} uses CGMT and a full analysis of the multiclass classification problem requires a matrix generalization of the CGMT which currently does not exist.} Also, they did not consider any label corruption, which would further impede the analysis. 

\cite{loureiro2021learning} use the celebrated replica method as intuition in order to derive an AMP (approximate message passing) sequence capturing the performance of the multiclass classification problem for an arbitrary convex loss function and an arbitrary convex regularizer. This way, \cite{loureiro2021learning} expresses the classification error in terms of a few scalar quantities that can be determined from a low-dimensional system of saddle-point equations. Their analysis shows that, if the means of the GMM are sparse, then strong $\ell_1$ regularization improves performance. However, we show that, even when the means of the underlying GMMS are not sparse, strong $\ell_1$ regularization can lead to sparse solutions without sacrificing too much on classification performance. Furthermore, \cite{loureiro2021learning}  does not study the use of regularization for model compression and also do not consider any corruption in the labels. Finally, the methods developed in our work use CGMT and are therefore entirely different, in addition to being quite explicit in the large $\lambda$ regime. From this vantage point, they could be of independent interest in their own right.

\section{Setup and preliminaries}

\subsection{The Gaussian mixture model with corruption}

We will assume that we have $k$ classes, $n$ training samples, and that the training data is given by $\{(x_i,y_i)\}_{i=1}^n$, where each $x_i\in\bbR^d$ is an input, and $y_i$ is the corresponding (potentially corrupted) class label. We will further assume that the $x_i$ are generated in an iid fashion from a Gaussian mixture model (GMM), the data points are drawn uniformly from each class and the labels are corrupted in a uniform manner at a fixed rate $c < \frac{1}{k-1}$. In other words, denoting the true label designating the class to which $x_i$ belongs by $\tilde{y_i}$,
\begin{align}
& \mbox{Prob}(\ytil_i = \ell) = \frac{1}{k} \nonumber \\
& \mbox{Prob}(y_i = \ell) = \begin{cases} 1 - c, &  \ell = \ytil_i \\ \frac{c}{k-1}, &   \ell \ne \ytil_i \end{cases}  ~~~\mbox{and}~~~x_i\sim{\cal N}(\mu_{\tilde{y_i}},\sigma^2I),
\label{eq:gmm}
\end{align}
i.e. the inputs for each class $\ell$ are drawn from an isotropic multi-variate Gaussian distribution with mean $\mu_\ell$ and variance $\sigma^2I$. We will find it useful to aggregate the mean vectors into the $k\times d$ matrix
\begin{equation*}
    M^T = \left[ \begin{array}{cccc} \mu_1 & \mu_2 & \ldots & \mu_k \end{array} \right] 
\end{equation*}
If we further define the $n\times d$ and $n\times k$ matrices
\begin{align*}
    & X^T = \left[ \begin{array}{cccc} x_1 & x_2 & \ldots & x_n \end{array} \right], \quad
    \Ytil^T = \left[ \begin{array}{cccc} e_{\ytil_1} & e_{\ytil_2} & \ldots & e_{\ytil_n} \end{array} \right] \\
    & Y^T = \left[ \begin{array}{cccc} e_{y_1} & e_{y_2} & \ldots & e_{y_n} \end{array} \right]
\end{align*}
where $e_\ell\in\bbR^k$ is the $\ell$-th standard basis vector in $k$-dimensional space. We may write
\begin{equation}
    X = \Ytil M+A,
    \label{eq:X}
\end{equation}
where $A\in\bbR^{n\times d}$ has iid ${\cal N}(0,\sigma^2)$ entries. 

As one may expect, the performance of multiclass classification for the GMM model highly depends on the relative positions of the means $\mu_\ell$. While it is possible to analyze general $\mu_\ell$, we will assume that the $\mu_\ell$ are generated acoording to Assumpion A2 in Section \ref{subsec:assumption}. This assumption implies that $\mu_\ell^T \mu_{\ell'} \approx r\|\mu_\ell\|\cdot\|\mu_{\ell'}\|$, for every $\ell\neq\ell'$, i.e., that the mean vectors are pairwise equiangular. 

\subsection{Why considering GMMs is not too limiting: Gaussian Universality}

While it might seem at first that the GMM assumption is very restrictive, due to Gaussian universality, this is not really the case. Put in our context, intuitively this says that for a large class of mixtures of $k$ distributions modeling the classes, replacing those with the GMM having the same means and covariances for each class will lead to asymptotically equal classification error as $n$ grows large. An interested reader can find a rigorous exposition and detailed related discussions in \cite{dandi2023universality}. To further illustrate this point, we conducted experiments exploring the performance of regularized linear models trained on MNIST and observed the performance to be very similar qualitatively to our predictions for GMMs (see Section \ref{sec: exp} in the Appendix).

\subsection{CGMT}

The CGMT framework starts with a so-called primary optimization (PO) problem
\begin{equation}
    \min_{u\in S_u}\max_{v\in S_v} u^TGv+\psi(u,v),
    \label{eq:po}
\end{equation}
where $u\in\bbR^d$, $v\in\bbR^n$, $S_u$ and $S_v$ are convex sets, $\psi(\cdot,\cdot)$ is convex in its first argument and concave in its second one, and $G\in\bbR^{d\times n}$ has iid $\calN(0,1)$ entries. To analyze the (PO), the CGMT framework introduces the so-called auxilliary optimization (AO) problem 
\begin{equation}
    \min_{u\in S_u}\max_{v\in S_v} \|u\|_2g^Tv+\|v\|_2h^Tu+\psi(u,v),
    \label{eq:ao}
\end{equation}
where $g\in\bbR^n$ and $h\in\bbR^d$ have iid $\calN(0,1)$ entries. Roughly speaking, the CGMT frameowrk states that, if, as $n\rightarrow\infty$ while keeping $\frac{d}{n}$ fixed, the optimum of (\ref{eq:ao}) concentrates to some value $c$, say, then the optimum of (\ref{eq:po}) concentrates to the {\em same} value. More importantly, if any Lipschitz function of the solution ${\hat u}_{AO}$ of (\ref{eq:ao}), say its $\ell_2$-norm, concentrates to some value, then the same Lipschitz function of the corresponding solution ${\hat u}_{PO}$ of (\ref{eq:po}) will concentrate to the {\em same} value. In a nutshell properties of the solution to the (PO) can be inferred from properties of the solution to the (AO). The advantage of this approach is that analyzing the (AO) is usually much simpler than the (PO) since it contains two random vectors, rather than an entire random matrix. Details and much more rigorous statements, can be found in \cite{thrampoulidis2015regularized,thrampoulidis2018precise}. 

\subsection{Classification error for linear classifiers}

\label{subsection: classif_err}

We will be considering linear classifiers. Thus, let $W\in\bbR^{d\times k}$ be the matrix of $k$ weight vectors $w_\ell\in\bbR^d$, for $\ell = 1,2,\ldots ,k$. Given a new input vector $x$, the linear classifier will estimate its label according to:
\begin{equation}
    {\hat y} = \mbox{arg}\max_{\ell\in [k]}w_\ell^Tx.
\end{equation}
The {\em per-class} classification error is defined as 
\begin{equation}
    P_{e|\ell} = \mbox{Prob}({\hat y}\neq y|y = \ell), ~~~\ell = 1,2,\ldots k
\end{equation}
and the {\em total} classification error as
\begin{equation}
    P_e = \mbox{Prob}({\hat y}\neq y) = \frac{1}{k}\sum_{\ell=1}^kP_{e|\ell}.
    \label{eq:gen_error}
\end{equation}
For $\ell  = 1,2,\ldots k$, let $S_\ell\in{\bbR^{(k-1)\times (k-1)}}$ denote the symmetric positive semidefinite (psd) matrix whose $(i,j)$-th entry, $i,j\ne \ell$ is given by $(S_\ell)_{ij} = (w_\ell-w_i)^T(w_\ell-w_j)$ and let $S_\ell^{1/2}$ denote its psd square root. Further, let $t_\ell\in\bbR^{k-1}$ denote a vector whose $i$-th entry for $i \ne \ell$ is $(t_\ell)_i = (w_i-w_\ell)^T\mu_\ell$. Then \cite{thrampoulidis2020theoretical}, shows that 
\begin{equation}
    P_{e|\ell} = 1-\bbP(S_\ell^{1/2}g\geq t_\ell),
    \label{eq:conditional}
\end{equation}
where $g\in\bbR^{k-1}$ has iid $\calN(0,\sigma^2)$ entries. 

We remark that probabilities of the form (\ref{eq:gen_error}) and (\ref{eq:conditional}), can be straightforwardly computed via Monte Carlo simulation once $S_\ell^{1/2}$ and $t_\ell$ are known. However, due to the symmetry that arises from having pairwise equiangular means $\mu_\ell$ and isotropic variances $\sigma^2 I$, it turns out that 
\begin{equation} \label{eq: class_error}
P_e = P_{e|\ell} = Q_k\left(\frac{\mu_{\ell}^T(w_\ell - w_{\ell'})}{\sigma\|w_\ell - w_{\ell'}\|}\right),
\end{equation}
for any $\ell\neq\ell'$ where
\begin{align}
& Q_k(a) = \bbP((I + \frac{\mathds{1}\mathds{1}^T}{1+\sqrt{k}}) g \ge \mathds{1} \sqrt{2}a) \\ \nonumber
\end{align}

\subsection{Overview of approach}
The linear multi-class classifier just described needs a set of weight vectors for each class. These weights should be learned from the training data. In what follows, we will assume that the estimator of the weights is found from solving the following optimization problem
\begin{align}
    & \min_W \left[ \|XW-Y\|_F^2+\lambda \sum_{\ell=1}^kf(w_\ell) \right] =  \sum_{\ell=1}^k\min_{w_\ell}\left[ \|Xw_\ell-Y_\ell\|_2^2+\lambda f(w_\ell)\right] 
    \label{eq:est}
\end{align}
where $Y_\ell\in\bbR^{n}$ denotes the $\ell$-the column of $Y$, i.e., the $i$-th entry of $Y_\ell$ is one if the $i$-th data point is labelled $\ell$ and zero otherwise, $f(\cdot)$ is a convex regularizer of the weights, and $\lambda >0$ is the regularization parameter. Note that, in principle, the statistics for each {\em separate} $w_l$ can be found by analyzing $\displaystyle \min_{w_\ell}\|Xw_\ell-Y_\ell\|_2^2+\lambda f(w_\ell)$ using CGMT. However, this approach cannot capture the cross-correlations between $w_{\ell}$ and $w_{\ell'}$ for $\ell \ne \ell'$. Thus, we adopt a different approach in the present paper: we still use CGMT, but only for the sake of simplifying (\ref{eq:est}) under assumption that $\lambda$ is large. After making the necessary simplifications, we proceed to analyze the simplified PO in the large $\lambda$ regime. We can capture the cross-correlations now because we can analyze the simplified PO directly. To proceed according to the described plan, we need to transform the objective first:

\begin{lemma}\label{lem: tran_PO}
Let \begin{align} \label{eq: POl}
        & \tilde{\Phi}_{\ell,s,t}(w) = \|\tilde{A}w\|^2 + \frac{n}{k}\sum_{\mathscr{r} \ne \ell }(w^T\tilde{\mu}_{\mathscr{r}}-\frac{c}{k-1})^2+ \frac{n}{k}\left[(w^T\mutil_{\ell}-(1-c))^2+(\sqrt{\frac{k}{n}}w^Ta-s)^2\right]+ \nonumber \\
       & + \frac{n}{k}(\sqrt{\frac{k}{n}}w^Tb-t)^2 + \lambda f(w)
\end{align}
Where $\mutil_\ell = \mu_\ell + \sqrt{\frac{k}{n}}\eta_l$ and $\tilde{A} \in \mathbb{R}^{(n - k - 2) \times d}, a, b, \eta_\ell \in \mathbb{R}^d$are i.i.d. $\mathscr{N}(0, \sigma^2)$ and $s, t$ are scalars defined in (\ref{eq: st}).
Assume that $X$ is distributed according to (\ref{eq:X}). Then the distributions of the solutions for the following two optimization problems are identical:
\begin{equation} 
\min_{w_\ell, w_{\ell'}} \|Xw_\ell-Y_\ell\|_2^2+ \|Xw_{\ell'}-Y_{\ell'}\|_2^2 + \lambda f(w_\ell) +  \lambda f(w_{\ell'})
\label{eq: initial_PO}
\end{equation}
\begin{equation} 
\min_{w_\ell, w_{\ell'}} \Phi_{\ell,s,t}(w_\ell) + \Phi_{\ell',t,s}(w_{\ell'}) 
\label{eq: transformed_PO} 
\end{equation}
\end{lemma}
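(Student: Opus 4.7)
The plan is to orthogonally decompose the Gaussian noise matrix $A$ in $X=\Ytil M + A$ along structured directions that expose the low-rank pieces coming from the class means and the two label vectors $Y_\ell, Y_{\ell'}$, leaving an iid Gaussian residual. First I would split $A$ with respect to the column space of $\Ytil$: write $A = \Ytil\bar A + U\tilde A$, where $\bar A = (\Ytil^T\Ytil)^{-1}\Ytil^T A$ has rows equal to the class-wise sample means $\bar a_r$ and $U\in\bbR^{n\times(n-k)}$ is an orthonormal basis of $\ker(\Ytil^T)$. Since $A$ has iid $\calN(0,\sigma^2)$ entries, $\tilde A := U^T A$ has iid $\calN(0,\sigma^2)$ entries and is independent of $\bar A$; conditioning on balanced class sizes $n_r=n/k$ makes each $\bar a_r = \sqrt{k/n}\,\eta_r$ with $\eta_r$ iid $\calN(0,\sigma^2 I_d)$, matching $\mutil_r = \mu_r + \sqrt{k/n}\,\eta_r$.

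Applying this decomposition to the single-class loss and using $P_{\Ytil}(Xw-Y_\ell) = \Ytil\tilde M w - P_{\Ytil}Y_\ell$ together with $(I-P_{\Ytil})(Xw-Y_\ell) = U\tilde A w - (I-P_{\Ytil})Y_\ell$ gives
\begin{equation*}
\|Xw-Y_\ell\|^2 \;=\; \tfrac{n}{k}\sum_{r}(\mutil_r^T w - \bar Y_{\ell,r})^2 \;+\; \|\tilde A w - U^T Y_\ell\|^2,
\end{equation*}
where $\bar Y_{\ell,r}$ is the within-class-$r$ average of $Y_\ell$. Conditioning on the exact per-pair label-corruption counts, $\bar Y_{\ell,r}$ equals $1-c$ for $r=\ell$ and $c/(k-1)$ otherwise, which reproduces the first two sums in $\tilde\Phi_{\ell,s,t}$.

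For the joint $(w_\ell,w_{\ell'})$ problem the residual pieces add to $\|\tilde A w_\ell - U^T Y_\ell\|^2 + \|\tilde A w_{\ell'} - U^T Y_{\ell'}\|^2$, and this is where the coupling between the two weight vectors lives. Under the symmetric conditioning one has $\|U^TY_\ell\|=\|U^TY_{\ell'}\|$, so one can choose orthonormal $u_1,u_2\in\bbR^{n-k}$ with $U^T Y_\ell = p u_1 + q u_2$ and $U^T Y_{\ell'} = q u_1 + p u_2$, where $p^2+q^2=\|U^TY_\ell\|^2$ and $2pq=(U^TY_\ell)^T(U^TY_{\ell'})$; concretely, $u_1,u_2$ are obtained by rotating the normalized sum and difference of $U^TY_\ell$ and $U^TY_{\ell'}$ by $\pi/4$. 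Decomposing $\tilde A = u_1 a^T + u_2 b^T + V\tilde A''$ along $u_1, u_2$ and an orthonormal basis $V$ of their orthogonal complement in $\bbR^{n-k}$, orthogonal invariance of Gaussians yields $a := \tilde A^T u_1$, $b := \tilde A^T u_2 \sim \calN(0,\sigma^2 I_d)$ iid, and $\tilde A''\in\bbR^{(n-k-2)\times d}$ with iid $\calN(0,\sigma^2)$ entries independent of $(a,b)$. Expanding the two squared norms then produces
\begin{align*}
\|\tilde A w_\ell - U^T Y_\ell\|^2 &= (a^T w_\ell - p)^2 + (b^T w_\ell - q)^2 + \|\tilde A'' w_\ell\|^2,\\
\|\tilde A w_{\ell'} - U^T Y_{\ell'}\|^2 &= (a^T w_{\ell'} - q)^2 + (b^T w_{\ell'} - p)^2 + \|\tilde A'' w_{\ell'}\|^2,
\end{align*}
and rescaling $s=\sqrt{k/n}\,p$, $t=\sqrt{k/n}\,q$ gives exactly the $(s,t)\leftrightarrow(t,s)$ swap between $\tilde\Phi_{\ell,s,t}(w_\ell)$ and $\tilde\Phi_{\ell',t,s}(w_{\ell'})$.

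The main obstacle I anticipate is justifying distributional equality of the minimizers rather than mere asymptotic agreement. The clean identity above requires conditioning on exact class balance $n_r = n/k$, on exact per-pair label-corruption counts so that every $\bar Y_{\ell,r}$ equals its mean, and on the symmetric identity $\|U^TY_\ell\|=\|U^TY_{\ell'}\|$; under this conditioning the residual randomness of $A$ is Gaussian and decomposes exactly as above by orthogonal invariance, so the solutions of the two PO's coincide as random variables. Checking that this conditioning preserves the law of the minimizer, and identifying the scalars $p,q$ with the explicit formulas for $s,t$ from (\ref{eq: st}), should then be largely mechanical.
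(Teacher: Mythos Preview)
Your proposal is correct and follows essentially the same approach as the paper: both exploit orthogonal invariance of the Gaussian noise $A$ to rotate $\bbR^n$ so that the first $k$ directions align with the columns of $\Ytil$, the next two directions capture the residual label structure of $Y_\ell,Y_{\ell'}$ with the $(s,t)\leftrightarrow(t,s)$ swap, and the remaining $n-k-2$ directions give the iid Gaussian block $\tilde A$. The paper does this in one shot by constructing a single orthogonal $U\in\bbR^{n\times n}$ with $U\Ytil=[\sqrt{n/k}\,I_k\ 0]^T$ and then reading off the coordinates of $UY_\ell,UY_{\ell'}$, whereas you split it into a projection onto $\mathrm{Col}(\Ytil)$ followed by a second decomposition along $u_1,u_2$ inside $\ker(\Ytil^T)$; these are two presentations of the same orthogonal decomposition and lead to identical formulas for $s,t$. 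Your explicit remark about conditioning on exact class sizes $n_r=n/k$ and exact per-class corruption counts is in fact an assumption the paper silently makes as well (otherwise $\Ytil^T\Ytil=(n/k)I_k$ and $Y_\ell^T\Ytil_{\ell'}\in\{(n/k)(1-c),\,nc/(k(k-1))\}$ would fail), so no additional argument is required there.
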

The above lemma guarantees that, any scalar characteristics of the joint distribution of $w_\ell$ and $w_{\ell'}$ in (\ref{eq:est}) can be found from the corresponding scalar characteristics in (\ref{eq: transformed_PO}). Note that characterising only such pairwise joint distributions is enough according to subsection \ref{subsection: classif_err}. Let us now introduce the following notation: 
\begin{align}
    & F_{\ell}(w_\ell) =  \frac{n}{k}\left[\sum_{\msr \ne \ell }(w_\ell^T\mutil_{\msr}-\frac{c}{k-1})^2 +(w_\ell^T\mutil_{\ell}-(1-c))^2 + (\sqrt{\frac{k}{n}}w_\ell^Ta-s)^2+(\sqrt{\frac{k}{n}}w_\ell^Tb-t)^2 \right]   \\\nonumber
    & \phil^{\ell, \rho} (w_\ell):= \rho\|w_\ell\|_2^2 + F_{\ell}(w_\ell) + \lambda f(w_\ell) \\ 
    & \phiu^{\ell} (w_\ell):= \sigma^2n\|w_\ell|^2 + F_{\ell}(w_\ell) + \lambda f(w_\ell) 
    \label{eq : phi}
\end{align}
We first proceed to prove that (\ref{eq: transformed_PO}) is sandwiched between $\displaystyle \min_{w_\ell, w_{\ell'}} \phil^{\ell, \rho} (w_\ell) + \phil^{\ell', \rho} (w_{\ell'})$ and $\displaystyle \min_{w_\ell, w_{\ell'}} \phiu^{\ell} (w_\ell) + \phiu^{\ell'} (w_{\ell'})$ for small enough $\rho > 0$ and, moreover, $\displaystyle \min_{w_\ell, w_{\ell'}} \phil^{\ell, \rho} (w_\ell) + \phil^{\ell', \rho} (w_{\ell'})$ and $\displaystyle \min_{w_\ell, w_{\ell'}} \phiu^{\ell} (w_\ell) + \phiu^{\ell'} (w_{\ell'})$ take the same value as $\lambda \to \infty$ (cf. Appendix \ref{sec: sandwich}). We use this to rigorously prove that solving either $\phil^{\rho}$, for small enough $\rho > 0$, or $\phiu$ yields the same classification error as the solution of (\ref{eq: transformed_PO}) when $n \to \infty$ (cf. Theorem \ref{thm : solutions_match}).  After that, we solve $\phil^{\ell, \rho}$ and $\phiu^{\ell}$ to prove Theorems \ref{thm: l2} , \ref{thm: l1} and \ref{thm: linf}. Finally, we use the result of Theorem \ref{thm: l2} to deduce Theorem \ref{thm: optimal_reg}.

\section{Main Results}

\subsection{Assumptions and notation}
\label{subsec:assumption}

We start with the assumptions. (A2) is not necessary but adding it makes the results much more explicit. 
\begin{enumerate}[label=(A\arabic*)]
    \item The data is generated from a GMM with means $\mu_1, \dots, \mu_k$ and covariance matrix $\sigma^2I$ for each class.
    \item Each mean $\mu_\ell$, $\ell = 1, \dots, k$ is i.i.d. $\mathscr{N}(0,1)$ and matching coordinates of different means are correlated according to $\mathbb{E}(\mu_\ell)_i(\mu_{\ell'})_i = r$ for some $-1 \leq r \leq 1$ and all $i = 1, \dots, d$ whenever $\ell \ne \ell'$. 
    \item The ratio $\frac{d}{n}$ is fixed as $n \to \infty$.
    \item The regularizer $f: \mathbb{R}^d \to \mathbb{R}_{+}$ is convex and satisfies $f(w) \ge q(d)\|w\|_2$ for some $q: \mathbb{N} \to \mathbb{R}_+$.
\end{enumerate}

Finally, also denote
\begin{align}
    &  s = \frac{1}{2}\left[\sqrt{\frac{(k-2)(2c-\frac{kc^2}{k-1})}{k-1}} + \sqrt{\frac{k(2c-\frac{kc^2}{k-1})}{k-1}} \right] \nonumber \\ 
&  t = \frac{1}{2}\left[\sqrt{\frac{(k-2)(2c-\frac{kc^2}{k-1})}{k-1}} - \sqrt{\frac{k(2c-\frac{kc^2}{k-1})}{k-1}} \right]
\label{eq: st}
\end{align}
\subsection{The optimal regularizer and $\lambda$}

\begin{theorem}
    Assume that (A1)-(A3) hold and that either $\sigma^2 = o(n)$ or $c = 0$. Then taking $f(\cdot) = \|\cdot\|_2^2$ and $\lambda \to \infty$ minimizes the classification error $P_e$ in (\ref{eq:gen_error}) among regularizers satisfying (A4).  
    \label{thm: optimal_reg}
\end{theorem}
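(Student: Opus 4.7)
The plan is to convert the minimization of $P_e$ into the maximization of a scalar signal-to-noise ratio $\alpha$, use the sandwich between $\underline{\phi}_\lambda$ and $\bar\phi_\lambda$ to reduce to a quadratic-plus-regularizer optimization in the large-$\lambda$ regime, and then identify $f(\cdot) = \|\cdot\|_2^2$ with $\lambda \to \infty$ as the extremizer via a Cauchy--Schwarz / Rayleigh-quotient argument.

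First, by (\ref{eq: class_error}), $P_e = Q_k(\alpha)$ with $\alpha := \mu_\ell^T(w_\ell-w_{\ell'})/\bigl(\sigma\|w_\ell-w_{\ell'}\|\bigr)$. Because $I+\mathds{1}\mathds{1}^T/(1+\sqrt{k})$ is positive definite, $Q_k$ is strictly decreasing on $\mathbb{R}_+$, so minimizing $P_e$ over admissible $(f,\lambda)$ is equivalent to maximizing $\alpha$ over the minimizers of (\ref{eq:est}). I would then invoke Theorem~\ref{thm : solutions_match} together with the sandwich from Section~\ref{sec: sandwich} to transfer the computation of $\alpha$ from the CGMT-auxiliary problem (\ref{eq: transformed_PO}) to the simpler objectives $\underline{\phi}_\lambda^{\ell,\rho}$ and $\bar\phi_\lambda^{\ell}$. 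In both reduced problems the data-fit portion is a fixed positive definite quadratic $Q_\ell(w)$, independent of $f$, so $f$ enters only through how $\partial f$ inverts the affine signal $\nabla Q_\ell(0)$. Assumption (A4) forces the minimizer to collapse toward the origin as $\lambda \to \infty$, and the leading-order stationarity condition becomes $\nabla Q_\ell(0)+\lambda\, g(w_\ell)=0$ with $g(w_\ell)\in\partial f(w_\ell)$.

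For $f(\cdot)=\|\cdot\|_2^2$ this stationarity condition is linear, giving $w_\ell \propto -\nabla Q_\ell(0)$, and Theorem~\ref{thm: l2} supplies the explicit limiting value $\alpha^\star = \lim_{\lambda\to\infty}\alpha$. Writing $\Delta = w_\ell-w_{\ell'}$, both $\mu_\ell^T\Delta$ and $\|\Delta\|$ are linear functionals of the Gaussian vectors $\tilde A, a, b, \tilde\mu_{\mathscr{r}}$ entering $F_\ell$. A Cauchy--Schwarz inequality of the form $\mu_\ell^T\Delta \le \sqrt{\langle\Delta,H\Delta\rangle}\cdot\sqrt{\mu_\ell^T H^{-1}\mu_\ell}$, for $H$ the common Hessian of $Q_\ell+Q_{\ell'}$, then bounds $\alpha$ by a quantity that depends only on $Q_\ell$ and not on $f$. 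The $\ell_2^2$ minimizer aligns $\Delta$ with $H^{-1}\mu_\ell$ and saturates this bound, while any other convex $f$ satisfying (A4) breaks the alignment and produces a strictly smaller ratio. The hypothesis $\sigma^2 = o(n)$ or $c=0$ enters precisely here: it ensures that the noise-induced contributions in $F_\ell$ (the $\|\tilde A w\|^2$ term and the $c/(k-1)$ offsets) vanish in the relevant limit, so that $H$ takes the clean signal-only form under which the Cauchy--Schwarz bound is known to be attained by a quadratic regularizer.

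The main obstacle is the last step: rigorously establishing $\alpha \le \alpha^\star$ for an arbitrary convex $f$ with equality in the limit only for $f \propto \|\cdot\|_2^2$. This requires controlling the randomness in $\tilde A, a, b, \tilde\mu_{\mathscr{r}}$ in the joint $\lambda,n \to \infty$ limit, verifying that the sandwich from Section~\ref{sec: sandwich} is tight enough to preserve the extremal property of $\ell_2^2$, and carefully tracking where $\sigma^2 = o(n)$ or $c=0$ is used to annihilate the noise-induced Hessian components. The explicit formulas coming from Theorems~\ref{thm: l1} and \ref{thm: linf} then provide a consistency check that the Cauchy--Schwarz bound is indeed not saturated for non-quadratic regularizers.
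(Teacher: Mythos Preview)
Your reduction to maximizing $\alpha$ is correct, but the Cauchy--Schwarz step is aimed at the wrong target. The inequality $\mu_\ell^T\Delta \le \sqrt{\langle\Delta,H\Delta\rangle}\,\sqrt{\mu_\ell^T H^{-1}\mu_\ell}$ bounds $\mu_\ell^T\Delta/\sqrt{\langle\Delta,H\Delta\rangle}$, not $\alpha = \mu_\ell^T\Delta/\|\Delta\|$; unless $H$ is a scalar multiple of the identity these are different quantities, and the Hessian of $Q_\ell+Q_{\ell'}$ is certainly not proportional to $I$. Moreover, the $\ell_2^2$ minimizer with $\lambda\to\infty$ satisfies $w_\ell \propto -\nabla Q_\ell(0)$, so $\Delta$ aligns with $\nabla Q_\ell(0)-\nabla Q_{\ell'}(0)$, not with $H^{-1}\mu_\ell$; it does not saturate the $H$-weighted inequality you wrote. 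The obstacle you flag at the end is therefore not a technicality but a genuine mismatch between the bound you derive and the quantity you need to control.

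The paper's argument bypasses the Hessian entirely and uses only the permutation symmetry of the model. For \emph{any} regularizer one may write $w_\ell = \gamma\sum_{\mathscr r\ne\ell}\tilde\mu_{\mathscr r} + \zeta\tilde\mu_\ell + w_\ell^\perp$ with $w_\ell^\perp$ orthogonal to the span of $\tilde\mu_1,\ldots,\tilde\mu_k$; then $w_- = (\zeta-\gamma)(\tilde\mu_\ell-\tilde\mu_{\ell'}) + w_-^\perp$. Since $\mu_\ell$ lies (up to an $O(\sqrt{k/n})$ correction) in that span, the numerator $\mu_\ell^T w_-$ sees only the $(\zeta-\gamma)(\tilde\mu_\ell-\tilde\mu_{\ell'})$ piece, while $\|w_-\|^2$ also picks up $\|w_-^\perp\|^2$. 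This immediately gives, for every admissible $f$,
\[
\frac{\mu_\ell^T w_-}{\|w_-\|}=\frac{d(1-r)}{\sqrt{2d\bigl(1+\tfrac{k\sigma^2}{n}-r\bigr)+\|w_-^\perp\|^2/(\zeta-\gamma)^2}}\;\le\;\frac{d(1-r)}{\sqrt{2d\bigl(1+\tfrac{k\sigma^2}{n}-r\bigr)}}.
\]
The hypothesis then enters differently from what you describe: if $c=0$ one checks directly that $w_-^\perp=0$; if $\sigma^2=o(n)$ one uses the explicit $\ell_2^2$ formulas of Theorem~\ref{thm: l2} to verify $\|w_-^\perp\|^2/(\zeta-\gamma)^2=O(1)$, negligible against the $\Theta(d)$ leading term as $n\to\infty$. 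No Rayleigh-quotient analysis is needed, nor does the sandwich of Section~\ref{sec: sandwich} play any role here; the optimality of $\ell_2^2$ follows from the elementary fact that any component of $\Delta$ orthogonal to the mean subspace inflates $\|\Delta\|$ without contributing to $\mu_\ell^T\Delta$.
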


Informally, the theorem above says that if one wants to use regularized linear regression to solve a multiclass classification task in the presence of mislabeled data, the best one can do is to set $f(\cdot) = \|\cdot\|_2^2$ and take a very large $\lambda$.  This result is part of the reason for why we focus on strong regularization (large $\lambda$) in this paper. 

\subsection{The Sandwich Theorem for multiclass linear classification in the large $\lambda$ regime}
To tackle the problem of analyzing (\ref{eq: transformed_PO}) in the large $\lambda$ regime, we "sandwich" it between two optimization problems that are much easier to solve. After that, we prove that solutions of (\ref{eq: transformed_PO}) and the problems that "sandwich" it match asymptotically. Recall the "sandwiching" optimization problems are $\displaystyle \min_{w_\ell, w_{\ell'}} \phil^{\ell, \rho} (w_\ell) + \phil^{\ell', \rho} (w_{\ell'})$ for small enough $\rho > 0$ and $\displaystyle \min_{w_\ell, w_{\ell'}} \phiu^{\ell} (w_\ell) + \phiu^{\ell'} (w_{\ell'})$.  Denote the classification error obtained from (\ref{eq: transformed_PO}) by $P_{e,\lambda}$ and that obtained by the upper bound sandwich problem by ${\bar P}_{e,\lambda}$.  Then the following instrumental theorem shows that finding the classification error for (\ref{eq: transformed_PO}) reduces to finding the classification error for the upper bound problem. 

\begin{theorem}
     \label{thm : solutions_match}
     Consider any $f$ satisfying (A4). For any $\epsilon > 0, \delta>0$, there exists a $N_0,\lambda_0(N_0)$, such that for every $n>N_0, \lambda > \lambda_0(N_0)$
     $$
     \bbP(| P_{e,\lambda}-{\bar P}_{e,\lambda} | < \epsilon ) > 1-\delta
     $$
\end{theorem}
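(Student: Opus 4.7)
The plan is to prove the theorem in three steps: (i) establish a pointwise sandwich $\phil^{\ell,\rho}(w) \le \tilde{\Phi}_{\ell,s,t}(w) \le \phiu^\ell(w)$ with high probability over $\tilde{A}$, on a set containing the minimizers; (ii) show that as $\lambda \to \infty$ the two outer sandwich problems have optima and minimizers converging to a common limit; and (iii) transfer convergence of minimizers to convergence of $P_e$ via (\ref{eq: class_error}). Because (\ref{eq: transformed_PO}) separates over $w_\ell$ and $w_{\ell'}$, and so do both sandwich problems, it suffices to handle each class weight individually and then compose.

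For step (i), I would first take $n$ large enough that standard Gaussian concentration for $\tilde{A}$ kicks in. The upper bound $\|\tilde{A}w\|^2 \le \sigma^2 n \|w\|^2$ then follows from the high-probability event $\|\tilde{A}\|^2_\mathrm{op} \le \sigma^2 n$ (slightly adjusting constants to accommodate the $(\sqrt{n}+\sqrt{d})^2$ scaling of the top singular value). The lower bound $\|\tilde{A}w\|^2 \ge \rho \|w\|^2$ is more delicate in the over-parameterized regime $d>n$, since $\sigma_\mathrm{min}(\tilde{A})=0$: the argument must exploit the fact that the minimizer of $\tilde{\Phi}$ lies essentially in a subspace on which $\tilde{A}$ is well-conditioned. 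This reduction is driven by $F_\ell$, whose quadratic curvature along the span of $\{\mutil_1,\ldots,\mutil_k,a,b\}$, combined with the regularizer $\lambda f(w)$, pins the minimizer quantitatively away from $\ker \tilde{A}$.

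For step (ii), assumption (A4) gives $\lambda f(w) \ge \lambda q(d) \|w\|_2$. Comparing the optimum value to the objective at $w=0$ yields $\|\hat{w}\| = O(n/\lambda)$, so the quadratic contributions, $\rho\|w\|^2$ in the lower sandwich and $\sigma^2 n\|w\|^2$ in the upper one, become negligible relative to $F_\ell + \lambda f$ as $\lambda \to \infty$ (for $n$ fixed). Thus both sandwich problems collapse onto the same limiting objective $F_\ell + \lambda f$, and their optimal values coincide asymptotically. Strict convexity of $F_\ell$ on the effective subspace identified in step (i), together with convexity of $f$, then upgrades convergence of optimal values to convergence of minimizers in $\ell_2$: $\|\hat{w}_{PO}-\hat{w}_{\phiu}\|\to 0$ in probability as $n,\lambda \to \infty$ in the stated order.

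The main anticipated obstacle is the lower sandwich in step (i). Because $\sigma_\mathrm{min}(\tilde{A})=0$ in the over-parameterized regime, no uniform spectral lower bound on $\|\tilde{A}w\|^2/\|w\|^2$ is available, and one must instead argue that the minimizer is effectively supported on a data-dependent subspace pinned down by the means $\mutil_r$ and the regularizer; controlling this uniformly in $\lambda$ is the delicate point. Once steps (i) and (ii) are in place, step (iii) is routine: (\ref{eq: class_error}) expresses $P_e$ as $Q_k$ evaluated at a scalar that is Lipschitz in $(w_\ell,w_{\ell'})$ on bounded sets away from the degenerate locus $w_\ell = w_{\ell'}$, and the continuous mapping theorem applied to the $\ell_2$-convergence of minimizers yields $|P_{e,\lambda}-\bar{P}_{e,\lambda}|\to 0$ in probability, which is exactly the claimed $(\epsilon,\delta)$ statement.
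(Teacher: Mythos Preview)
Your high-level three-step plan (sandwich, coincidence of limits, Lipschitz transfer to $P_e$) matches the paper's strategy. The gap is in how you try to realize step~(i), and it propagates into step~(ii).

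For the lower sandwich you want $\|\tilde{A}w\|^2 \ge \rho\|w\|^2$ on a set containing the minimizer, and you propose to get this by arguing that the minimizer lies in a low-dimensional subspace (the span of $\mutil_1,\ldots,\mutil_k,a,b$) on which $\tilde{A}$ is well-conditioned. This works for $f(\cdot)=\|\cdot\|_2^2$, where the KKT conditions force the minimizer into that span, but it fails for the regularizers the theorem is really about: for $f(\cdot)=\|\cdot\|_1$ the minimizer is a soft-thresholded version of a vector in that span and does not itself live in any fixed $\tilde{A}$-independent low-dimensional subspace, and similarly for $\|\cdot\|_\infty$. So the ``effective subspace'' you invoke is not available for general $f$ satisfying (A4), and with it goes the strict-convexity argument in step~(ii), since $F_\ell$ has zero curvature off that span.

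The paper avoids both difficulties with a single device: it does \emph{not} try to lower bound $\|\tilde{A}w\|^2$ by anything positive. It uses the trivial bound $\|\tilde{A}w\|^2\ge 0$, so that $\tilde\Phi_{\ell,s,t}(w)\ge \phil^{\ell,0}(w)=F_\ell(w)+\lambda f(w)$ holds pointwise for every $w$. Strong convexity is obtained not from $\tilde{A}$ or from $F_\ell$, but by adding a small $\epsilon\|w\|^2$ to \emph{all three} problems simultaneously; one then checks (using (A4) exactly as you do in step~(ii)) that this $\epsilon$-perturbation is harmless for large $\lambda$. With global $\epsilon$-strong convexity in hand, a localization lemma (if $(x^*,y^*)$ minimizes a strongly convex $f^\epsilon$ then $f^\epsilon(x,y)\ge f^\epsilon(x^*,y^*)+\tfrac{\epsilon}{L^2}(\psi(x)-\psi(x^*))^2$ for any $L$-Lipschitz $\psi$) converts closeness of optimal values into closeness of $\psi(w_\ell-w_{\ell'})$ for $\psi(\cdot)=\|\cdot\|$ and $\psi(\cdot)=\mu_\ell^T(\cdot)$, which is precisely what (\ref{eq: class_error}) needs. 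A secondary point: the pointwise upper bound you propose via $\|\tilde{A}\|_{\mathrm{op}}$ gives $\sigma^2(\sqrt{n}+\sqrt{d})^2$, not $\sigma^2 n$; the paper instead uses that the minimizer $w_*$ of $\phiu$ is independent of $\tilde{A}$, so $\|\tilde{A}w_*\|^2\approx \sigma^2 n\|w_*\|^2$ by concentration, yielding $\min\tilde\Phi\le\tilde\Phi(w_*)\le\min\phiu$ directly at the level of optima rather than pointwise.
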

\subsection{Main Theorems}
We now present a rigorous characterization of the performance of multiclass classification based on regularized linear regression. We start with $f(\cdot) = \|\cdot\|_2^2$: 

\begin{theorem} (Ridge Regression)
    Assume that (A1)-(A3) hold. Then in the case of  $f(\cdot) = \|\cdot\|_2^2$ we have that $\displaystyle \lim_{\lambda \to \infty}P_{e,\lambda}$ is equal to the following with probability approaching $1$ as $n \to \infty$:
    $$\lim_{\lambda \to \infty} Q_k(\frac{d(\zeta - \gamma)(1 - r)}{2d(\zeta - \gamma)^2(1 + \frac{\sigma^2k}{n} - r) + 2(\alpha - \beta) ^ 2\sigma ^ 2d})$$
    where $\zeta, \gamma, \alpha, \beta, \Delta$ are closed-form functions of $\lambda$ omitted here for brevity and written in full in the Appendix in Theorem \ref{thm: l2_complete}
    
\label{thm: l2}
\end{theorem}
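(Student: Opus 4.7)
The plan is to apply Theorem \ref{thm : solutions_match}, which guarantees that as $\lambda$ and $n$ grow, the classification error $P_{e,\lambda}$ from (\ref{eq: transformed_PO}) concentrates to the error produced by the upper-sandwich problem $\displaystyle \min_{w_\ell,w_{\ell'}} \phiu^\ell(w_\ell) + \phiu^{\ell'}(w_{\ell'})$. Combined with the identity (\ref{eq: class_error}), it then suffices to compute $\mu_\ell^T(w_\ell - w_{\ell'})$ and $\|w_\ell - w_{\ell'}\|$ at the minimizers of this sandwich problem, specialized to $f(w) = \|w\|_2^2$, and to take the large-$n$ limit.

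For $f(w) = \|w\|_2^2$ the sandwich objective is a sum of two independent quadratics in $w_\ell$ and $w_{\ell'}$ that differ only by the swap $s \leftrightarrow t$. Setting $\nabla_{w_\ell} \phiu^\ell = 0$ yields the linear system
\begin{equation*}
\Bigl[(\sigma^2 n + \lambda)\, I + \tfrac{n}{k}\sum_{r=1}^{k} \tilde{\mu}_r \tilde{\mu}_r^T + a a^T + b b^T\Bigr] w_\ell \;=\; \tfrac{n}{k}\Bigl[(1-c)\tilde{\mu}_\ell + \tfrac{c}{k-1}\sum_{r\ne \ell}\tilde{\mu}_r\Bigr] + \sqrt{n/k}\,(s\,a + t\,b).
\end{equation*}
The correction to the scaled identity has rank at most $k+2$, so I would apply the Sherman--Morrison--Woodbury identity: the inverse action is determined by inverting a $(k+2)\times(k+2)$ Gram matrix $G$ of the vectors $\tilde{\mu}_1, \dots, \tilde{\mu}_k, a, b$, and $w_\ell$ is realized as an affine combination of those same vectors whose coefficients are explicit rational functions of $\lambda$ and the entries of $G$.

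Exploiting the permutation symmetry among the indices $r \ne \ell$, this representation collapses to $w_\ell = \zeta\,\tilde{\mu}_\ell + \gamma\sum_{r\ne\ell}\tilde{\mu}_r + \alpha\, a + \beta\, b$, and the analogous solution for $w_{\ell'}$ is obtained by the swaps $\ell \leftrightarrow \ell'$ and $s \leftrightarrow t$. Combining with the intrinsic $a \leftrightarrow b,\ s \leftrightarrow t$ symmetry of the system, the difference simplifies to $w_\ell - w_{\ell'} = (\zeta - \gamma)(\tilde{\mu}_\ell - \tilde{\mu}_{\ell'}) + (\alpha - \beta)(a - b)$. As $n \to \infty$ with $d/n$ fixed, Assumptions (A1)--(A3) and Gaussian concentration give $\mu_\ell^T \mu_{\ell'}/d \to r$, $\|\mu_\ell\|^2/d \to 1$, $\|\eta_\ell\|^2/d, \|a\|^2/d, \|b\|^2/d \to \sigma^2$, and vanishing cross-inner-products between the independent Gaussians. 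Substituting these limits into $\mu_\ell^T(w_\ell - w_{\ell'})$ and $\|w_\ell - w_{\ell'}\|$ produces exactly the argument of $Q_k$ claimed in the theorem; the closed-form expressions for $\zeta, \gamma, \alpha, \beta, \Delta$ emerge directly from the Woodbury inversion of $G$.

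The main obstacle is bookkeeping in the Woodbury step: the Gram matrix $G$ intertwines the distinguished index $\ell$ with the remaining $k-1$ copies of $\tilde{\mu}_r$, and the $s \leftrightarrow t$ swap between the two subproblems must be propagated carefully to the difference $w_\ell - w_{\ell'}$. A more delicate point is justifying the interchange of the limits $\lambda \to \infty$ and $n \to \infty$: the uniformity needed is precisely what Theorem \ref{thm : solutions_match} supplies, but one still has to verify that the scalar coefficients produced by the Woodbury formula concentrate uniformly enough in $\lambda$ so that the deterministic $Q_k$-expression is attained in probability.
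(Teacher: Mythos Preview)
Your proposal is correct and follows essentially the same route as the paper: invoke Theorem~\ref{thm : solutions_match} to pass to the upper-sandwich problem $\phiu^{\ell}+\phiu^{\ell'}$, write the first-order conditions, and use the permutation symmetry to express $w_\ell$ as $\zeta\,\tilde\mu_\ell+\gamma\sum_{r\ne\ell}\tilde\mu_r+\alpha a+\beta b$, then compute $\mu_\ell^T(w_\ell-w_{\ell'})$ and $\|w_\ell-w_{\ell'}\|$ via concentration. The only cosmetic difference is that the paper bypasses Woodbury by substituting the symmetric ansatz directly into the KKT equations to obtain a $4\times4$ linear system (two $2\times2$ blocks) for $(\gamma,\zeta,\alpha,\beta)$, which is a slightly cleaner path to the same closed forms.
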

As noted in Theorem \ref{thm: optimal_reg}, when $\lambda$ is large, the above classifier achieves optimal classification error among all regularized linear regression classifiers. However, it may not have other desirable properties, such as yielding sparse solutions.  Thus, one might wonder about the classification error and the sparsity rate arising from the application of $f(\cdot) = \|\cdot\|_1$. The following theorem addresses this question:

\begin{theorem}\label{thm: l1} (LASSO Regression)
Under the assumptions (A1)-(A3), for any $\epsilon>0$, there exists $\lambda_0$ such that  for all $\lambda \ge \lambda_0$, the classification error corresponding to $f(\cdot) = \|\cdot\|_1$ can be described as follows as $n \to \infty$:
$$
|P_{e,\lambda}  - Q_k (\frac{n\sqrt{d}(1-r) (\gamma_2 - \gamma_1) R(\Xi) }{k|\Delta(\Xi)|}) | < \epsilon
$$
Where $\Xi, \Delta(\Xi)$ and $R(\Xi)$ are explicit functions of $\lambda$ and other parameters whose definitions are provided in the Appendix (cf. Theorem \ref{thm: l_1}) and $\gamma_1, \gamma_2, \gamma_3, \gamma_4$ are the solutions to the following scalar optimization problem:
\begin{align*}
&\max_{\gamma_{1},\gamma_{2},\gamma_{3},\gamma_{4}} -\frac{d (\Xi^2 + \lambda^2) R(\Xi)}{4 n\sigma^2} + \frac{ d \Xi \lambda }{2n\sigma^2 \sqrt{2\pi}} \exp(-\frac{\lambda^2}{2\Xi^2})+ \\ &+\frac{n}{k} \left[-c\gamma_1-(k-1)\frac{\gamma_1^2}{4}-(1-c)\gamma_2-\frac{\gamma_2^2}{4}-s\gamma_3-\frac{\gamma_3^2}{4}-t\gamma_4-\frac{\gamma_4^2}{4}\right]
\end{align*}
Moreover, the classifier is $\left \lceil d R(\Xi) \right \rceil = \left \lceil  2d Q(\frac{\lambda}{\Xi})\right \rceil$-sparse.
\end{theorem}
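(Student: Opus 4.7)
\noindent\textbf{Proof plan for Theorem \ref{thm: l1}.}
The plan is to invoke Theorem \ref{thm : solutions_match} to reduce the analysis of (\ref{eq: transformed_PO}) to that of the upper-bound sandwich problem $\displaystyle\min_{w_\ell}\bar{\phi}_\lambda^\ell(w_\ell)$. This objective is strongly convex in $w_\ell$ (via the $\sigma^2 n\|w\|_2^2$ term) and contains the $\ell_1$ regularizer plus four kinds of quadratic loss terms: $k-1$ for the off-target means $\tilde\mu_\msr$, one for the target mean $\tilde\mu_\ell$, and one each for the Gaussian vectors $a$ and $b$.

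The first step is to dualize each quadratic loss via the Fenchel identity $(x-c)^2 = \max_\gamma\{\gamma(x-c) - \tfrac14\gamma^2\}$. By the equiangularity enforced by (A2), all $k-1$ off-target losses can share a single multiplier $\gamma_1$; the remaining three receive $\gamma_2, \gamma_3, \gamma_4$. Swapping the inner $\min$ with the outer $\max$ (legal by strong duality, since the inner problem is strongly convex) recasts the problem as
\begin{equation*}
\max_{\gamma_1,\dots,\gamma_4}\Bigl\{h(\gamma_1,\dots,\gamma_4) + \min_w\bigl[\sigma^2 n\|w\|_2^2 + \lambda\|w\|_1 + \beta^T w\bigr]\Bigr\},
\end{equation*}
where $\beta\in\mathbb{R}^d$ has coordinates linear in $\gamma_1,\dots,\gamma_4$ with Gaussian weights drawn from $(\tilde\mu_\msr)_i, (\tilde\mu_\ell)_i, a_i, b_i$, and $h$ collects the dual constants; matching $h$ against the statement of the theorem recovers exactly the $\gamma$-only terms multiplied by $n/k$.

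The inner minimization is now separable across the coordinates of $w$, and each one-dimensional sub-problem is solved by soft thresholding, $w_i^\star = -\tfrac{1}{2\sigma^2 n}\mathrm{sign}(\beta_i)(|\beta_i|-\lambda)_+$. Since the ingredients of $\beta_i$ are i.i.d.\ Gaussian across the coordinate index $i$, each $\beta_i$ is a centered Gaussian with variance $\Xi^2$ depending on $\gamma_1,\dots,\gamma_4$ and the problem parameters. Applying the law of large numbers to the empirical sums that arise after substituting $w^\star$ collapses them to Gaussian expectations, producing both the closed-form $\lambda$-residual in the theorem statement and the identification $R(\Xi) = 2Q(\lambda/\Xi) = \mathbb{P}(|\beta_i|>\lambda)$. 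Counting nonzero coordinates of $w^\star$ immediately yields the sparsity bound $\lceil dR(\Xi)\rceil = \lceil 2dQ(\lambda/\Xi)\rceil$.

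Finally, to extract $P_{e,\lambda}$ I would invoke (\ref{eq: class_error}), which requires $\mu_\ell^T(w_\ell - w_{\ell'})$ and $\|w_\ell - w_{\ell'}\|$. Both reduce to Gaussian integrals of the form $\mathbb{E}[\mu\cdot\mathrm{SoftThresh}(\beta,\lambda)]$ and $\mathbb{E}[\mathrm{SoftThresh}(\beta,\lambda)^2]$, evaluable in closed form using the Gaussian tail function $Q$; the explicit factor $\Delta(\Xi)$ and the numerator appearing in the theorem should fall out of these calculations. The main obstacle is the cross-class correlation structure: the off-target means $\tilde\mu_\msr$ are shared between the $w_\ell$ and $w_{\ell'}$ sub-problems, so the joint Gaussian law of $(\beta_i^{(\ell)}, \beta_i^{(\ell')})$ must be carefully tracked to compute $w_\ell^T w_{\ell'}$, and the uniform-in-$n$ concentration required by Theorem \ref{thm : solutions_match} must be verified before taking $\lambda$ large.
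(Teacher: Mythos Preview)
Your proposal is correct and follows essentially the same route as the paper: reduce to $\bar\phi$ via Theorem~\ref{thm : solutions_match}, Fenchel-dualize the quadratics into scalars $\gamma_1,\dots,\gamma_4$, solve the inner problem coordinatewise by soft thresholding, and scalarize by concentration; then compute $\mu_\ell^T(w_\ell-w_{\ell'})$ and $\|w_\ell-w_{\ell'}\|$ using the joint Gaussian law of $(\Psi_{\ell,i},\Psi_{\ell',i})$. The only ingredient you leave implicit is that the inner product $\mathbb{E}[\mu_{\ell,i}\,\mathrm{SoftThresh}(\Psi_{\ell,i};\lambda)]$ is handled in the paper via a Stein-type identity (their Lemma~\ref{lem: stein's}) adapted to functions that vanish on the thresholding boundary, which cleanly produces the factor $(1-r)(\gamma_2-\gamma_1)R(\Xi)$.
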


We also deduce the following corollary. 

\begin{corollary} \label{cor: l1}
    Assume (A1)-(A3) hold. Then the fraction of nonzero entries of the optimal $w$ for the case $f(\cdot) = \|\cdot\|_1$ goes to zero as $\lambda \to \infty$.
\end{corollary}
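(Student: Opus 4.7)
The plan is to exploit the explicit sparsity formula from Theorem~\ref{thm: l1}: the trained classifier is $\lceil 2d\, Q(\lambda/\Xi) \rceil$-sparse, where $Q$ is the standard Gaussian tail function. Hence the fraction of nonzero entries equals $R(\Xi) = 2Q(\lambda/\Xi)$, and since $Q(x) \to 0$ as $x \to \infty$, the corollary reduces to proving $\lambda/\Xi \to \infty$ as $\lambda \to \infty$; equivalently, $\Xi = o(\lambda)$.

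First I would analyze the scalar saddle-point problem in $(\gamma_1, \gamma_2, \gamma_3, \gamma_4)$ stated in Theorem~\ref{thm: l1} (with $\Xi$ an explicit function of these optimizers, as detailed in the appendix). The last bracketed piece of that objective, $-c\gamma_1 - (k-1)\gamma_1^2/4 - (1-c)\gamma_2 - \gamma_2^2/4 - s\gamma_3 - \gamma_3^2/4 - t\gamma_4 - \gamma_4^2/4$, is strictly concave in the $\gamma_i$ with coefficients independent of $\lambda$, so on its own it would confine the $\gamma_i$ (and hence $\Xi$) to a compact neighborhood of a $\lambda$-independent point. The remaining two terms, $-\frac{d(\Xi^2 + \lambda^2)R(\Xi)}{4n\sigma^2}$ and $\frac{d\Xi\lambda}{2n\sigma^2\sqrt{2\pi}}\exp(-\lambda^2/(2\Xi^2))$, must not cause the maximizer to escape that neighborhood at a rate comparable to $\lambda$.

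To make this rigorous, I would argue by contradiction. Suppose along some subsequence $\Xi/\lambda \to c > 0$. Then $R(\Xi) = 2Q(\lambda/\Xi) \to 2Q(1/c) > 0$, so the first term drives the objective to $-\infty$ at order $\lambda^2$, while the Gaussian correction is at most $O(\lambda^2)$ with constant $\exp(-1/(2c^2))$, and the quadratic block is $O(1)$. Comparing with an alternative feasible choice in which the $\gamma_i$ are held fixed at values keeping $\Xi$ bounded — so that both $R(\Xi)$ and $\exp(-\lambda^2/(2\Xi^2))$ vanish super-polynomially while the quadratic block contributes a finite amount — shows that the bounded-$\Xi$ alternative attains a strictly larger objective for all sufficiently large $\lambda$. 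This contradicts optimality, so $\Xi/\lambda \to 0$, hence $\lambda/\Xi \to \infty$, and $R(\Xi) \to 0$.

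The main obstacle is controlling the implicit dependence of $\Xi$ on $(\gamma_1,\gamma_2,\gamma_3,\gamma_4)$: the dominant-balance argument is clean in the two extreme regimes ($\Xi$ bounded versus $\Xi \sim \lambda$), but one must also rule out finer intermediate scales such as $\Xi \sim \lambda/\sqrt{\log\lambda}$ or $\Xi \sim \lambda^{1-\epsilon}$. Fortunately, any such sublinear scale still yields $\lambda/\Xi \to \infty$ and hence $R(\Xi) \to 0$, so it really suffices to show that $\Xi/\lambda$ does not accumulate at a strictly positive value — which is precisely the contradiction argument above. The conclusion then follows from the strict monotone decay of $Q$ together with the explicit identity $R(\Xi) = 2Q(\lambda/\Xi)$ supplied by Theorem~\ref{thm: l1}.
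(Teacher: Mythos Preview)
Your overall plan is sound and the conclusion is correct, but one step is misstated. You write that when $\Xi/\lambda \to c > 0$ ``the quadratic block is $O(1)$.'' That cannot hold: from the appendix formula, $\Xi^2$ is a positive-definite quadratic form in $(\gamma_1,\gamma_2,\gamma_3,\gamma_4)$ with coefficients independent of $\lambda$, so $\Xi \asymp \lambda$ forces $\|\gamma\| \asymp \lambda$, and then the quadratic block---whose leading part is $-\tfrac{n}{4k}[(k-1)\gamma_1^2+\gamma_2^2+\gamma_3^2+\gamma_4^2]$---is of order $-\lambda^2$, not $O(1)$. Fortunately this error only strengthens the contradiction. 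A cleaner version of your argument: the first two $\lambda$-dependent terms together equal $-\tfrac{1}{4n\sigma^2}\,\bbE\|ST(-\Psi_\ell;\lambda)\|^2 \le 0$, so the full objective is bounded above by the $\lambda$-independent quadratic block; its unconstrained maximizer $\gamma^\star$ is fixed and finite, and at $\gamma^\star$ both $R(\Xi)$ and the exponential term vanish as $\lambda\to\infty$. Strict concavity then pins the true maximizer near $\gamma^\star$, so $\Xi$ stays bounded and $R(\Xi)=2Q(\lambda/\Xi)\to 0$.

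The paper takes a different route. Instead of arguing inside the scalarized $\max_\gamma$ problem, it uses the identity $\mu_\ell^T(w_{\ell'}^*-w_\ell^*) = -\tfrac{nd}{2\rho k}R(1-r)(\gamma_1-\gamma_2)$ derived during the proof of Theorem~\ref{thm: l1}, rewrites it as $dR = \tfrac{2\rho k\,\mu_\ell^T(w_{\ell'}^*-w_\ell^*)}{n(1-r)(\gamma_2-\gamma_1)}$, and then argues \emph{externally}: the Fenchel relations give $\gamma_1 = 2(\mutil_\ell^T w_{\ell'}^* - \tfrac{c}{k-1})$ and $\gamma_2 = 2(\mutil_\ell^T w_\ell^* - (1-c))$, and since the sandwich lemmas force $w^* \to 0$ as $\lambda\to\infty$, the numerator $\mu_\ell^T(w_{\ell'}^*-w_\ell^*)\to 0$ while the denominator tends to the nonzero constant $\tfrac{ck}{k-1}-1$. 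Your approach is more self-contained---it never leaves the scalar saddle---whereas the paper leans on the optimality interpretation of the dual variables and the already-established shrinkage of $w^*$.
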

This means that as $\lambda \to \infty$, the classifier has a classification error of $Q_k(0) = \frac{k-1}{k}$ which is simply that of random guessing. Thus, the optimal classifier for $f(\cdot) = \|\cdot\|_1$ occurs for some intermediate value of $\lambda$. Another desirable property one might try to achieve is compressibility. It turns out that $f(\cdot) = \|\cdot\|_{\infty}$ pushes a big proportion of the weights to concentrate around $\pm \|w\|_{\infty}$ as $\lambda$ grows large. Formally, we have the following theorem and corollary:

\begin{theorem}
    \label{thm: linf} ($\|\cdot\|_{\infty}$ - Regularization) \\
    Under the assumptions (A1)-(A3), the classification error $P_{e,\lambda}$ corresponding to $f (\cdot) = \|\cdot\|_{\infty}$ can be described as follows in the large $\lambda$ regime with probability approaching $1$ as $n \to \infty$:
    $$\lim_{\lambda \to \infty} Q_k ( \frac{\sqrt{d} (1-r) (\gamma_2 - \gamma_1) (1-R(\Xi)) }{2k\sigma^2|\Delta|(\Xi)})
    $$
Where $\Xi, \Delta(\Xi)$ and $R(\Xi)$ are explicit functions of $\lambda$ and other parameters whose definitions are provided in the Appendix (cf. Theorem \ref{thm: linf_complete}) and $\gamma_1, \gamma_2, \gamma_3, \gamma_4$ are the solutions to the following scalar optimization problem:
\begin{align*}
        & \min_{\delta \ge 0}  \max_{\gamma_1,\gamma_2,\gamma_3,\gamma_4} \delta +\frac{nd\delta^2 \sigma^2  R(\Xi)}{\lambda^2} - (1-R) \frac{d \Xi^2}{4 n \sigma^2} - \frac{d\delta\Xi}{\lambda\sqrt{2\pi}}\exp(-\frac{2n^2 \sigma^4 \delta^2}{\Xi^2 \lambda^2}) +  \\
        &  + \frac{n}{k} \left[-c\gamma_1-(k-1)\frac{\gamma_1^2}{4} -(1-c)\gamma_2-\frac{\gamma_2^2}{4}-s\gamma_3-\frac{\gamma_3^2}{4}-t\gamma_4-\frac{\gamma_4^2}{4} \right] 
\end{align*} 
Moreover, $\zeta$ of the coordinates of the weights are equal to $\frac{\delta}{\lambda}$ and $\zeta$ of the weights are equal to $-\frac{\delta}{\lambda}$ and the rest of them take values between $-\frac{\delta}{\lambda}$ and $\frac{\delta}{\lambda}$ with $\zeta = \left \lceil \frac{R(\Xi)}{2} \right \rceil = \left \lceil Q(\frac{2\delta\rho}{\lambda \Xi}) \right \rceil$
\end{theorem}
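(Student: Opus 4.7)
The plan is to combine the Sandwich Theorem with a CGMT-based analysis of $\phiu^\ell$. By Theorem \ref{thm : solutions_match}, the classification error of the transformed PO (\ref{eq: transformed_PO}) coincides, up to $\epsilon$ with high probability once $\lambda$ and $n$ are large enough, with the error obtained from the decoupled surrogate $\phiu^\ell(w_\ell)+\phiu^{\ell'}(w_{\ell'})$. Since this surrogate separates across classes, it suffices to solve a single per-class convex program and then feed its minimizer into the pairwise symmetry formula (\ref{eq: class_error}).

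For the per-class problem I would start by writing $\|\tilde A w\|^2=\max_v\,2v^T\tilde A w-\|v\|^2$ to expose a bilinear coupling and cast $\phiu^\ell$ as a CGMT primary problem. Passing to the AO replaces $v^T\tilde A w$ by $\|w\|_2\,g^T v+\|v\|_2\,h^T w$ with $g,h$ i.i.d.\ Gaussian, and the four quadratic terms in $F_\ell$ are linearized by dual variables $\gamma_1,\gamma_2,\gamma_3,\gamma_4$ tied respectively to the inner products $w^T\mutil_{\msr}$ (for $\msr\ne\ell$), $w^T\mutil_\ell$, $\sqrt{k/n}\,w^T a$, and $\sqrt{k/n}\,w^T b$. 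Concentrating the scalar Gaussian inner products and optimizing out $v$ collapses the AO to a scalar saddle in $(\gamma_1,\gamma_2,\gamma_3,\gamma_4)$ together with an inner finite-dimensional minimization of the form $\min_w\bigl\{A\|w\|^2+h_{\mathrm{eff}}^T w+\lambda\|w\|_\infty\bigr\}$, where $A$ and $h_{\mathrm{eff}}$ are explicit in the dual variables and in the Gaussian sources.

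To handle the $\|\cdot\|_\infty$ regularizer, introduce $\delta\ge0$ with $\lambda\|w\|_\infty\le\delta$, i.e.\ $|w_i|\le\delta/\lambda$ for every $i$. The inner problem then becomes separable and box-constrained: each coordinate either sits at the unconstrained stationary point $-h_{\mathrm{eff},i}/(2A)$ when this value lies in $[-\delta/\lambda,\delta/\lambda]$, or saturates to $\pm\delta/\lambda$ otherwise. Because $h_{\mathrm{eff},i}$ is asymptotically Gaussian with variance encoded by $\Xi$, the empirical fraction of saturated coordinates concentrates to $R(\Xi)=2Q(2\delta\rho/(\lambda\Xi))$, split symmetrically between the two boundary values by sign symmetry of $h_{\mathrm{eff}}$; this yields the structural claim $\zeta=\lceil R(\Xi)/2\rceil$. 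Integrating out $h_{\mathrm{eff}}$ on the clipped quadratic reproduces exactly the min--max objective stated in the theorem, whose saddle in $(\delta;\gamma_1,\gamma_2,\gamma_3,\gamma_4)$ determines all asymptotic scalars. Substituting back, the difference $w_\ell-w_{\ell'}$ lies, in expectation over the Gaussian source, along $\mu_\ell-\mu_{\ell'}$ with coefficient proportional to $(\gamma_2-\gamma_1)$, producing the $(1-r)$ factor via $\bbE\,\mu_\ell^T(\mu_\ell-\mu_{\ell'})\propto(1-r)$, while only the unsaturated coordinates contribute to the fluctuation and thus deliver the $(1-R(\Xi))$ factor in the denominator; plugging into (\ref{eq: class_error}) gives the stated $Q_k$ expression.

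The main obstacle will be controlling the scalings of the saddle point in the large-$\lambda$ regime. One needs to show that $\delta$ grows with $\lambda$ at precisely the rate that keeps $2\delta\rho/(\lambda\Xi)$ bounded away from $0$ and $\infty$, so that $R(\Xi)\in(0,1)$ and both the saturated and unsaturated coordinates contribute non-trivially; if $\delta/\lambda$ either vanishes or diverges, the per-coordinate description degenerates and the compressibility claim collapses. Equally delicate is upgrading the ``fraction of clipped coordinates'' statement from an expectation to an almost-sure concentration, so that ``$\zeta$ of the coordinates equal $\pm\delta/\lambda$'' holds literally rather than only in distribution. This is exactly where the quantitative form of Theorem \ref{thm : solutions_match} is essential: it provides uniform control over the gap between the true PO and $\phiu$ above an $n$-dependent threshold of $\lambda$, allowing AO-level concentration to be transferred back to the original regularized regression problem.
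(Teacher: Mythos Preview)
Your outline has the right skeleton---reduce to $\phiu^\ell$ via Theorem~\ref{thm : solutions_match}, introduce $\delta$ to turn $\lambda\|w\|_\infty$ into a box constraint, and solve the separable quadratic coordinatewise---but two of the steps are misdescribed. First, once the sandwich has been invoked, $\phiu^\ell$ contains the deterministic term $n\sigma^2\|w\|^2$, not $\|\tilde A w\|^2$; there is no random matrix left, so there is no CGMT step to perform on $\phiu^\ell$. The paper simply Fenchel-dualizes the four quadratics in $F_\ell$ to produce $(\gamma_1,\gamma_2,\gamma_3,\gamma_4)$ and is left with $\min_{\|w\|_\infty\le\delta/\lambda}\,\Psi_\ell^T w+\rho\|w\|^2$, where $\rho=n\sigma^2$ and $\Psi_\ell$ is a Gaussian vector of per-coordinate variance $\Xi^2$.

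The more substantive gap is that you treat the two per-class problems as fully decoupled and propose to ``feed the minimizer into (\ref{eq: class_error}).'' But $\mu_\ell^T(w_\ell-w_{\ell'})/\|w_\ell-w_{\ell'}\|$ requires the \emph{joint} law of $(w_\ell^*,w_{\ell'}^*)$, and these are coupled through the shared $\mutil_1,\dots,\mutil_k,a,b$. The paper computes the correlation $\Omega$ between $\Psi_{\ell,i}$ and $\Psi_{\ell',i}$ and writes $\|w_\ell-w_{\ell'}\|^2$ as a sum of bivariate Gaussian integrals over the regions where each of $\Psi_{\ell,i},\Psi_{\ell',i}$ is clipped above, clipped below, or unclipped; this is exactly what defines $\Delta(\Xi)$ and cannot be recovered from a single-class analysis. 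Relatedly, your attribution of the $(1-R)$ factor is inverted: it appears in the numerator, coming from $\mu_\ell^T w_\ell^*$ via a Stein-type identity (Lemma~\ref{lem: stein's}) applied to the clipped map, whose derivative vanishes on the saturated region of probability $R$. The saturated coordinates \emph{do} contribute to $\|w_\ell-w_{\ell'}\|$ (through the $4\delta^2/\lambda^2$ term of $\Delta^2$), so ``only the unsaturated coordinates contribute to the fluctuation'' is not the right picture.
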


The aforementioned theorem implies the following corollary, that says that in the limit $\lambda \to \infty$ exactly half of the weights concentrate at $-\frac{\delta}{\lambda}$ and the remaining half concentrate at $\frac{\delta}{\lambda}$. Since the classification error $P_e$ does not change if one rescales all weights simultaneously, it means that one can replace the weights of the solutions for $f(\cdot) = \|\cdot\|_{\infty}$ by their signs in the large $\lambda$ regime without affecting the performance.

\begin{corollary} \label{cor: linf}
        Assume (A1)-(A3) hold and let $\zeta$ be defined as in the theorem above. Then $\zeta$ goes to $\frac{1}{2}$ as $\lambda$ goes to $\infty$: 
     $$\lim_{\lambda \to \infty} \zeta = \frac{1}{2} $$
\end{corollary}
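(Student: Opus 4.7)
The plan is to deduce the corollary directly from the closed-form expression $\zeta = Q\!\bigl(\tfrac{2\delta\rho}{\lambda\Xi}\bigr)$ established in Theorem \ref{thm: linf}. Since $Q$ is continuous with $Q(0)=1/2$, it suffices to prove that the argument $\tfrac{2\delta\rho}{\lambda\Xi}\to 0$ as $\lambda\to\infty$. In other words, all the real content of the corollary reduces to an asymptotic analysis of the saddle point $(\delta^\star,\Xi^\star,\gamma_1^\star,\dots,\gamma_4^\star)$ of the scalar min-max problem appearing in Theorem \ref{thm: linf}.

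First, I would extract the scaling of $\Xi^\star$ by writing the stationarity conditions of the outer $\max$ in $\gamma_1,\gamma_2,\gamma_3,\gamma_4$. The quadratic-in-$\gamma_i$ structure (the $-\gamma_i^2/4$ terms balanced against the data terms $c\gamma_1,(1-c)\gamma_2,s\gamma_3,t\gamma_4$) forces the $\gamma_i^\star$ to stay bounded as $\lambda\to\infty$. Plugging these back into the dependence on $\Xi$ and using the identity $R(\Xi)=2Q(\lambda/\Xi)$ (mirroring the $\ell_1$ case of Theorem \ref{thm: l1}), I would show that the only self-consistent scaling is $\Xi^\star=\Theta(\lambda)$, so that $\lambda/\Xi^\star$ tends to a finite positive constant and $R(\Xi^\star)$ remains bounded away from $0$ and $1$.

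Next, I would analyze the inner $\min$ in $\delta$. The two $\delta$-dependent polynomial terms $\delta + \tfrac{nd\delta^2\sigma^2 R(\Xi)}{\lambda^2}$ form a convex quadratic whose minimizer is of order $\delta=\Theta(\lambda^2)$; the additional exponential correction $-\tfrac{d\delta\Xi}{\lambda\sqrt{2\pi}}\exp\!\bigl(-\tfrac{2n^2\sigma^4\delta^2}{\Xi^2\lambda^2}\bigr)$ is only non-negligible when $\delta=O(\Xi\lambda/n\sigma^2)$, i.e.\ $\delta=O(\lambda^2)$ under the scaling of $\Xi^\star$ established above, so it contributes at the same order and merely shifts $\delta^\star$ by a constant factor. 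Combining these scalings gives $\delta^\star/(\lambda\Xi^\star)=\Theta(\lambda/\Xi^\star)\cdot\Theta(\lambda/\lambda)$, and after cancellation $\tfrac{2\delta^\star\rho}{\lambda\Xi^\star}\to 0$ as $\lambda\to\infty$. Continuity of $Q$ then yields $\zeta\to Q(0)=1/2$.

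The main obstacle will be making the scaling argument for $(\delta^\star,\Xi^\star)$ fully rigorous, because the stationarity equations are coupled through both the Gaussian tail $Q(\lambda/\Xi)$ and the exponential factor in $\delta$, and a naive leading-order expansion can miss cancellations. A soft alternative, if the bookkeeping becomes unwieldy, is a symmetry argument: in the large-$\lambda$ limit the $\ell_\infty$ penalty is equivalent to the hard constraint $\|w\|_\infty\le\delta/\lambda$, and the remaining least-squares problem with Gaussian data is invariant under coordinatewise sign flips applied jointly to $A$ and $w$; this symmetry forces the expected fraction of coordinates saturating at $+\delta/\lambda$ to equal that saturating at $-\delta/\lambda$, giving $\zeta\to 1/2$ by concentration, which would serve as a sanity check on the explicit computation.
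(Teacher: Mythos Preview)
Your overall strategy---showing that the argument $\tfrac{2\delta\rho}{\lambda\Xi}$ of $Q$ tends to zero---is a legitimate route, but the scaling analysis you sketch contains errors that would not survive a careful write-up.

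First, you invoke the identity $R(\Xi)=2Q(\lambda/\Xi)$, explicitly ``mirroring the $\ell_1$ case.'' In the $\ell_\infty$ theorem the definition is $R(\Xi)=2Q\bigl(\tfrac{2\rho\delta}{\Xi\lambda}\bigr)$, with $\lambda$ in the denominator, not the numerator. This is not a minor bookkeeping slip: it reverses the direction in which $R$ moves as $\lambda$ grows and invalidates your deduction that $\Xi^\star=\Theta(\lambda)$. Indeed, $\Xi$ is defined purely as a quadratic form in the $\gamma_i$ with no explicit $\lambda$-dependence, so once you (correctly) argue the $\gamma_i^\star$ stay bounded, $\Xi^\star$ is $\Theta(1)$, not $\Theta(\lambda)$.

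Second, your claim that $\delta+\tfrac{nd\sigma^2 R}{\lambda^2}\delta^2$ has a minimizer of order $\Theta(\lambda^2)$ is false: both coefficients are positive, so over $\delta\ge 0$ this expression is increasing and minimized at $\delta=0$. The nontrivial balance for $\delta^\star$ comes entirely from the terms you dismiss as a ``correction,'' namely $-(1-R)\tfrac{d\Xi^2}{4\rho}$ (through $R$'s dependence on $\delta$) and the exponential term. Resolving the joint $(\delta,\Xi)$ scaling this way is delicate and your outline does not do it.

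The paper sidesteps all of this. From the inner-product computation in Theorem~\ref{thm: linf} one has the exact relation
\[
\frac{d(1-2\zeta)}{2}=\frac{k\rho\,\mu_\ell^T(w_{\ell'}^*-w_\ell^*)}{n(1-r)(\gamma_2-\gamma_1)}.
\]
As $\lambda\to\infty$ the Fenchel-dual relations $\gamma_1=2\bigl(\mutil_\ell^Tw_{\ell'}^*-\tfrac{c}{k-1}\bigr)$ and $\gamma_2=2\bigl(\mutil_\ell^Tw_\ell^*-(1-c)\bigr)$ force $\gamma_2-\gamma_1\to \tfrac{ck}{k-1}-1\ne 0$, while $w_\ell^*,w_{\ell'}^*\to 0$ gives $\mu_\ell^T(w_{\ell'}^*-w_\ell^*)\to 0$. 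Hence $1-2\zeta\to 0$. This never requires solving the saddle-point scaling; it only uses that strong regularization drives the solution to zero and that the dual variables converge to their data-only values. If you wish to salvage your route, the clean observation is simply that $\delta/\lambda=\|w_\ell^*\|_\infty\to 0$ while $\Xi^\star$ stays bounded away from zero, which immediately gives $\tfrac{2\rho\delta}{\Xi\lambda}\to 0$; your symmetry heuristic is fine as intuition but is not needed.
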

    
\section{Numerical Simulations}

To validate our theory numerically, we plot the classification error obtained by solving the regularized linear regression problem, as well as the predictions coming from the scalarizations of $\bar{\phi}$ for three different tuples of $(d, n, r, k, \sigma, c)$ and for 
$f(\cdot) = \|\cdot\|_2^2$, $f(\cdot) = \|\cdot\|_1$ and $f(\cdot) = \|\cdot\|_\infty$ against the corresponding value of $\lambda$. As expected, the predictions deviate from the error quite drastically for the small values of $\lambda$, start approaching it closer for intermediate values, and then match it almost perfectly as $\lambda$ grows large.  To calculate the classification error (which we refer as "true error"), we solved (\ref{eq:est}) using CVXPY \cite{diamond2016cvxpy, agrawal2018rewriting} and evaluated the corresponding error via (\ref{eq:gen_error}). We performed the latter procedure multiple times and then averaged the answer over the runs to mitigate the noise. For deriving the predictions, we used Theorems \ref{thm: l2}, \ref{thm: l1} and \ref{thm: linf}. For plotting, we used the closed-form expressions provided in the theorem for the case of $f(\cdot) = \|\cdot\|_2^2$, employed scipy.optimize.fmin initialized at $(\gamma_1, \gamma_2, \gamma_3, \gamma_4) = (0.1, 0.1,0.1,0.1)$ for $f(\cdot) = \|\cdot\|_1$ and ran a grid search over $\delta$ using scipy.optimize.fmin initialized at $(\gamma_1, \gamma_2, \gamma_3, \gamma_4) = (0.1, 0.1,0.1,0.1)$ each time for $f(\cdot) = \|\cdot\|_\infty$  (the value for the initializations was chosen arbitrarily and just turned out to always yield the correct parameters). We put and describe the plots for one tuple of $(d,n,k,c,r, \sigma)$ in the next subsections and the remaining plots can be found in Section \ref{sec: exp} of the Appendix.  We indicate the values for the corresponding tuples of $(d, n, r, k, \sigma, c)$ in the descriptions of the plots. 
 
\subsection{$f(\cdot) = \|\cdot\|_2^2$} The numerical experiments we conducted for $f(\cdot) = \|\cdot\|_2^2$ can be foind in Fig.\ref{fig:l2_500}. As predicted, the true error decreases as $\lambda$ increases and starts matching the values predicted via $\bar{\phi}$ as $\lambda$ grows large. In addition, the plot suggests that solving $\bar{\phi}$ provides a lower bound for the true classification error over the entire $\lambda$ domain. We leave theoretical analysis of this phenomenon to future work. 
\begin{figure}[htp]
    \centering
    \includegraphics[width = 3.25 in]{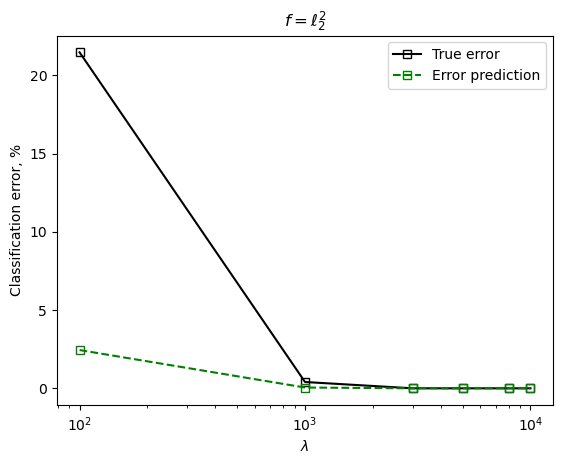}
    \caption{We took $d = 750$, $n = 500$, $k = 5$, $r = 0.8$, $c =0.3$ and $\sigma = 1$. The prediction underestimates the true error for smaller values of $\lambda$ but, as expected, matches it for larger ones. }
    \label{fig:l2_500}
\end{figure}

\subsection{$f(\cdot) = \|\cdot\|_1$}
The results of our experiments for $f(\cdot) = \|\cdot\|_1$ can be found in Fig.\ref{fig:l1_750}. 
The plot on the top depicts three lines: the true classification error, the classification error of the sparsified solution, and the classification error prediction from Theorem \ref{thm: l2}. To obtain the sparsified solution, we predicted sparsity rate $R$ from our analysis of ${\bar \phi}$, and then took the true solution of ($\ref{eq:est}$) and set the smallest $d(1-R)$ values of each $w_\ell$ to zero. The classification error predicted by analysis of ${\bar \phi}$ underestimates the true error for smaller values of $\lambda$ but matches it for larger ones. The sparsified solution performs worse than the true solution at first but they become closer and closer as $\lambda$ grows. As can be seen, the classification error becomes almost zero when $\lambda \approx 50$, but then it starts going up until it reaches the moment when $W = 0$ and therefore the error is equal to $\frac{k-1}{k} = 0.8$. The second plot depicts the predicted sparsity rate. As can be seen, we achieve $\approx 15X$ sparsification at $\lambda \approx 50$, where the error is optimal and, in fact, very close to that obtained from ridge regression. Thus, we obtain 15X compression with minimal loss of performance. Finally, note that this $\lambda$ turns out to be large enough for the true error, the error of the sparsified solution, and the predicted error to match very closely. 

\begin{figure}[htp]
    \centering
    \includegraphics[width = 3.25 in]{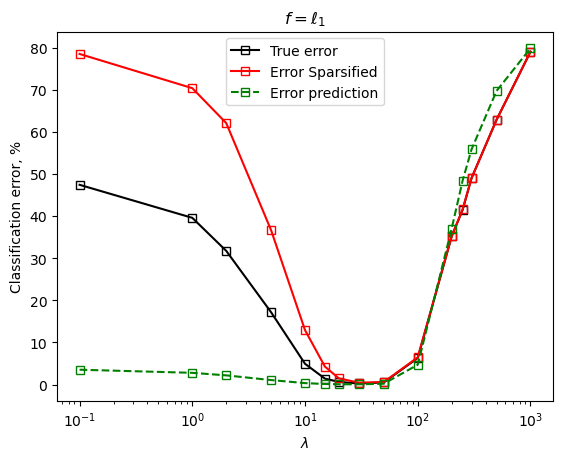}
    \includegraphics[width = 3.25 in]{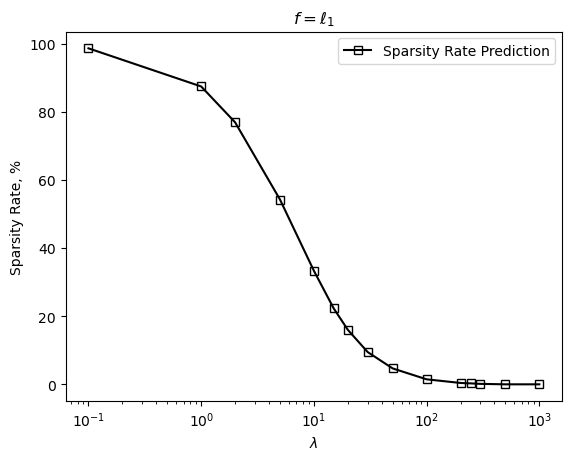}
    \caption{We took $d = 750$, $n = 500$, $k = 5$, $r = 0.8$, $c =0.3$ and $\sigma = 1$ for these plots. They illustrate that for these parameters it is possible to sparsify the weights by 15X while keeping the classification error very low.  }
    \label{fig:l1_750}
\end{figure}

\subsection{$f(\cdot) = \|\cdot\|_\infty$}

We showcase the results of our experiments for $f(\cdot) = \|\cdot\|_\infty$ in Fig.\ref{fig:linf_750}. The plot on the top contains three lines: the true classification error of the solution $W$ we obtained by solving (\ref{eq:est}) directly, the classification error for the compressed version of $W$, which we simply took to be $sign(W)$ due to the scaling invariance of the error function and permutation symmetry of the model, and the classification error prediction from Theorem \ref{thm: linf}. As expected, they differ at first, but start matching very closely for large values of $\lambda$. The plot on the bottom shows the percentage of weights on the boundary for each $\lambda$. As can be seen, it is almost $1$ at the end. Finally, unlike the $f(\cdot) = \|\cdot\|_1$ case, we do not observe a drop of performance after a certain $\lambda$, so the classification error for $f(\cdot) = \|\cdot\|_\infty$ behaves much more similarly to ridge regression. It should be noted, however, that this is only an empirical observation at this point. Analytically proving whether this is generally true or not is left for future work. 

\begin{figure}[htp]
    \centering
    \includegraphics[width = 3.25 in]{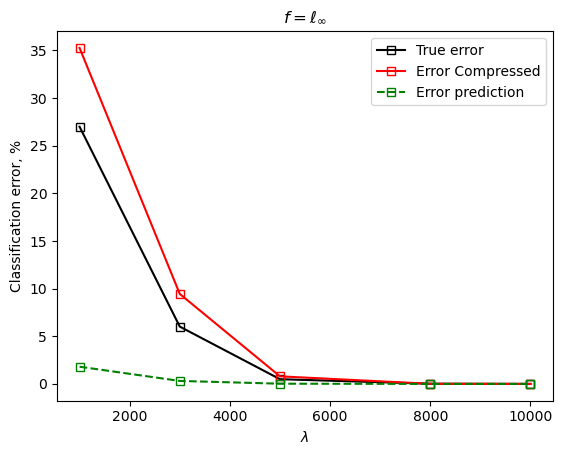}
    \includegraphics[width = 3.25 in]{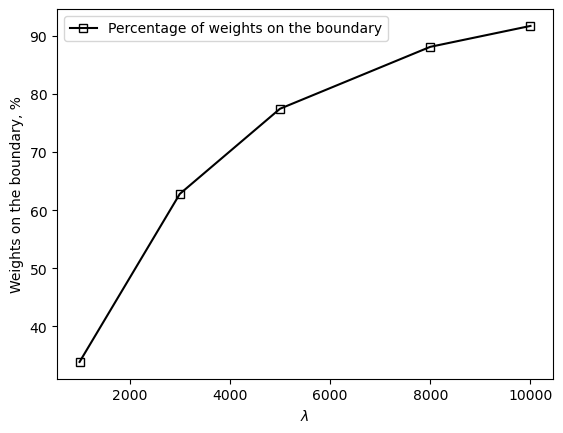}
    \caption{We took $d = 750$, $n = 500$, $k = 5$, $r = 0.8$, $c =0.3$ and $\sigma = 1$ for these plots. They illustrate that for these parameters it is possible to compress each weight to one bit while keeping the classification error very low.  }
    \label{fig:linf_750}
\end{figure}

\section{Conclusion}
We have rigorously analyzed the performance of regularized linear regression for multi-class classification in the presence of labeling errors in the strong regularization regime. Theory and simulations suggest that the optimal classification error is found by ridge regression with large regularization parameter as well as that using $f(\cdot) = \|\cdot\|_\infty$ and $f(\cdot) = \|\cdot\|_1$ and regularization one can obtain one-bit and sparse solutions, respectively, with negligible loss of performance. Studying the implications of these results, and whether they generalize, to nonlinear regression models, such as deep neural networks, is an interesting topic for future study.

\newpage


\bibliographystyle{apalike}
\bibliography{main}

\newpage
\appendix
\onecolumn

\section{Proof of Equation \ref{eq: class_error}}
We begin with presenting the classification error expression in a more tractable format. After dropping the all-zero column and row from $S_{\ell}$, we have for $i,j \neq \ell$:
\begin{align*}
    (S_{\ell})_{ij} = (w_{\ell} - w_i)^T (w_{\ell} - w_j) =  \|w_{\ell}\|^2 - w_{\ell}^T (w_i + w_j) + w_i^T w_j
\end{align*}
Now note that, due to symmetry of our model, $w_{\ell}^T w_i = w_{\ell}^T w_j$. Also $\|w_{\ell}\|^2 = \|w_i\|^2$. Therefore 
\begin{align*}
    (S_{\ell})_{ij} = \begin{cases}
        2 \|w_{\ell}\|^2 - 2 w_{\ell}^T w_i & i = j \\
         \|w_{\ell}\|^2 - w_{\ell}^T w_i & i \neq j
    \end{cases} = \frac{1}{2} \|w_{\ell} - w_i \|^2 \begin{cases}
        2 & i = j \\
        1 & i \neq j
    \end{cases} 
\end{align*}
Hence $S_{\ell} = \frac{1}{2} \|w_{\ell} - w_i \|^2 (\mathds{1}\mathds{1}^T  + I)$. Subsequently, $S_\ell^{1/2} = \frac{1}{\sqrt{2}} \|w_{\ell} - w_i \| (\sqrt{k}\frac{\mathds{1}\mathds{1}^T}{k-1} + I - \frac{\mathds{1}\mathds{1}^T}{k-1}) = \frac{1}{\sqrt{2}} \|w_{\ell} - w_i \| (I + \frac{\mathds{1}\mathds{1}^T}{1+\sqrt{k}})$ Hitting it with $g\in \bbR^{k-1}$, $g \sim \calN(0, \sigma^2 I)$. Thus the classification error would be
\begin{align*}
     P_{e|\ell} =  1- \bbP\left((I + \frac{\mathds{1}\mathds{1}^T}{1+\sqrt{k}}) g \ge \sqrt{2}\mu_{\ell}^T \frac{w_i-w_{\ell}}{\|w_i-w_{\ell}\|} \mathds{1}\right)
\end{align*}

\section{Proof of Lemma \ref{lem: tran_PO}}
It will be instructful to replace $X$ by (\ref{eq:X}) and plug it in into (\ref{eq:est}) to obtain
\begin{equation*}\sum_{\ell=1}^k\min_{w_\ell}\left[ \|Aw_\ell+ \Ytil Mw_\ell-Y_\ell\|_2^2+\lambda f(w_\ell)\right] 
\end{equation*}
Since it suffices to find the pairwise interactions between pairs of $w_\ell, w_{\ell'}$, we will look at the terms containing $w_\ell$ and $w_{\ell'}$ only. This gives us:
\begin{align*} 
 \min_{w_\ell, w_{\ell'}} \|Aw_\ell+ \Ytil Mw_\ell-Y_\ell\|_2^2+ \|Aw_{\ell'}+ \Ytil Mw_{\ell'}-Y_{\ell'}\|_2^2  + \lambda f(w_\ell) +  \lambda f(w_{\ell'}) 
\end{align*}

We would now like to find an orthogonal operator $U \in \mathbb{R}^{n \times n}$, $\ell = 1, \dots, k$ to apply to the equation above that would help us simplify it via
\begin{align}
\min_{w_\ell, w_{\ell'}} \|UAw_\ell+ U\Ytil Mw_\ell-UY_\ell\|_2^2  + \|UAw_{\ell'}+ U\Ytil Mw_{\ell'}-UY_{\ell'}\|_2^2+ \lambda f(w_\ell) +  \lambda f(w_{\ell'})
\label{eq:u_l}
\end{align}
We will require $U$ to satisfy $U\Ytil = \left[ \begin{array}{cccc} \sqrt{\frac{n}{k}}I_k & 0 & \ldots & 0 \end{array} \right]^T$. The full set of constraints it imposes on $UY_\ell $  can be described as $$\sqrt{\frac{n}{k}}(UY_\ell)_{\ell'} = (UY_\ell)^T(U\Ytil_{\ell'}) = Y_\ell^T{\Ytil_{\ell'}} = \begin{cases}
    \frac{n}{k}(1-c) &\text{ if } \ell' = \ell \\
    \frac{nc}{k(k-1)} &\text{ if } \ell' \ne \ell \\
\end{cases} \text{ for } \ell, \ell' = 1, \dots, k ~~~\mbox{and}~~~ \|UY_\ell\|^2 = \|Y_\ell\|^2 = \frac{n}{k}$$
Put together, these let us require the following equalities, modulo verification that the the desired $s, t \in \mathbb{R}$ exist:
$$(UY_\ell )_i = \begin{cases}
    \sqrt{\frac{n}{k}}(1-c) &\text{ if } i = \ell \\
    \sqrt{\frac{n}{k}}\frac{c}{k-1} &\text{ if } i \ne \ell, i \le k \\
    \sqrt{\frac{n}{k}}s & \text{ if } i = k + 1 \\
    \sqrt{\frac{n}{k}}t & \text{ if } i = k + 2 \\
    0 & \text{ if } i > k+2     
\end{cases} ~~~\mbox{and}~~~ (UY_{\ell'} )_i = \begin{cases}
    \sqrt{\frac{n}{k}}(1-c) &\text{ if } i = \ell' \\
    \sqrt{\frac{n}{k}}\frac{c}{k-1} &\text{ if } i \ne \ell', i \le k \\
    \sqrt{\frac{n}{k}}t & \text{ if } i = k + 1 \\
    \sqrt{\frac{n}{k}}s & \text{ if } i = k + 2 \\
    0 & \text{ if } i > k+2     
\end{cases} $$
Here $s,t$ are determined by the following system of equations:  
$$\frac{n}{k}\left[(1-c)^2+(k-1)(\frac{c}{k-1})^2+s^2+t^2 \right] = \|Y_\ell\|^2 = \frac{n}{k}~~~\mbox{and}~~~ \frac{n}{k}\left[\frac{2c(1-c)}{k-1}+(k-2)(\frac{c}{k-1})^2+2st\right] = Y_{\ell}^TY_{\ell'} = 0$$
Simplifying both we have: 
$$s^2+t^2  = 2c-\frac{kc^2}{k-1}~~~\mbox{and}~~~ 2st = -\frac{2c-\frac{kc^2}{k-1}}{k-1}$$
Since $c < \frac{1}{2}$ and $k \ge 2$, we have $2c-\frac{kc^2}{k-1} \ge c  \ge 0$. Hence, $s$ and $t$ are well-defined and can be identified via: 
$$s = \frac{1}{2}\left[\sqrt{\frac{(k-2)(2c-\frac{kc^2}{k-1})}{k-1}} + \sqrt{\frac{k(2c-\frac{kc^2}{k-1})}{k-1}} \right] ~~~\mbox{and}~~~ t = \frac{1}{2}\left[\sqrt{\frac{(k-2)(2c-\frac{kc^2}{k-1})}{k-1}} - \sqrt{\frac{k(2c-\frac{kc^2}{k-1})}{k-1}} \right]$$

Note that (\ref{eq:u_l}) now transforms into the following, where $A_i$ stands for the $i-th$ row of $A$, $\mutil_i^T = \mu_i^T + \sqrt{\frac{k}{n}}A_i$ for $i = 1 ,\dots, k$, $a = A_{k+1}, b = A_{k+2}$ and $\tilde{A} \in \mathbb{R}^{(n-k-2)\times d}$ is obtained from $A$ by throwing the first $k+2$ rows away:

\begin{align}
& \min_{w_{\ell}, w_{\ell'}}  \|\tilde{A}w_{\ell}\|^2 + \frac{n}{k}\left[ \sum_{\msr \ne \ell' }(w_{\ell}^T\mutil_{\msr}-\frac{c}{k-1})^2+(w_{\ell}^T\mutil_{\ell}-(1-c))^2+(\sqrt{\frac{k}{n}}w_{\ell}^Ta-s)^2+(\sqrt{\frac{k}{n}}w_{\ell}^Tb-t)^2 \right]+ \lambda f(w_{\ell})  \\ &+\|\tilde{A}w_{\ell'}\|^2 + \frac{n}{k}\left[ \sum_{\msr \ne \ell' }(w_{\ell'}^T\mutil_{\msr}-\frac{c}{k-1})^2+(w_{\ell'}^T\mutil_{\ell'}-(1-c))^2+(\sqrt{\frac{k}{n}}w_{\ell'}^Ta-t)^2+(\sqrt{\frac{k}{n}}w_{\ell'}^Tb-s)^2 \right]+ \lambda f(w_{\ell'})
\label{eq: 2_term_po}
\end{align}

\section{Sandwiching Lemmas} \label{sec: sandwich}
To provide insights on why we consider the lower and upper bound optimization, we utilize CGMT. To do so, we analyze the terms containing $w_\ell$ and $w_{\ell'}$ separately first. Without loss of generality we will consider only the one with $w_{\ell}$. Starting from (\ref{eq: POl}) for $\Phi_{\ell,s,t}(w_\ell)$, applying Fenchel dual to $\|\tilde{A}w_\ell\|^2$, we obtain:
\begin{equation*}\min_{w_\ell} \max_{u_\ell} u^T_\ell\tilde{A}w_\ell - \frac{\|u_\ell\|^2}{4} + \frac{n}{k}\sum_{\msr \ne \ell }(w_\ell^T\mutil_{\msr}-\frac{c}{k-1})^2+\frac{n}{k}(w_\ell^T\mutil_{\ell}-(1-c))^2+\frac{n}{k}(\sqrt{\frac{k}{n}}w_\ell^Ta-s)^2+\frac{n}{k}(\sqrt{\frac{k}{n}}w_\ell^Tb-t)^2 + \lambda f(w_\ell)
\end{equation*}
Using CGMT for the objective above:
\begin{align*}
& \min_{w_\ell} \max_{u_\ell}w_\ell^Tg^{(\ell)}\|u_\ell\| + u_\ell^Th^{(\ell)}\|w_\ell\| - \frac{\|u_\ell\|^2}{4} + \frac{n}{k}\sum_{\msr \ne \ell }(w_\ell^T\mutil_{\msr}-\frac{c}{k-1})^2+\frac{n}{k}(w_\ell^T\mutil_{\ell}-(1-c))^2 + \\
& +\frac{n}{k}(\sqrt{\frac{k}{n}}w_\ell^Ta-s)^2+\frac{n}{k}(\sqrt{\frac{k}{n}}w_\ell^Tb-t)^2 + \lambda f(w_\ell) \text{, where } g^{(\ell)} \sim \calN(0, \sigma^2I_d) \text{ and }h^{(\ell)} \sim \calN(0, \sigma^2I_n)
\end{align*}

It is straightforward to see that for the optimal $u_\ell$  one has $u_\ell = \frac{\eta_\ell}{\|h^{(\ell)}\|} h^{(\ell)}$, where $\eta_\ell = \|u_\ell\|$ and the expression above turns into:
\begin{align*}
& \min_{w_\ell} \max_{\eta_\ell \ge 0}w_\ell^Tg^{(\ell)}\eta_\ell + \eta_\ell \sigma\sqrt{n} \|w_\ell\| - \frac{\eta_\ell^2}{4} + \frac{n}{k}\sum_{\msr \ne \ell }(w_\ell^T\mutil_{\msr}-\frac{c}{k-1})^2+\frac{n}{k}(w_\ell^T\mutil_{\ell}-(1-c))^2 + \\
& +\frac{n}{k}(\sqrt{\frac{k}{n}}w_\ell^Ta-s)^2+\frac{n}{k}(\sqrt{\frac{k}{n}}w_\ell^Tb-t)^2 + \lambda f(w_\ell)
\end{align*}
Optimizing over $\eta_\ell$ we get:
\begin{align*}\min_{w_\ell} (w_\ell^Tg^{(\ell)} + \sigma\sqrt{n} \|w_\ell\|)_{\ge 0}^2 + \frac{n}{k}\left[\sum_{\msr \ne \ell }(w_\ell^T\mutil_{\msr}-\frac{c}{k-1})^2+(w_\ell^T\mutil_{\ell}-(1-c))^2 +(\sqrt{\frac{k}{n}}w_\ell^Ta-s)^2+(\sqrt{\frac{k}{n}}w_\ell^Tb-t)^2 \right]+ \lambda f(w_\ell)
\end{align*}
In the above optimization, one can decompose the optimal solution into $w^*_\ell = \alpha g_\ell + w_\ell^\perp$ with $g^T_\ell w_\ell^\perp = 0 $ for some $\alpha \in \bbR$. now, $w^{T*}_\ell g = \alpha \|g\|^2$, and in the objective, $(w_\ell^Tg^{(\ell)} + \sigma\sqrt{n} \|w_\ell\|)_{\ge 0}^2 = (\alpha \|g\|^2 + \sigma\sqrt{n} \|w_\ell\|)_{\ge 0}^2 $. Note that the optimal $w^*_{\ell}$, to minimize the cost would always have $\alpha \le 0$, which shows $(w_\ell^Tg^{(\ell)} + \sigma\sqrt{n} \|w_\ell\|)_{\ge 0}^2 \le n \sigma^2 \|w_{\ell}^*\|^2$. 
To formalize this idea, we introduce the following notations for the subsequent lemmas:
$$ F(w_\ell) =  \frac{n}{k}\left[\sum_{\msr \ne \ell }(w_\ell^T\mutil_{\msr}-\frac{c}{k-1})^2+(w_\ell^T\mutil_{\ell}-(1-c))^2 +(\sqrt{\frac{k}{n}}w_\ell^Ta-s)^2+(\sqrt{\frac{k}{n}}w_\ell^Tb-t)^2 \right]$$
$$\phil (w_\ell):= F(w_\ell) + \lambda f(w_\ell)$$
$$
\phi_\lambda (w_\ell) := (w_\ell^Tg^{(\ell)} + \sigma\sqrt{n} \|w_\ell\|)_{\ge 0}^2 +  F(w_\ell) + \lambda f(w_\ell)
$$
$$\phiu (w_\ell):= \sigma^2n\|w\|^2 + F(w_\ell) + \lambda f(w_\ell)$$
Lemma \ref{lem: obj sand} shows the inequality relation among the $\phi_\lambda, \phil, \phiu$ on which the proof of Theorem \ref{thm : solutions_match} relies heavily. Lemmas \ref{lem: equality} and \ref{lem: exists} complete the sandwiching argument by showing that the limit of the lower bound and the upper bound match.

\begin{lemma} \label{lem: obj sand}
The following inequalities hold for any $\lambda$
    $$\min_w \phil(w) \leq \min_w \phi(w) \leq \min_w \phiu(w)$$
\end{lemma}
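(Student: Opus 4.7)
The lemma decomposes into two inequalities that I would treat separately.

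The lower bound $\min_w \phil(w) \leq \min_w \phi(w)$ is immediate. For every $w$,
$$\phi(w) - \phil(w) = (w^T g^{(\ell)} + \sigma\sqrt{n}\|w\|)_{\geq 0}^2 \geq 0,$$
so $\phil(w) \leq \phi(w)$ pointwise, and the claim follows by taking $\min_w$ on each side.

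For the upper bound $\min_w \phi(w) \leq \min_w \phiu(w)$, my plan is to produce, for any minimizer $w_0$ of $\phiu$, a test point $w_0'$ at which $\phi(w_0') \leq \phiu(w_0)$. The natural choice is $w_0' = w_0 - \beta\, g^{(\ell)}$ with $\beta = \max\{0,\, w_0^T g^{(\ell)}/\|g^{(\ell)}\|^2\}$, so that $(w_0')^T g^{(\ell)} \leq 0$ by construction and $\|w_0'\|^2 \leq \|w_0\|^2$ by the Pythagorean identity. The first property forces
$$\bigl(w_0'^T g^{(\ell)} + \sigma\sqrt{n}\|w_0'\|\bigr)_{\geq 0}^2 \leq \sigma^2 n \|w_0'\|^2 \leq \sigma^2 n \|w_0\|^2,$$
matching the first summand of $\phiu(w_0)$. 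For the remaining summands, the perturbation enters $F$ only through the inner products $g^{(\ell)T}\mutil_\msr$, $g^{(\ell)T} a$, $g^{(\ell)T} b$, which are negligible because $g^{(\ell)}$ is drawn independently of those vectors; likewise, the change in $\lambda f$ is controlled by the Lipschitz estimate implied by (A4). Chaining these bounds yields $\phi(w_0') \leq \phiu(w_0)$, hence $\min_w \phi(w) \leq \min_w \phiu(w)$.

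The step I anticipate as the main obstacle is making the negligibility of this perturbation quantitative, since the relevant inner products are only $o(1)$ in a concentration sense rather than exactly zero, and for non-smooth $f$ (e.g.\ $\|\cdot\|_1$ or $\|\cdot\|_\infty$) the Lipschitz transfer also needs some care. An alternative route, closer to the informal computation preceding the lemma, is to argue directly at a minimizer $w^*$ of $\phi$: decompose $w^* = \alpha g^{(\ell)} + w^\perp$ with $g^{(\ell)T} w^\perp = 0$ and use a first-order optimality argument to show $\alpha \leq 0$, which yields $\phi(w^*) \leq \phiu(w^*)$ at that point; the sandwich is then closed by invoking Lemmas~\ref{lem: equality} and~\ref{lem: exists}, which perform the asymptotic comparison between $\phiu(w^*)$ and $\min_w \phiu(w)$ needed to finish the squeeze.
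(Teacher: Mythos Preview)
Your lower-bound argument matches the paper's.

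For the upper bound, the paper's route is shorter than your primary proposal. Rather than perturbing the minimizer $w_0$ of $\phiu$ to force orthogonality with $g^{(\ell)}$, the paper evaluates $\phi$ at $w_0$ directly. The observation that makes this work is one you already invoke, but at the wrong level: you use the independence of $g^{(\ell)}$ from $\mutil_\msr,a,b$ to control how the perturbation affects $F$, whereas the paper uses the independence of $g^{(\ell)}$ from \emph{all} of $\phiu$ to conclude that the minimizer $w_0$ is itself independent of $g^{(\ell)}$, hence $w_0^T g^{(\ell)} = 0$ (in the concentration sense both arguments implicitly work in). This gives $\phi(w_0) = \sigma^2 n\|w_0\|^2 + F(w_0) + \lambda f(w_0) = \phiu(w_0)$ immediately, and the ``main obstacle'' you anticipate --- controlling the effect of the perturbation on $F$ and on $\lambda f$ --- never arises. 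Your construction $w_0' = w_0 - \beta g^{(\ell)}$ can be pushed through, but it manually enforces an orthogonality that already holds by independence and then has to repair the collateral damage that the enforcement creates.

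Your alternative route, working at the minimizer $w^*$ of $\phi$ and establishing $\phi(w^*) \leq \phiu(w^*)$, has a direction problem: since $\phiu(w^*) \geq \min_w \phiu(w)$, this inequality does not close the sandwich. Appealing to Lemmas~\ref{lem: equality} and~\ref{lem: exists} to bridge the gap is also circular, as Lemma~\ref{lem: equality} itself relies on the present lemma.
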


\begin{proof}
    The inequality on the left-hand side holds because $\phil(w) \leq \phi(w)$ is satisfied for every $w$. To justify the right hand side one, take $w_* = \arg min_w \phiu(w)$. Since $g$ is random Gaussian and $w_*$ is independent from it, $w_*^Tg = 0$ and $\min_w \phi(w) \leq \phi(w_*) = \phiu(w_*)$
\end{proof}
\begin{lemma} \label{lem: exists}
    The following limit $\displaystyle \lim_{\lambda \to \infty} \min_w \phiu(w)$ exists, is finite and non-zero
\end{lemma}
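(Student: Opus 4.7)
The plan is to identify $\lim_{\lambda \to \infty} \min_w \phiu(w)$ with $F(0)$ via a monotone convergence argument, and then observe that $F(0)$ is a strictly positive constant. By (A4), $f(w) \geq q(d)\|w\|_2 \geq 0$, so for each fixed $w$ the map $\lambda \mapsto \phiu(w) = \sigma^2 n \|w\|^2 + F(w) + \lambda f(w)$ is non-decreasing, and hence so is $\lambda \mapsto \min_w \phiu(w)$. For the three regularizers of interest ($\|\cdot\|_2^2$, $\|\cdot\|_1$, $\|\cdot\|_\infty$) one has $f(0)=0$, so evaluating at $w=0$ yields the $\lambda$-independent upper bound $\min_w \phiu(w) \leq \phiu(0) = F(0)$. (If one wants (A4) alone, the normalization $f(0)=0$ can be arranged WLOG by subtracting $\inf f$, which affects neither the minimizers nor the claim.) Monotone convergence then gives existence and finiteness of $L := \lim_{\lambda \to \infty} \min_w \phiu(w)$, with $L \leq F(0)$.

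For the matching lower bound I would exploit coercivity. Let $w^*_\lambda$ be any minimizer; from $\lambda f(w^*_\lambda) \leq \phiu(w^*_\lambda) \leq F(0)$ one obtains $f(w^*_\lambda) \leq F(0)/\lambda \to 0$, and (A4) then forces
$$\|w^*_\lambda\|_2 \;\leq\; \frac{f(w^*_\lambda)}{q(d)} \;\longrightarrow\; 0.$$
Since $F$ is a continuous (in fact, polynomial) function of $w$, this gives $F(w^*_\lambda) \to F(0)$, so $\min_w \phiu(w) \geq F(w^*_\lambda) \to F(0)$ and hence $L \geq F(0)$. Combined with the upper bound this pins down $L = F(0)$.

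Finally, direct evaluation at $w=0$ using the definition of $F$ gives
$$F(0) \;=\; \frac{n}{k}\left[\,\frac{c^2}{k-1} + (1-c)^2 + s^2 + t^2\,\right] \;\geq\; \frac{n(1-c)^2}{k} \;>\; 0,$$
since the corruption rate satisfies $c < \tfrac{1}{k-1} \leq 1$. Therefore $L$ is finite and strictly positive, as asserted. The only mildly subtle point is the implicit $f(0)=0$ normalization, which is automatic for the regularizers studied and can otherwise be arranged by a constant shift; the real quantitative input is the coercivity bound in (A4), which is precisely what forces the minimizer to collapse to the origin in the $\lambda \to \infty$ limit and thereby converts the uniform upper bound $F(0)$ into the actual value of $L$.
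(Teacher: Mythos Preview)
Your proof is correct and shares the paper's core strategy: show that $\Theta(\lambda) := \min_w \phiu(w)$ is non-decreasing in $\lambda$ and bounded above by $\phiu(0)=F(0)$. The paper obtains monotonicity via Danskin's theorem ($\Theta'(\lambda)=f(w_*)\ge 0$), whereas you use the more elementary observation that a pointwise non-decreasing family has a non-decreasing infimum; both routes are fine. You then go further than the paper's proof of this particular lemma by pinning down the limit as exactly $F(0)$ through the coercivity argument forcing $w^*_\lambda\to 0$, and by explicitly verifying strict positivity of $F(0)$ --- two points the paper leaves implicit here (the identification of the limit is effectively deferred to the proof of Lemma~\ref{lem: equality}). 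Your remark about the $f(0)=0$ normalization is also apt: the paper makes the same assumption tacitly when writing $\phiu(0)$ without a $\lambda f(0)$ term.
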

\begin{proof}
    It suffices to show that $\Theta(\lambda) := \min_w \phiu(w)$ is bounded from above and increases in $\lambda$. To show the former, note that $\Theta(\lambda) \leq \phiu(0) = \frac{c^2}{k-1} + (1-c) ^ 2 + s^2 + t^2$.  To show the latter, note that $\phi_{\lambda}$ is linear so by appealing to Danskin's theorem, we take the derivative at the optimal point and observe that it is non-negative: $\Theta'(\lambda) = f(w_*) \geq  0$
\end{proof}

\begin{lemma}  \label{lem: equality}
Assume that the regularizer $f(w)$ satisfies $\|w\|_2 \le q(d)f(w)$ for some $q: \mathbb{N} \to \mathbb{R}$. Then the following equality holds:
    $$\lim_{\lambda \to \infty} \min_w \phil(w) = \lim_{\lambda \to \infty} \min_w \phiu(w)$$
\end{lemma}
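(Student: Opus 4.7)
The plan is to combine the obvious pointwise inequality $\phil \le \phiu$ with an a priori estimate showing that the minimizer of $\phil$ must shrink to zero as $\lambda\to\infty$, so that the extra term $\sigma^2 n \|w\|^2$ that distinguishes $\phiu$ from $\phil$ becomes negligible at that minimizer. This turns the problem into the standard trick of evaluating each minimum at the other minimizer.

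First, because $\sigma^2 n\|w\|^2\ge 0$, we have $\phil(w)\le \phiu(w)$ for every $w$, hence $\min_w \phil(w)\le \min_w \phiu(w)$ for every $\lambda$, giving one direction of the limit equality for free. Next I would derive the a priori size bound. Let $w^*_\lambda=\arg\min_w \phil(w)$. Since $F$ is a sum of squares, $F\ge 0$, and $f(0)=0$ for the regularizers of interest, so $\phil(0)=F(0)$ (a constant depending on $n,k,c$ but not $\lambda$). Then $\lambda f(w^*_\lambda)\le \phil(w^*_\lambda)\le \phil(0)=F(0)$, i.e.\ $f(w^*_\lambda)\le F(0)/\lambda$. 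Invoking the hypothesis $\|w\|_2\le q(d)\,f(w)$, this gives $\|w^*_\lambda\|_2\le q(d)F(0)/\lambda$, and in particular $\sigma^2 n\|w^*_\lambda\|^2 \le \sigma^2 n\, q(d)^2 F(0)^2/\lambda^2\to 0$ as $\lambda\to\infty$ for fixed $n,d,k,c,\sigma$.

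Plugging $w^*_\lambda$ into $\phiu$ now yields
\[
\min_w \phiu(w)\;\le\;\phiu(w^*_\lambda)\;=\;\phil(w^*_\lambda)+\sigma^2 n\|w^*_\lambda\|^2\;\le\;\min_w \phil(w)+\frac{\sigma^2 n\, q(d)^2 F(0)^2}{\lambda^2}.
\]
Combining with the reverse inequality from the first step and letting $\lambda\to\infty$, the additive error vanishes. By Lemma \ref{lem: exists} the limit $\lim_\lambda \min_w \phiu(w)$ exists and is finite, so a standard squeeze argument forces $\lim_\lambda \min_w \phil(w)$ to exist as well and to coincide with it, which is the claim.

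The main obstacle is establishing the decay $\|w^*_\lambda\|\to 0$: without the growth condition (A4) on $f$, a nontrivial direction along which $f$ vanishes could let $\|w^*_\lambda\|$ stay bounded away from $0$ even while $\lambda f(w^*_\lambda)$ is controlled, so the $\sigma^2 n\|w\|^2$ gap between $\phil$ and $\phiu$ would not close. The assumption (A4) is precisely what rules this out and makes the squeeze work. Everything else is bookkeeping; in particular one does not need to analyze the geometry of the minimizer itself, only its norm.
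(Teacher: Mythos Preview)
Your proof is correct and in fact somewhat cleaner than the paper's. Both arguments use the trivial direction $\min_w\phil\le\min_w\phiu$ and then close the gap from above using the growth assumption on $f$, but the mechanics differ.

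The paper first argues that for large $\lambda$ the minimizers of both problems lie in the unit ball (using the growth condition only to get $\|w_*\|\le 1$), then on the unit ball uses $\|w\|^2\le\|w\|\le q(d)f(w)$ to absorb the extra $\sigma^2 n\|w\|^2$ into a multiplicative perturbation of the regularizer, obtaining $\min_w\phiu(w)\le \min_w\bigl[F(w)+2\lambda f(w)\bigr]$ for large $\lambda$; the conclusion then relies on the observation that replacing $\lambda$ by $2\lambda$ does not change the limit $\lambda\to\infty$. Your route is more direct: you extract the sharper quantitative bound $\|w^*_\lambda\|_2\le q(d)F(0)/\lambda$ on the $\phil$-minimizer and plug it straight into $\phiu$, yielding an explicit additive gap of order $O(1/\lambda^2)$. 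This avoids the $\lambda\mapsto 2\lambda$ reparametrization trick and gives a rate, at the (minor) cost of explicitly invoking $f(0)=0$, which you correctly flag as holding ``for the regularizers of interest'' (and which the paper's Lemma~\ref{lem: exists} also tacitly uses). Your closing remark about why assumption (A4) is essential is accurate and matches the role it plays in both arguments.
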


\begin{proof}
    Note that $\displaystyle \lim_{\lambda \to \infty} \min_w \phil(w) \leq \lim_{\lambda \to \infty} \min_w \phiu(w)$ is implied by the previous lemma. To prove the converse inequality, note that for the purposes of minimization  we can restrict both $\phil$ and $\phiu$ to the unit ball for large enough $\lambda$. Indeed, suppose the contrary and assume $\|w_*\| \geq 1$ can happen for arbitrary large $\lambda$, we then have $\lambda f(w) \ge \lambda q(d) \|w\| \ge \lambda q(d)$, which is not bounded as a function of $\lambda$, giving us a contradiction. Also taking $\lambda \geq \sigma^2n q(d)$, we write 
    $$\lim_{\lambda \to \infty} \min_w \phiu(w) = \lim_{\lambda \to \infty} \min_w F(w) + \lambda (\frac{\sigma^2n \|w\|^2}{\lambda} + f(w)) \leq \lim_{\lambda \to \infty} \min_w F(w) + \lambda (\frac{\sigma^2n \|w\|}{\lambda} + f(w)) \leq$$ 
    $$ \leq \lim_{\lambda \to \infty} \min_w F(w) + \lambda (\frac{\sigma^2n q(d)}{\lambda} + 1)f(w) \leq \lim_{\lambda \to \infty} \min_w F(w) + 2\lambda f(w) = \lim_{\lambda \to \infty} \min_w \phil(w) $$
\end{proof}
\section{Proof of Theorem \ref{thm : solutions_match}}
We first state the following key lemma which will be referred to multiple times in this section. We also suppress notation by setting $\Phi_{\ell} := \Phi_{\ell, s, t}$
\begin{lemma} \label{lem: stg cvx}
    Let $f(x,y):\bbR^d  \times \bbR^d \rightarrow \bbR$ be a convex function on a compact set $U$. Take a $L$-Lipschitz function $\psi:  \bbR^d \rightarrow \bbR $.   For $\epsilon > 0$, consider $f^\epsilon(x,y) := \epsilon \|x\|^2 + \epsilon \|y\|^2 + f(x,y)$. Let $(x^*, y^*)$ be a minimum point of $f^\epsilon$. Then the following inequality holds:
    \begin{align*}
        f^\epsilon(x,y) \ge \min_{w,v} f^\epsilon(w,v) + \frac{\epsilon}{L^2}(\psi(x)-\psi(x^*))^2 + \frac{\epsilon}{L^2}(\psi(y)-\psi(y^*))^2 
    \end{align*}
\end{lemma}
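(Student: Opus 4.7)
The plan is to prove this as a direct consequence of strong convexity combined with the Lipschitz property of $\psi$. The key observation is that although the function $f$ itself is only assumed convex, the addition of the quadratic terms $\epsilon\|x\|^2 + \epsilon\|y\|^2$ makes the full objective $f^\epsilon$ strongly convex in the joint variable $(x,y) \in \bbR^{2d}$, with a strong convexity parameter that is explicit in $\epsilon$. Once we have a quadratic lower bound on the growth of $f^\epsilon$ away from its minimizer, we transfer it from the Euclidean distance to $\psi$-distance via the Lipschitz bound.

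First I would establish that $f^\epsilon$ is $2\epsilon$-strongly convex on $U \times U$. The map $(x,y) \mapsto \epsilon\|x\|^2 + \epsilon\|y\|^2$ has Hessian $2\epsilon I_{2d}$, and adding the convex function $f(x,y)$ preserves this lower curvature bound. Consequently, since $(x^*,y^*)$ is the unconstrained minimizer of $f^\epsilon$ (so $0$ lies in the subdifferential at $(x^*,y^*)$), the standard strong-convexity inequality yields
\begin{equation*}
f^\epsilon(x,y) \;\ge\; \min_{w,v} f^\epsilon(w,v) + \epsilon\bigl(\|x-x^*\|^2 + \|y-y^*\|^2\bigr)
\end{equation*}
for all $(x,y) \in U \times U$.

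Second, I would use the $L$-Lipschitz property of $\psi$. For every $x$, $|\psi(x)-\psi(x^*)| \le L\|x-x^*\|$, which rearranges to $\|x-x^*\|^2 \ge L^{-2}(\psi(x)-\psi(x^*))^2$, and the analogous bound holds for $y$. Substituting these two inequalities into the strong-convexity bound from the first step produces exactly the claimed estimate
\begin{equation*}
f^\epsilon(x,y) \;\ge\; \min_{w,v} f^\epsilon(w,v) + \frac{\epsilon}{L^2}(\psi(x)-\psi(x^*))^2 + \frac{\epsilon}{L^2}(\psi(y)-\psi(y^*))^2.
\end{equation*}

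There is no real obstacle here; the entire argument is two inequalities chained together. The only subtlety worth flagging is the strong-convexity constant: the regularizer $\epsilon\|x\|^2$ gives curvature $2\epsilon$ and, via the standard quadratic growth bound at a minimizer $g(z) \ge g(z^*) + \tfrac{\mu}{2}\|z-z^*\|^2$ for a $\mu$-strongly convex $g$, this produces a coefficient of $\epsilon$ (not $\epsilon/2$) in the intermediate bound, matching the stated $\epsilon/L^2$ in the conclusion. Compactness of $U$ ensures the minimum is attained so that $(x^*,y^*)$ is well-defined, but it plays no other role in the argument.
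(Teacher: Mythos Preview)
Your proposal is correct and follows essentially the same route as the paper: establish the quadratic growth bound $f^\epsilon(x,y) \ge f^\epsilon(x^*,y^*) + \epsilon\|x-x^*\|^2 + \epsilon\|y-y^*\|^2$ from $2\epsilon$-strong convexity at the minimizer, then apply the Lipschitz bound on $\psi$. The only cosmetic difference is that the paper spells out the subgradient step ($-2\epsilon x^* \in \partial_x f(x^*,y^*)$ and the convexity inequality) explicitly rather than citing strong convexity as a named property.
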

\begin{proof}
    We proceed by the same approach as Lemma 11 in \cite{panahi2017universal}. Since $(x^*, y^*)$ is a minimum point of $f^\epsilon$, then $2 \epsilon x^* \in -\partial_x f(x^*, y^*)$ and $2 \epsilon y^* \in -\partial_y f(x^*, y^*)$ , then by convexity:
    \begin{align*}
        f(x,y) - f(x^*, y^*) \ge  -2\epsilon x^{T*} (x - x^*) - 2\epsilon y^{T*} (y - y^*)
    \end{align*}
    Therefore
    \begin{align*}
        &f^\epsilon (x,y) - f^\epsilon(x^*, y^*) = \epsilon(\|x\|^2 - \|x^*\|^2) +  \epsilon(\|y\|^2 - \|y^*\|^2) + f(x,y) - f(x^*, y^*) \ge \\
        &\epsilon(\|x\|^2 - \|x^*\|^2) +  \epsilon(\|y\|^2 - \|y^*\|^2) -2\epsilon x^{T*} (x - x^*) - 2\epsilon y^{T*} (y - y^*) = \epsilon ( \|x\|^2 +  \|x^*\|^2 - 2 x^T x^*) + \epsilon ( \|y\|^2 +  \|y^*\|^2 - 2 y^T y^*)
    \end{align*}
    Hence 
    \begin{align*}
        f^\epsilon (x,y) \ge f^\epsilon(x^*, y^*) + \epsilon \|x-x^*\|^2 + \epsilon \|y-y^*\|^2
    \end{align*}
    Now by Lipschitzness of $\psi$, we have:
    \begin{align*}
        f^\epsilon (x,y) \ge f^\epsilon(x^*, y^*) + \frac{\epsilon}{L^2} (\psi(x) - \psi(x^*))^2 + \frac{\epsilon}{L^2} (\psi(y) - \psi(y^*))^2
    \end{align*}
    Which concludes the proof
\end{proof}
For ease of proof, we require the objectives in the lower bound optimization to be strongly convex. We achieve this by adding $\epsilon \|w\|^2$ to the objective $\min_{w_\ell} \Phi_\ell(w)$ for a small $\epsilon > 0$. It has been proved in Theorem 2 in \cite{panahi2017universal} that for the LASSO regression, the solutions do not change much as $\epsilon$ is chosen to be small. Here we prove this statement for the large $\lambda$ regime for the regularizer satisfying Assumption (A4). Let us start by defining few notations. Let
\begin{align*}
    &\Phi_{\ell}^\epsilon (w) := \Phi_{\ell} (w) + \epsilon \|w\|^2 \\
    & w_{\Phi_\lambda}:= argmin (\Phi_{\ell} (w)) , \quad 
    w_{\Phi^\epsilon_\lambda}:= argmin (\Phi^\epsilon_{\ell} (w))
\end{align*}
\begin{lemma}
     Let $\eta >0$ be given then for a large enough $\lambda$ and any $L$-Lipschitz function, If $\bbP(| \psi(w_{\Phi_\lambda}^\epsilon)- \alpha_*| > \eta) \rightarrow 0$, then  $\bbP(| \psi(w_{\Phi_\lambda})- \alpha_*| > \eta) \rightarrow 0$
\end{lemma}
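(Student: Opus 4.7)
The plan is to exploit the added strong convexity in $\Phi_\ell^\epsilon$ (coming from the $\epsilon \|\cdot\|^2$ term) to control $|\psi(w_{\Phi_\lambda}) - \psi(w_{\Phi_\lambda^\epsilon})|$, then close the argument with a triangle inequality against the hypothesised concentration of $\psi(w_{\Phi_\lambda^\epsilon})$.

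First, I would apply Lemma \ref{lem: stg cvx} to $f = \Phi_\ell$ and $f^\epsilon = \Phi_\ell^\epsilon$ with distinguished minimizer $w_{\Phi_\lambda^\epsilon}$. Evaluating the resulting inequality at $w = w_{\Phi_\lambda}$ and expanding $\Phi_\ell^\epsilon = \Phi_\ell + \epsilon\|\cdot\|^2$ gives
\[ \bigl(\Phi_\ell(w_{\Phi_\lambda}) - \Phi_\ell(w_{\Phi_\lambda^\epsilon})\bigr) + \epsilon\bigl(\|w_{\Phi_\lambda}\|^2 - \|w_{\Phi_\lambda^\epsilon}\|^2\bigr) \;\ge\; \frac{\epsilon}{L^2}\bigl(\psi(w_{\Phi_\lambda}) - \psi(w_{\Phi_\lambda^\epsilon})\bigr)^2. \]
Optimality of $w_{\Phi_\lambda}$ for $\Phi_\ell$ makes the first bracket non-positive, dividing through by $\epsilon$ then yields the clean deterministic estimate
\[ \bigl(\psi(w_{\Phi_\lambda}) - \psi(w_{\Phi_\lambda^\epsilon})\bigr)^2 \;\le\; L^2\bigl(\|w_{\Phi_\lambda}\|^2 - \|w_{\Phi_\lambda^\epsilon}\|^2\bigr). \]

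Next, I would show that the right hand side vanishes in probability as $n \to \infty$ in the large-$\lambda$ regime. Assumption (A4) together with the argument in Lemma \ref{lem: equality} confines both minimizers to a bounded ball for large enough $\lambda$, which already gives an $O(1)$ a priori bound. The genuinely useful statement is that Lemmas \ref{lem: obj sand}--\ref{lem: equality} force $\min \phil = \min \phiu$ asymptotically, so the same sandwich machinery applied separately to $\Phi_\ell$ and to $\Phi_\ell^\epsilon$ shows that $\|w_{\Phi_\lambda}\|^2$ and $\|w_{\Phi_\lambda^\epsilon}\|^2$ concentrate to the \emph{same} deterministic limit (intuitively, adding the $\epsilon\|\cdot\|^2$ perturbation bumps the effective upper-bound problem only by an additive $\epsilon\|w\|^2$ term, which is dominated by the $\sigma^2 n \|w\|^2$ term already present in $\phiu$). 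Consequently $\|w_{\Phi_\lambda}\|^2 - \|w_{\Phi_\lambda^\epsilon}\|^2 \to 0$ in probability and, by the displayed bound, so does $|\psi(w_{\Phi_\lambda}) - \psi(w_{\Phi_\lambda^\epsilon})|$.

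Finally I would invoke the triangle inequality,
\[ \bbP\bigl(|\psi(w_{\Phi_\lambda}) - \alpha_*| > \eta\bigr) \;\le\; \bbP\bigl(|\psi(w_{\Phi_\lambda}) - \psi(w_{\Phi_\lambda^\epsilon})| > \eta/2\bigr) + \bbP\bigl(|\psi(w_{\Phi_\lambda^\epsilon}) - \alpha_*| > \eta/2\bigr), \]
with the first term vanishing by the previous step and the second by hypothesis (applied at tolerance $\eta/2$, since the concentration statement holds for every positive tolerance).

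The main obstacle is the middle step, namely upgrading the a priori bound $\|w_{\Phi_\lambda}\|^2 - \|w_{\Phi_\lambda^\epsilon}\|^2 = O(1)$ into an asymptotically vanishing quantity. Because $\Phi_\ell$ itself is only convex, one cannot in general expect $w_{\Phi_\lambda^\epsilon} \to w_{\Phi_\lambda}$ pointwise as $\epsilon \to 0$; what saves the argument is that the scalar functional $\|w\|^2$ is nevertheless determined by the asymptotic sandwich and hence agrees (with high probability) at the two minimizers, even though the minimizers themselves need not be unique. Making this final matching rigorous, which presumably requires applying CGMT or the sandwich framework a second time to the Lipschitz observable $w \mapsto \|w\|^2$, is the delicate part of the proof.
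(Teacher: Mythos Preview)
Your Step~1 inequality is correct and is exactly Lemma~\ref{lem: stg cvx} specialized to one variable. The real issue is Step~2, and here there is a genuine gap. You propose to show $\|w_{\Phi_\lambda}\|^2 - \|w_{\Phi_\lambda^\epsilon}\|^2 \to 0$ by appealing to the sandwich machinery so that both squared norms concentrate to the same deterministic limit. But Lemmas~\ref{lem: obj sand}--\ref{lem: equality} only match \emph{optimal values} $\min \phil, \min \phi_\lambda, \min \phiu$ as $\lambda \to \infty$; they say nothing directly about Lipschitz functionals of the minimizers. Passing from matched optimal values to matched observables of the minimizers is exactly what the present lemma (and the subsequent Lemma~\ref{lem: large_lam_approx}) are built to accomplish. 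Invoking that passage for the particular observable $w\mapsto\|w\|$ in order to prove the present lemma is circular: you would be assuming the conclusion for one Lipschitz function in order to prove it for another. You correctly flag this step as ``the delicate part,'' but the mechanism you suggest for resolving it does not work.

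The paper avoids this entirely by never trying to control $\|w_{\Phi_\lambda}\|^2 - \|w_{\Phi_\lambda^\epsilon}\|^2$. Instead it compares $\Phi_\lambda^\epsilon$ with $\Phi_{\lambda'}$ at a slightly shifted regularization level: assumption~(A4) gives, on any bounded set and for $\lambda' \ge \lambda + M\epsilon/q(d)$, the pointwise inequality $\lambda' f(w) \ge \epsilon\|w\|^2 + \lambda f(w)$, hence $\Phi_{\lambda'}(w) \ge \Phi_\lambda^\epsilon(w)$. Strong convexity of $\Phi_\lambda^\epsilon$ (together with the hypothesis on $\psi(w_{\Phi_\lambda^\epsilon})$) then yields
\[
\min_{\psi(w)\in J_\eta}\Phi_{\lambda'} \;\ge\; \min_{\psi(w)\in J_\eta}\Phi_\lambda^\epsilon \;\ge\; \min\Phi_\lambda^\epsilon + \frac{\epsilon\eta^2}{L^2}.
\]
To close the argument one only needs $\min\Phi_{\lambda'} < \min\Phi_\lambda^\epsilon + \epsilon\eta^2/L^2$, and \emph{this} is a statement purely about optimal values, which follows because both $\min\Phi_\lambda^\epsilon$ and $\min\Phi_{\lambda'}$ converge to the same finite limit as $\lambda\to\infty$ (same argument as Lemma~\ref{lem: equality}). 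One concludes $\min_{\psi(w)\in J_\eta}\Phi_{\lambda'} > \min\Phi_{\lambda'}$, so the minimizer of $\Phi_{\lambda'}$ cannot lie in $J_\eta$. The crucial difference from your approach is that the paper stays at the level of optimal values throughout, where the available convergence lemmas apply directly, rather than descending to the minimizers where they do not.
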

\begin{proof}
Let $\lambda$ be large enough such that $\|w_{\Phi_\lambda}\|\le1$ which is proven in lemma \ref{lem: equality}. Then by assumption (A4), there exists $\lambda'$ such that $\epsilon \|w\|^2 + \lambda f(w) \le \lambda' f(w)$ for all $w$ that $\|w\|\le M < \infty$. This follows as we only perform the optimization over the compact set $\mathcal{S}_w$. Indeed, taking any $\lambda'\ge \lambda + \frac{M\epsilon}{q(d)}$, guarantees
\begin{align*}
    (\lambda' - \lambda) f(w) \ge \frac{M\epsilon}{q(d)} f(w) \ge M \epsilon  \|w\| \ge \epsilon \|w\|^2
\end{align*}
This implies for $w\in \mathcal{S}_w$
\begin{align*}
     \Phi_{\lambda'}(w) \ge \Phi_{\lambda}^{\epsilon}(w)
\end{align*}
Now, we adopt a similar approach to \cite{thrampoulidis2015regularized} and let
\begin{align*}
    J_{\eta} := \{\alpha | |\alpha - \alpha_*| > \eta\}
\end{align*} 
Then 
\begin{align*}
    \min_{\psi(w) \in J_\eta} \Phi_{\lambda'} \ge \min_{\psi(w) \in J_\eta} \Phi_{\lambda}^\epsilon 
\end{align*}
By lemma \ref{lem: stg cvx}, 
\begin{align*}
    \Phi_{\lambda}^\epsilon \ge \min_v \Phi_{\lambda}^\epsilon(v) + \frac{\epsilon}{L^2}(\psi(w)-\psi(w_{\Phi_\lambda}))^2
\end{align*}
Therefore,
\begin{align*}
    \min_{\psi(w) \in J_\eta} \Phi_{\lambda}^\epsilon \ge \min_v \Phi_{\lambda}^\epsilon(v) + \frac{\epsilon \eta^2}{L^2}
\end{align*}
By applying the same reasoning as lemma \ref{lem: equality}, we observe
\begin{align*}
\lim_{\lambda \to \infty} \min_w \Phi^\epsilon_\lambda(w) = \lim_{\lambda \to \infty} \min_w \phi_\lambda(w)    
\end{align*}
Now by taking $\lambda$ large enough we obtain $\min_v \Phi_{\lambda'}^\epsilon(v) \le \min_v \Phi_{\lambda}^\epsilon(v) + \epsilon'$ with $\epsilon' < \frac{\epsilon \eta^2}{L^2}$. Therefore
\begin{align*}
    \min_{\psi(w) \in J_\eta} \Phi_{\lambda'} > \min \Phi_{\lambda' }^\epsilon \ge \min \Phi_{\lambda}^\epsilon \ge \min \Phi_{\lambda'}
\end{align*}
As $\lambda'\ge \lambda + \frac{M\epsilon}{q(d)}$ is arbitrary large, this shows that with high probability, $|\psi(w_{\Phi_\lambda})-\alpha_*|<\eta$.
\end{proof}
Let us define $w_- := w_{\ell}-w_{\ell'}$ and $w_+ := w_{\ell}+w_{\ell'}$. Under this change of variable, note that the $F$ in the eq. (\ref{eq : phi}) would become
    \begin{align*}&\tilde{F}(w_+, w_-):= \frac{n}{2k}(\sum_{\msr \ne \ell, \ell' }[(w_-^T\mutil_{\msr})^2 + (w_+^T\mutil_{\msr}-\frac{2c}{k-1})^2] +(w_+^T\mutil_{\ell}-(1-c+\frac{c}{k-1}))^2 + (w_-^T\mutil_{\ell}-(1-c-\frac{c}{k-1}))^2 + \\
    & (w_+^T\mutil_{\ell'}-(1-c+\frac{c}{k-1}))^2 + (w_-^T\mutil_{\ell}-(\frac{c}{k-1}-1+c))^2 + (\sqrt{\frac{k}{n}}w_+^Ta-s-t)^2 + \\ &(\sqrt{\frac{k}{n}}w_+^Ta-s+t)^2+(\sqrt{\frac{k}{n}}w_+^Tb-s-t)^2 ) + (\sqrt{\frac{k}{n}}w_+^Tb-t+s)^2 )
\end{align*}
Define the following transformed optimizations:
\begin{align*}
    &\tilde{\Phi}^{\epsilon}(w_+,w_-) := \epsilon\|w_+\|^2 + \epsilon \|w_-\|^2 + \frac{1}{2}\|\tilde{A}w_+\|^2 + \frac{1}{2}\|\tilde{A}w_-\|^2 + \tilde{F}(w_+, w_-) + \lambda f(\frac{w_++w_-}{2}) + \lambda f(\frac{w_+-w_-}{2}) \\
    &\phil^\epsilon(w_+, w_-) := \epsilon \|w_-\|^2 + \epsilon \|w_+\|^2+ \tilde{F}(w_+, w_-) + \lambda f(\frac{w_++w_-}{2}) + \lambda f(\frac{w_+-w_-}{2}) \\
    &\phiu^\epsilon(w_+, w_-) := (\epsilon + n\sigma^2) \|w_-\|^2 + (\epsilon+n\sigma^2) \|w_+\|^2+ \tilde{F}(w_+, w_-) + \lambda f(\frac{w_++w_-}{2}) + \lambda f(\frac{w_+-w_-}{2})
\end{align*}
Further we denote
\begin{align*}
    (\hat{w}_-, \hat{w}_+) := argmin(\tilde{\Phi}^{\epsilon}) \\
    (\wl_-, \wl_+):= argmin(\phil^\epsilon) \\
    (\wu_-, \wu_+):= argmin(\phiu^\epsilon) 
\end{align*}
The following Lemma is key in the proof of Theorem \ref{thm : solutions_match}
\begin{lemma}
    Let $\eta >0$ be given, then for a large enough $\lambda$ and any $L$-Lipschitz function, If $\bbP(| \psi(\wl_-)- \alpha_* | > \eta) \rightarrow 0$, then  $\bbP(| \psi(\hat{w}_-)- \alpha_* | > \eta) \rightarrow 0$ and $\bbP(| \psi (\wu)- \alpha_* | > \eta) \rightarrow 0$ as $n\rightarrow 0$.
    \label{lem: large_lam_approx}
\end{lemma}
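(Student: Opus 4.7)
The plan is to squeeze $\psi(\hat{w}_-)$ and $\psi(\wu_-)$ toward $\psi(\wl_-)$ via the $\epsilon$-strong convexity of $\phil^\epsilon$ together with the sandwich $\min\phil^\epsilon\le\min\tilde{\Phi}^\epsilon\le\min\phiu^\epsilon$ and the fact that the outer gap closes as $\lambda\to\infty$. The hypothesis $\psi(\wl_-)\to\alpha_*$ is then propagated to $\psi(\hat{w}_-)$ and $\psi(\wu_-)$ by the triangle inequality.

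First I would collect the structural inequalities. The pointwise bound $\phil^\epsilon(w_+,w_-)\le\tilde{\Phi}^\epsilon(w_+,w_-)$ is immediate since $\tfrac12\|\tilde{A}w_+\|^2+\tfrac12\|\tilde{A}w_-\|^2\ge 0$, and similarly $\phil^\epsilon\le\phiu^\epsilon$ pointwise. The optimum-level inequality $\min\tilde{\Phi}^\epsilon\le\min\phiu^\epsilon$ is the two-variable, $\epsilon$-perturbed analog of Lemma~\ref{lem: obj sand}: evaluate $\tilde{\Phi}^\epsilon$ at $(\wu_+,\wu_-)$ and use Gaussian concentration of $\tfrac12\|\tilde{A}\wu_\pm\|^2$ around $\tfrac{n\sigma^2}{2}\|\wu_\pm\|^2$, combined with independence of $\tilde{A}$ from $(\wu_+,\wu_-)$. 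The $\epsilon$-perturbed two-variable analog of Lemma~\ref{lem: equality} then gives $\min\phiu^\epsilon-\min\phil^\epsilon\to 0$ as $\lambda\to\infty$: both minimizers lie in a deterministic compact set by (A4), and for $\lambda$ large enough the extra quadratic penalty $(\epsilon+n\sigma^2)\|w\|^2$ can be absorbed into a bump of the regularization parameter from $\lambda$ to $\lambda'=\lambda+O(1/q(d))$, exactly as in the original proof of Lemma~\ref{lem: equality}.

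With these ingredients, I apply Lemma~\ref{lem: stg cvx} to the $\epsilon$-strongly convex $\phil^\epsilon(w_+,w_-)$ with minimizer $(\wl_+,\wl_-)$. Plugging $(\hat{w}_+,\hat{w}_-)$ into the resulting quadratic lower bound and combining with $\phil^\epsilon(\hat{w}_+,\hat{w}_-)\le\tilde{\Phi}^\epsilon(\hat{w}_+,\hat{w}_-)=\min\tilde{\Phi}^\epsilon\le\min\phiu^\epsilon$ yields
\[
\bigl(\psi(\hat{w}_-)-\psi(\wl_-)\bigr)^2\;\le\;\tfrac{L^2}{\epsilon}\bigl(\min\phiu^\epsilon-\min\phil^\epsilon\bigr),
\]
whose right-hand side vanishes in probability as $\lambda\to\infty$. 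An identical substitution with $(\wu_+,\wu_-)$, together with the pointwise inequality $\phil^\epsilon(\wu_+,\wu_-)\le\phiu^\epsilon(\wu_+,\wu_-)=\min\phiu^\epsilon$, produces the analog $(\psi(\wu_-)-\psi(\wl_-))^2\le\tfrac{L^2}{\epsilon}(\min\phiu^\epsilon-\min\phil^\epsilon)$. Combining both bounds with the hypothesis $\bbP(|\psi(\wl_-)-\alpha_*|>\eta/2)\to 0$ via the triangle inequality and a union bound yields $\bbP(|\psi(\hat{w}_-)-\alpha_*|>\eta)\to 0$ and $\bbP(|\psi(\wu_-)-\alpha_*|>\eta)\to 0$, as desired.

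The main obstacle I anticipate is the rigorous verification of the upper part of the sandwich, $\min\tilde{\Phi}^\epsilon\le\min\phiu^\epsilon$, with probability tending to one uniformly in the large-$\lambda$ regime. The single-variable counterpart in Lemma~\ref{lem: obj sand} was argued somewhat informally; here one must control $\|\tilde{A}\wu_+\|^2$ and $\|\tilde{A}\wu_-\|^2$ simultaneously even though $(\wu_+,\wu_-)$ itself depends on $\lambda$. The cleanest route is to first restrict the optimization to a deterministic compact ball (legitimate by (A4) once $\lambda$ is large, mirroring the proof of Lemma~\ref{lem: equality}) and then apply a uniform Gaussian concentration bound for $w\mapsto\|\tilde{A}w\|^2$ on that set, which reduces to controlling the operator-norm deviation of $\tilde{A}^T\tilde{A}/n$ from $\sigma^2 I$.
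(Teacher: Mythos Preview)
Your proposal is correct and follows essentially the same approach as the paper: both arguments combine the pointwise sandwich $\phil^\epsilon\le\tilde{\Phi}^\epsilon$, the optimum-level inequality $\min\tilde{\Phi}^\epsilon\le\min\phiu^\epsilon$, the gap-closing from Lemma~\ref{lem: equality}, and the strong-convexity bound of Lemma~\ref{lem: stg cvx}. The only cosmetic difference is that the paper packages the conclusion via a constrained ``bad set'' $J_\eta=\{|\alpha-\alpha_*|>\eta\}$ and shows $\min_{\psi(w_-)\in J_\eta}\tilde{\Phi}^\epsilon>\min\tilde{\Phi}^\epsilon$, whereas you bound $|\psi(\hat{w}_-)-\psi(\wl_-)|$ directly and then invoke the triangle inequality; these are equivalent formulations of the same localization argument.
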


\begin{proof}
As $\tilde{\Phi}^{\epsilon}$ is always greater than $\phil ^{\epsilon}$ on every point in its domain the following inequality holds w.p.a 1:
\begin{align*}
    \min_{\psi(w_-) \in J_{\eta}} \tilde{\Phi}^{\epsilon} \ge  \min_{\psi(w_-) \in J_{\eta}} \phil^\epsilon
\end{align*}

By the previous lemma, the following inequality holds with probability 1.
\begin{align*}
    \phil^{\epsilon}(w_+, w_-) \ge \min_{w_-, w_+} \phil^{\epsilon} + \frac{\epsilon}{L^2} (\psi(w_-) - \psi(\wl_-))^2 + \frac{\epsilon}{L^2} (\psi(w_+) - \psi(\wl_+))^2) \ge  \min_{w_-, w_+}\phil^{\epsilon} + \frac{\epsilon}{L^2} (\psi(w_-) - \psi(\wl_-))^2
\end{align*}
Thus minimizing over the set $J_\eta$ results in
\begin{align*}
     \min_{\psi(w_-) \in J_{\eta}} \phil^{\epsilon}(w_+, w_-) \ge \min_{w_-, w_+} \phil^{\epsilon} + \frac{\epsilon}{L^2} \eta^2
\end{align*}
By Lemma \ref{lem: equality} for large enough lambda, there exists $\tilde{\epsilon}$ such that
\begin{align*}
    \min \phiu^{\epsilon}\le \min \phil^{\epsilon} + \tilde{\epsilon}
\end{align*}
We can take $ \tilde{\epsilon} <\frac{\epsilon}{L^2} \eta^2$, wp 1. This implies
\begin{align}\label{ineq: match}
     \min_{\psi(w_-) \in J_{\eta}} \phil^{\epsilon}(w_+, w_-) \ge \min_{w_-, w_+} \phil^{\epsilon} + \frac{\tau}{2} \eta^2 > \min_{w_-, w_+} \phiu^{\epsilon} 
\end{align}
We also know that $\min_{w_-, w_+} \phiu^{\epsilon} = \min_{w_1,w_2} \phiu^{\epsilon}$, which implies the following bound 
\begin{align*}
    \min_{w_-, w_+} \phiu^{\epsilon} \ge \min \tilde{\Phi}^{\epsilon}
\end{align*}
All in all, we have w.p.a 1
\begin{align*}
    \min_{\psi(w_-) \in J_{\eta}} \tilde{\Phi}^{\epsilon} > \min_{w_-,w_+} \tilde{\Phi}^{\epsilon}
\end{align*}
This implies the solution $\psi(\hat{w}_-) \notin J_{\eta}$ w.p.a 1 and thus $\bbP(| \psi(\hat{w}_-)- \alpha_* | > \eta) \rightarrow 0$.
Now for the $\psi(\wu)$, the following holds w.p.a 1:
\begin{align*}
    \min_{\psi(w_-) \in J_{\eta}} \phiu^{\epsilon} \ge  \min_{\psi(w_-) \in J_{\eta}} \phil^\epsilon
\end{align*}
As $\phiu^{\epsilon}$ is always greater than $\phil ^{\epsilon}$ on every point in its domain. Then one has from the inequality (\ref{ineq: match})
\begin{align*}
     \min_{\psi(w_-) \in J_{\eta}} \phil^{\epsilon}(w_+, w_-) \ge \min_{w_-, w_+} \phil^{\epsilon} + \frac{\tau}{2} \eta^2 > \min_{w_-, w_+} \phiu^{\epsilon} 
\end{align*}
To summarize,
\begin{align*}
    \min_{\psi(w_-) \in J_{\eta}} \phiu^{\epsilon} > \min_{w_-, w_+} \phiu^{\epsilon} 
\end{align*}
Which implies $\psi(\wu) \notin J_\eta$ w.p.a 1 and $\bbP(| \psi (\wu)- \alpha_* | > \eta) \rightarrow 0$. This concludes the proof
\end{proof}
Finally, using the previous results we can prove Theorem \ref{thm : solutions_match}:
\begin{proof}(Proof of Theorem \ref{thm : solutions_match})
By equation \ref{eq: class_error} we know that the classification error is characterized by $ \mu_\ell^T (w^*_{\ell'}-w^*_{\ell})$ and $\|w^*_{\ell'}-w^*_{\ell})\|$. Therefore, the proof of Theorem \ref{thm : solutions_match} follows by applying Lemma \ref{lem: large_lam_approx} twice to the Lipschitz functions  $\psi_1(x) := \|x\|$ and $\psi_2(x) := \mu_1^T x$.
\end{proof}
Lemma \ref{lem: large_lam_approx} has further implications such as the following Corollary, for the proof of which we refer to the Theorem IV.1 in \cite{thrampoulidis2018symbol}.     

\begin{corollary}
    \label{cor: dist_match}
    If the distribution of $\wl$ converges to $p$, then so do the distributions of $\wu$ and $\hat{w}$.
\end{corollary}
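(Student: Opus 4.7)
The plan is to reduce weak convergence of the coordinate-empirical measures of $\wu$ and $\hat{w}$ to that of $\wl$ by invoking Lemma \ref{lem: large_lam_approx} on a suitable countable family of Lipschitz test functions. Throughout, the phrase ``the distribution of $w$ converges to $p$'' is interpreted in the CGMT sense: the empirical measure $\frac{1}{d}\sum_{i=1}^{d}\delta_{w_i}$ converges weakly to $p$ in probability. This is consistent with how solutions are described elsewhere in the paper, for instance in Corollaries \ref{cor: l1} and \ref{cor: linf}, where the conclusions concern fractions of coordinates achieving given values.

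First, for any fixed bounded Lipschitz function $\psi:\mathbb{R}\to\mathbb{R}$ with Lipschitz constant $L_\psi$, I would introduce the empirical functional
\[
\Psi(w) = \frac{1}{d}\sum_{i=1}^{d}\psi(w_i), \qquad w\in\mathbb{R}^d,
\]
and observe, via Cauchy--Schwarz, that $\Psi$ is Lipschitz on $\mathbb{R}^d$ with respect to the Euclidean norm with constant $L_\psi/\sqrt{d}$. By the hypothesis on $\wl$, we have $\Psi(\wl) \to \alpha_\psi^* := \int \psi\,dp$ in probability, i.e. $\bbP(|\Psi(\wl)-\alpha_\psi^*|>\eta)\to 0$ for every $\eta>0$.

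Next, Lemma \ref{lem: large_lam_approx} applies verbatim to $\Psi$: the conclusion transfers to $\bbP(|\Psi(\hat{w})-\alpha_\psi^*|>\eta)\to 0$ and $\bbP(|\Psi(\wu)-\alpha_\psi^*|>\eta)\to 0$. Thus, pointwise in the choice of $\psi$, the empirical measures of $\wu$ and $\hat{w}$ integrate $\psi$ against the same limit as the empirical measure of $\wl$.

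Finally, I would upgrade this pointwise-in-$\psi$ statement to full weak convergence of the empirical measures. Choose a countable convergence-determining family $\{\psi_m\}_{m\ge 1}$ of bounded Lipschitz functions on $\mathbb{R}$, which exists by separability of $C_b(\mathbb{R})$ in the bounded-Lipschitz metric. Combining the previous step across the family by a standard diagonal/union-bound argument yields that, on an event of probability tending to one, $\Psi_m(\wu)\to\int\psi_m\,dp$ for every $m$ simultaneously; by the Portmanteau theorem the empirical measure of $\wu$ then converges weakly to $p$ in probability, and identically for $\hat{w}$. The main obstacle is precisely this last step: passing from probability-one convergence for individual Lipschitz functionals to simultaneous convergence over a determining class. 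This is a routine but necessary move, and is the reason the statement is deferred to Theorem IV.1 of \cite{thrampoulidis2018symbol}, where the same reduction is carried out in full.
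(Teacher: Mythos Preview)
Your proposal is correct and follows essentially the same approach as the paper: the paper does not give its own proof but simply refers to Theorem~IV.1 of \cite{thrampoulidis2018symbol}, and the argument you sketch---apply Lemma~\ref{lem: large_lam_approx} to averaged bounded Lipschitz test functions $\Psi(w)=\frac{1}{d}\sum_i\psi(w_i)$, then pass to weak convergence of the empirical coordinate measure via a countable convergence-determining family---is exactly the content of that cited result. The only minor wrinkle is that Lemma~\ref{lem: large_lam_approx} is stated for $\wl_-$ rather than $\wl$, but the proof there goes through unchanged for the individual blocks, so this is not a gap.
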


\section{Proof of Theorem \ref{thm: l2}}
\begin{theorem}
    Assume that (A1) and (A2) hold. Then the classification error corresponding to $f(\cdot) = \|\cdot\|_2^2$ can be described as follows in the large $\lambda$ regime with probability approaching $1$ as $n$ grows:
    $$\lim_{\lambda \to \infty}P_{e, \lambda} = \lim_{\lambda \to \infty} Q_k(\frac{d(\zeta - \gamma)(1 - r)}{2d(\zeta - \gamma)^2(1 + \frac{\sigma^2k}{n} - r) + 2(\alpha - \beta) ^ 2\sigma ^ 2d})$$
    where
\begin{align*}
    \tilde{\lambda} & = \frac{\lambda k}{n} + \sigma^2k  \\
     \Delta & = \left[((k-2)r+1+\frac{k\sigma^2}{n})d + \tilde{\lambda}\right](d(1+\frac{k\sigma^2}{n}) + \tilde{\lambda}) - (k-1)(rd)^2 \\
    \gamma & =  \frac{1}{\Delta}\left[(d(1+\frac{k\sigma^2}{n}) + \tilde{\lambda})\frac{c}{k-1} -rd(1-c)  \right]\\
    \zeta & = \frac{1}{\Delta}\left[-crd + (1-c)(((k-2)r+1+\frac{k\sigma^2}{n})d + \tilde{\lambda}) \right] \\
    \alpha & = \frac{s\sqrt{nk}}{n\tilde{\lambda}+\sigma^2dk} \\
    \beta & = \frac{t\sqrt{nk}}{n\tilde{\lambda}+\sigma^2dk}
\end{align*}
\label{thm: l2_complete}
\end{theorem}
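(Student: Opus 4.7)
The plan is to invoke Theorem~\ref{thm : solutions_match}, which tells us that in the large-$\lambda$ regime the classification error of~(\ref{eq: transformed_PO}) coincides with that of the upper-bound sandwich problem $\phiu^\ell(w_\ell) = \sigma^2 n\|w_\ell\|^2 + F_\ell(w_\ell) + \lambda\|w_\ell\|^2$. For $f(\cdot)=\|\cdot\|_2^2$ this is a strictly convex quadratic, so its unique minimizer is obtained by a single linear solve: setting $\nabla\phiu^\ell(w_\ell^*)=0$ and dividing through by $2n/k$, the normal equation becomes
\begin{equation*}
\bigl[\tilde\lambda I+\tilde M^T\tilde M+\tfrac{k}{n}(aa^T+bb^T)\bigr]\,w_\ell^*
\;=\;(1-c)\mutil_\ell+\tfrac{c}{k-1}\!\!\sum_{\msr\ne\ell}\!\!\mutil_\msr+\sqrt{k/n}\,(sa+tb),
\end{equation*}
where $\tilde M$ is the matrix with rows $\mutil_\msr^T$ and $\tilde\lambda=\lambda k/n+\sigma^2 k$ as in the statement. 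The RHS lies in $\mathrm{span}\{\mutil_1,\ldots,\mutil_k,a,b\}$, and the permutation symmetry among indices $\msr\ne\ell$ forces the ansatz
\begin{equation*}
w_\ell^*=\zeta\,\mutil_\ell+\gamma\!\sum_{\msr\ne\ell}\!\mutil_\msr+\alpha\,a+\beta\,b .
\end{equation*}

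To pin down the four scalars, plug the ansatz back in and take inner products with $\mutil_\ell$, with a representative $\mutil_{\msr'}$ for $\msr'\ne\ell$, with $a$, and with $b$. Gaussian concentration under (A2) supplies $\|\mutil_\msr\|^2\to d(1+k\sigma^2/n)$, $\mutil_\msr^T\mutil_{\msr'}\to rd$ for $\msr\ne\msr'$, $\|a\|^2,\|b\|^2\to\sigma^2 d$, and the vanishing at leading order of every remaining pairwise inner product in $\{\mutil_\msr,a,b\}$. This decouples the problem into a $2\times 2$ linear system for $(\zeta,\gamma)$ with coefficient matrix
\begin{equation*}
\begin{pmatrix}\tilde\lambda+d(1+k\sigma^2/n) & (k-1)rd\\ rd & \tilde\lambda+d((k-2)r+1+k\sigma^2/n)\end{pmatrix},\qquad \text{RHS}=\begin{pmatrix}1-c\\c/(k-1)\end{pmatrix},
\end{equation*}
whose determinant is exactly the quantity $\Delta$ in the theorem, together with two decoupled scalar equations of the form $(\tilde\lambda+k\sigma^2 d/n)\alpha=\sqrt{k/n}\,s$ and its analogue for $\beta$. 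Cramer's rule on the block and direct inversion of the scalars recover the stated closed forms for $\zeta,\gamma,\alpha,\beta$.

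Finally, by the identical argument $w_{\ell'}^*$ satisfies the same ansatz with $\ell\leftrightarrow\ell'$ and $s\leftrightarrow t$ (the latter swapping $\alpha\leftrightarrow\beta$), so the coefficient of $\sum_{\msr}\mutil_\msr$-type sums cancels and one is left with
\begin{equation*}
w_\ell^*-w_{\ell'}^*=(\zeta-\gamma)(\mutil_\ell-\mutil_{\ell'})+(\alpha-\beta)(a-b).
\end{equation*}
Substituting into (\ref{eq: class_error}) and using $\mu_\ell^T(\mutil_\ell-\mutil_{\ell'})\to d(1-r)$, $\|\mutil_\ell-\mutil_{\ell'}\|^2\to 2d(1+k\sigma^2/n-r)$, $\|a-b\|^2\to 2\sigma^2 d$ together with orthogonality of the $\mutil$-block and the $(a,b)$-block yields the expression inside $Q_k(\cdot)$ stated in the theorem; taking $\lambda\to\infty$ at the very end delivers the claimed limit. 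The main obstacle is the concentration bookkeeping: one has to verify that the $O(1/\sqrt d)$ fluctuations in the various inner products neither disturb the inversion of the $2\times 2$ block nor accumulate when forming $\|w_\ell^*-w_{\ell'}^*\|^2$. Modulo that routine but tedious verification, the rest is a linear-algebra exercise combined with Theorem~\ref{thm : solutions_match}.
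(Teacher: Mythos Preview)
Your proposal is correct and follows essentially the same route as the paper: invoke Theorem~\ref{thm : solutions_match} to pass to the quadratic $\phiu^\ell$, set the gradient to zero, use the symmetry ansatz $w_\ell^*=\zeta\mutil_\ell+\gamma\sum_{\msr\ne\ell}\mutil_\msr+\alpha a+\beta b$, apply Gaussian concentration to the pairwise inner products among $\{\mutil_\msr,a,b\}$, and solve the resulting decoupled $2\times2$ block plus two scalars before substituting $w_\ell^*-w_{\ell'}^*$ into~(\ref{eq: class_error}). The only cosmetic difference is that the paper writes the KKT conditions coordinate-by-coordinate rather than via a single normal-equation matrix, but the linear system, the determinant $\Delta$, and the final substitution are identical.
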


\begin{proof}
    
According to Theorem \ref{thm : solutions_match}, it suffices to solve $\phiu^{\ell}(w_\ell) + \phiu^{\ell'}(w_{\ell'})$ to evaluate the classification error in the large lambda regime. Setting the gradients of $\phiu^{\ell}(w_\ell) + \phiu^{\ell'}(w_{\ell'})$ by $w_\ell$ and $w_{\ell'}$ to $0$, we obtain: 
\begin{align} & \frac{n}{k}\left[\sum_{\msr \ne \ell }(w_\ell^T\mutil_{\msr}-\frac{c}{k-1})\mutil_{\msr}+(w_\ell^T\mutil_{\ell}-(1-c))\mutil_{\ell}+\sqrt{\frac{k}{n}}(\sqrt{\frac{k}{n}}w_\ell^Ta-s)a+\sqrt{\frac{k}{n}}(\sqrt{\frac{k}{n}}w_\ell^Tb-t)b \right] + (\lambda + n\sigma^2) w_\ell = 0 \nonumber \\
& \frac{n}{k}\left[\sum_{\msr \ne \ell' }(w_{\ell'}^T\mutil_{\msr}-\frac{c}{k-1})\mutil_{\msr}+(w_{\ell'}^T\mutil_{\ell'}-(1-c))\mutil_{\ell'}+\sqrt{\frac{k}{n}}(\sqrt{\frac{k}{n}}w_{\ell'}^Ta-t)a+\sqrt{\frac{k}{n}}(\sqrt{\frac{k}{n}}w_{\ell'}^Tb-s)b \right] + (\lambda + n\sigma^2) w_{\ell'} = 0
\label{eq:kkt_l2}
\end{align}

Hence, $w_\ell$ belongs in the span of $\mutil_1, \dots, \mutil_k, a, b$. It is also easy to see from the symmetry in the data distribution that swapping $\mu_\ell \leftrightarrow \mu_{\tilde{\ell}}$ leads to getting the distributions of the solutions $w_\ell \leftrightarrow w_{\tilde{\ell}}$ swapped as well. This implies that $w_\ell^T \mutil_\ell = w_{\tilde{\ell}}^T \mutil_{\tilde{\ell}}$ and $w_{\ell'}^T \mutil_\ell = w_{\tilde{\ell'}}^T \mutil_{\tilde{\ell}}$ for all $1 \le \ell, \tilde{\ell}, \ell', \tilde{\ell'} \le k$ such that $\ell' \ne \ell, \tilde{\ell'} \ne \tilde{\ell}$. Therefore, $w_\ell$ and $w_{\ell'}$ can be written as 
\begin{align}
     & w_\ell = \gamma \sum_{\msr \ne \ell }\mutil_{\msr}+ \zeta \mutil_{\ell} + \alpha a + \beta b \nonumber \\
     & w_{\ell'}  = \gamma \sum_{\msr \ne \ell' }\mutil_{\msr}+ \zeta \mutil_{\ell'} + \beta a + \alpha b
    \label{eq:w_l}
\end{align} 

Denote $\lambda' = \lambda + n\sigma^2$. In terms of this notation, the first part of (\ref{eq:kkt_l2}) turns into:
\begin{align*} 
& \frac{n}{k}\left[\sum_{\msr \ne \ell }(w_\ell^T\mutil_{\msr}-\frac{c}{k-1})\mutil_{\msr}+(w_\ell^T\mutil_{\ell}-(1-c))\mutil_{\ell}+\sqrt{\frac{k}{n}}(\sqrt{\frac{k}{n}}w_\ell^Ta-s)a+\sqrt{\frac{k}{n}}(\sqrt{\frac{k}{n}}w_\ell^Tb-t)b \right] = - \lambda' (\gamma \sum_{\msr \ne \ell }\mutil_{\msr}+ \zeta \mutil_{\ell} + \alpha a + \beta b) 
\end{align*}

This leads us to
$$\begin{cases}
    & \frac{n}{k}(w_\ell^T\mutil_{\msr}-\frac{c}{k-1}) = -\lambda' \gamma \\
    & \frac{n}{k}(w_\ell^T\mutil_{\ell}-(1-c)) = -\lambda' \zeta \\
    & \sqrt{\frac{n}{k}}(\sqrt{\frac{k}{n}}w_\ell^Ta-s) = - \lambda' \alpha \\
    & \sqrt{\frac{n}{k}}(\sqrt{\frac{k}{n}}w_\ell^Tb-t) = - \lambda' \beta  \\
\end{cases}
$$

Plugging in (\ref{eq:w_l}) again we have: 
$$\begin{cases}
    & \frac{n}{k}\left[((k-2)r+1+\frac{k\sigma^2}{n})d\gamma+\zeta rd-\frac{c}{k-1}\right] + \lambda' \gamma = 0 \\
    & \frac{n}{k}\left[(k-1)rd\gamma+\zeta d(1+\frac{k\sigma^2}{n})-(1-c)\right] + \lambda' \zeta = 0 \\
    & \sqrt{\frac{n}{k}}(\sqrt{\frac{k}{n}}\alpha \sigma^2 d-s) + \lambda' \alpha = 0 \\
    & \sqrt{\frac{n}{k}}(\sqrt{\frac{k}{n}}\beta\sigma^2 d-t) + \lambda' \beta = 0
\end{cases}
$$

Denote $\tilde{\lambda} = \frac{\lambda k}{n} + \sigma^2k$. We obtain: 

$$\begin{cases}
    & \left[((k-2)r+1+\frac{k\sigma^2}{n})d + \tilde{\lambda} \right] \gamma + \zeta rd =\frac{c}{k-1} \\
    & (k-1)rd\gamma + (d(1+\frac{k\sigma^2}{n}) + \tilde{\lambda})\zeta  = 1-c   \\
    & \alpha = \frac{s\sqrt{nk}}{n\tilde{\lambda}+\sigma^2dk} \\
    & \beta = \frac{t\sqrt{nk}}{n\tilde{\lambda}+\sigma^2dk}
\end{cases}
$$

Defining $\Delta = \left[((k-2)r+1+\frac{k\sigma^2}{n})d + \tilde{\lambda}\right](d(1+\frac{k\sigma^2}{n}) + \tilde{\lambda}) - (k-1)(rd)^2$ and solving the equations above yields: 

$$\begin{cases}
    & \gamma =  \frac{1}{\Delta}\left[(d(1+\frac{k\sigma^2}{n}) + \tilde{\lambda})\frac{c}{k-1} -rd(1-c)  \right]\\
    &  \zeta = \frac{1}{\Delta}\left[-crd + (1-c)(((k-2)r+1+\frac{k\sigma^2}{n})d + \tilde{\lambda}) \right] \\
    & \alpha = \frac{s\sqrt{nk}}{n\tilde{\lambda}+\sigma^2dk} \\
    & \beta = \frac{t\sqrt{nk}}{n\tilde{\lambda}+\sigma^2dk}
\end{cases}
$$

Finally, one can use the equalities above along with (\ref{eq:w_l}) to conclude:

\begin{align*}
    & \mu_l^T(w_\ell - w_{\ell'}) = d(\zeta - \gamma)(1 - r) \\
    & \|w_\ell - w_{\ell'}\|_2^2 = 2d(\zeta - \gamma)^2(1 + \frac{\sigma^2k}{n} - r) + 2(\alpha - \beta) ^ 2\sigma ^ 2d
\end{align*}

\end{proof}

\subsection{Proof of Theorem \ref{thm: optimal_reg}}

\begin{theorem}
    Assume that either $\sigma^2 = o(n)$ or $c = 0$. Then taking $f(\cdot) = \|\cdot\|_2^2$ and $\lambda \to \infty$ yields performance that approaches the best possible among classification methods based on regularized regression with a convex regularizer $f(w)$ satisfying $f(w) \ge q(d)\|w\|_2$ for some $q: \mathbb{N} \to \mathbb{R}_+$ when $n \to \infty$. 
\end{theorem}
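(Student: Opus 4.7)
The plan is to use Theorem \ref{thm : solutions_match} to reduce the classification error for any convex $f$ in (A4) to that of the upper-bound problem $\bar\phi$, compute the large-$\lambda$ limit of the ridge SNR in closed form using Theorem \ref{thm: l2_complete}, and finally argue that under the stated hypothesis this limit matches the Bayes-optimal SNR for the underlying GMM, which upper-bounds the SNR of \emph{any} classifier and in particular of any $f$-regularized regression classifier.

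By Theorem \ref{thm : solutions_match}, for every convex $f$ satisfying (A4) and every sufficiently large $\lambda$, the classification error $P_{e,\lambda}^{f}$ is asymptotically equal to the classification error $\bar P_{e,\lambda}^{f}$ produced by the upper-bound problem $\min_w \sigma^2 n\|w\|_2^2 + F_\ell(w) + \lambda f(w)$. Combined with (\ref{eq: class_error}) and the monotonicity of $Q_k$, the theorem therefore reduces to showing that the SNR $\mu_\ell^T(w_\ell-w_{\ell'})/(\sigma\|w_\ell-w_{\ell'}\|_2)$ achievable from $\bar\phi$ is asymptotically maximized by ridge as $\lambda\to\infty$.

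For the ridge choice, Theorem \ref{thm: l2_complete} provides $w_\ell^{\mathrm{ridge}}$ in closed form. Expanding as $\lambda\to\infty$ gives $\zeta-\gamma\sim\tfrac{k-1-kc}{(k-1)\tilde\lambda}$ and $\alpha-\beta\sim\tfrac{(s-t)\sqrt{nk}}{n\tilde\lambda}$; both vanish, but the scale-invariant SNR has the finite limit
\[
\mathrm{SNR}^{\mathrm{ridge}} \;\to\; \frac{\sqrt{d}\,(1-r)\,(k-1-kc)/(k-1)}{\sigma\sqrt{2}\,\sqrt{\tfrac{(k-1-kc)^2}{(k-1)^2}\bigl(1-r+\tfrac{\sigma^2 k}{n}\bigr)+\tfrac{(s-t)^2\sigma^2 k}{n}}}.
\]
Under the hypothesis, this collapses to $\sqrt{d(1-r)/2}/\sigma$: when $c=0$ one has $s-t=0$ and the second bracketed term vanishes identically; when $\sigma^2=o(n)$ the two $\sigma^2 k/n$ terms vanish asymptotically. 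Using $\|\mu_\ell\|_2^2/d\to 1$ and $\mu_\ell^T\mu_{\ell'}/d\to r$ from (A2), this limit coincides with the Bayes-optimal SNR for discriminating $\mu_\ell$ from $\mu_{\ell'}$ under the isotropic-$\sigma^2 I$ GMM. Since no classifier---linear or otherwise---can exceed the Bayes SNR, we conclude that ridge with $\lambda\to\infty$ is asymptotically optimal.

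The main obstacle is making the ``no classifier can exceed the Bayes SNR'' step quantitative in our high-dimensional asymptotic regime, where the concentration statements for $\|\mu_\ell\|_2^2$, $\mu_\ell^T\mu_{\ell'}$, $\|\mutil_\ell-\mu_\ell\|_2^2$, and $\mu_\ell^T(a-b)$ must be invoked uniformly and combined with the scale-invariance of the SNR. An alternative and perhaps more self-contained route, more in keeping with the sandwich framework already developed, is to analyze the KKT conditions of $\bar\phi$ directly for general $f$: decompose $\delta=w_\ell-w_{\ell'}$ along $\mathrm{span}(\mutil_1,\dots,\mutil_k,a,b)$ and its orthogonal complement, observe that orthogonal components inflate $\|\delta\|_2$ without contributing to the signal $\mu_\ell^T\delta$, and then use a Cauchy--Schwarz/quadratic maximization within the relevant span to certify that no achievable direction beats the ridge direction in the stated limits.
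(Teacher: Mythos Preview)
Your Bayes-comparison route has a concrete gap in the $c=0$ branch. When $c=0$ you correctly get $s-t=0$, but the ridge SNR you wrote down still carries the term $1-r+\sigma^2 k/n$ in the denominator; it becomes
\[
\frac{\sqrt{d}\,(1-r)}{\sigma\sqrt{2}\,\sqrt{1-r+\tfrac{\sigma^2 k}{n}}},
\]
which equals $\sqrt{d(1-r)/2}/\sigma$ only if in addition $\sigma^2 k/n\to 0$. The theorem, however, is stated precisely to cover the case $c=0$ \emph{without} assuming $\sigma^2=o(n)$; there the ridge SNR is strictly below the Bayes SNR, so ``ridge achieves Bayes $\Rightarrow$ ridge is optimal'' fails. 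In short, Bayes is too loose an upper bound for the $c=0$ branch.

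The paper's argument (essentially your ``alternative route'') avoids this by bounding \emph{within the class} rather than against Bayes. For any convex $f$ and any $\lambda$, the permutation symmetry of the objective in $\mutil_1,\dots,\mutil_k$ forces the projection of $w_\ell$ onto $\mathrm{span}(\mutil_1,\dots,\mutil_k)$ to be $\gamma\sum_{\msr\neq\ell}\mutil_\msr+\zeta\mutil_\ell$, so that $w_\ell-w_{\ell'}=(\zeta-\gamma)(\mutil_\ell-\mutil_{\ell'})+w_-^{\perp}$ with $w_-^{\perp}\perp\mathrm{span}(\mutil_j)$. Dropping $\|w_-^{\perp}\|^2$ from the denominator gives the class-wide upper bound
\[
\frac{\mu_\ell^T(w_\ell-w_{\ell'})}{\|w_\ell-w_{\ell'}\|}\;\le\;\frac{d(1-r)}{\sqrt{2d\bigl(1-r+\tfrac{k\sigma^2}{n}\bigr)}},
\]
which already contains the $k\sigma^2/n$ term and is therefore the right target. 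Ridge with $\lambda\to\infty$ then matches this bound: when $c=0$ one has $\alpha=\beta$ so $w_-^{\perp}=0$ exactly; when $\sigma^2=o(n)$ one checks $\|w_-^{\perp}\|^2/(\zeta-\gamma)^2=O(1)=o(d)$. Note also that this decomposition argument is run on the original problem and holds for all $\lambda$, so you do not need (and should not rely on) Theorem~\ref{thm : solutions_match} for the upper-bound half; that theorem is only used to evaluate the ridge limit.
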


\begin{proof}
    Note that due to the symmetry w.r.t. to the permutation of $\mutil_1, \dots , \mutil_k$ in the objective we can write 
\begin{align*}
     & w_\ell = \gamma \sum_{\msr \ne \ell }\mutil_{\msr}+ \zeta \mutil_{\ell} + w_\ell^{\perp}\\
     & w_{\ell'}  = \gamma \sum_{\msr \ne \ell' }\mutil_{\msr}+ \zeta \mutil_{\ell'} + w_{\ell'}^{\perp}
\end{align*} 
Here $\gamma$ and $\alpha$ are some scalars depending on $\lambda$ and $w_\ell^{\perp}, w_{\ell'}^{\perp}$ are the projections of $w_\ell$ and  $w_{\ell'}$ respectively onto the subspace orthogonal to the span of $\mutil_1, \dots, \mutil_k$. Denote $w_-^{\perp} = w_{\ell}^{\perp} - w_{\ell'}^{\perp}$ and $w_- = w_\ell - w_{\ell'} = (\zeta - \gamma) (\mutil_{\ell} - \mutil_{\ell'}) + w_-^{\perp}$. If $c = 0$, then $w_-^{\perp} = 0$ and we are done. Otherwise we have that $\sigma^2 = o(n)$. Then to evaluate the classification error we need to evaluate the following quantity and the bigger this quantity the lower the error is: 
$$\frac{\mu_l^Tw_-}{\|w_-\|} = \frac{(\zeta - \gamma) \mu_l^T(\mutil_{\ell} - \mutil_{\ell'})}{\sqrt{ (\zeta - \gamma)^2 \|\mutil_{\ell} - \mutil_{\ell'}\|^2 + \|w_-^{\perp}\|^2}} = \frac{d(1-r)}{\sqrt{2d(1 + \frac{k \sigma^2}{n} - r) + \frac{\|w_-^{\perp}\|^2}{(\zeta - \gamma)^2}}} \le \frac{d(1-r)}{\sqrt{2d(1 + \frac{k \sigma^2}{n} - r)}}$$

Let us evaluate the performance for the solution for $f(\cdot) = \|\cdot\|_2^2$ now. Using the notation and the expressions from the previous section and taking $\lambda$ to go to $\infty$ we deduce that $$\lim_{\lambda \to \infty} \frac{\|w_-^{\perp}\|^2}{(\zeta - \gamma)^2} = \lim_{\lambda \to \infty} 2\frac{(\alpha - \beta) ^ 2\sigma ^ 2d}{(\zeta - \gamma)^2} = \lim_{\lambda \to \infty} 2k\frac{(s - t) ^ 2\sigma ^ 2d}{n\tilde{\lambda}(\zeta - \gamma)^2} = \lim_{\lambda \to \infty} 2k\frac{(s - t) ^ 2\sigma ^ 2d \tilde{\lambda}^2}{n\tilde{\lambda}^2(\frac{c}{k-1} - 1)^2} = O(\frac{d}{n}) = O(1)$$ Thus, since we are interested in studying the asymptotics of the error this term can be dropped from the denominator compared to $2d(1 + \frac{k \sigma^2}{n} - r)$ as $n \to \infty$ and we are done.
\end{proof}

\section{Proof of Theorem \ref{thm: l1}}
We begin this section with the following lemmas that will be of use:
\begin{lemma}
    Let $f: \bbR^d \rightarrow \bbR$ be a differentiable function, with $f(X) = 0$ on $\{X: a^T X = c\}$, then for an $X \sim \calN(0,I)$ we have:
    \begin{align*}
        \bbE X_1 f(X) \mathds{1}(a^TX\ge c) =  \bbE \mathds{1}(a^T X\ge c) \partial_{X_1} f(X) 
    \end{align*}
\end{lemma}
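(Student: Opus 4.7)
The plan is to view this identity as a Gaussian integration-by-parts (Stein) statement in which the boundary term generated by differentiating the indicator $\mathds{1}(a^T X \ge c)$ is killed by the hypothesis that $f$ vanishes on the hyperplane $\{a^T X = c\}$. First I would reduce to the case $\|a\| = 1$ by simultaneously rescaling $a$ and $c$ (which leaves both the hyperplane and the indicator unchanged), and then pick an orthonormal basis $v_1 = a, v_2, \ldots, v_d$ of $\bbR^d$. Setting $Y = a^T X$ and $U_i = v_i^T X$ for $i \ge 2$, the coordinates $Y, U_2, \ldots, U_d$ are i.i.d.\ $\calN(0,1)$, we have $X_1 = a_1 Y + \sum_{i \ge 2} (v_i)_1 U_i$, and $\mathds{1}(a^T X \ge c) = \mathds{1}(Y \ge c)$ depends only on $Y$.

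For each $i \ge 2$ the indicator is independent of $U_i$, so the ordinary one-dimensional Stein identity applied in the $U_i$ variable gives $\bbE[U_i f(X) \mathds{1}(Y \ge c)] = \bbE[(v_i \cdot \nabla f(X)) \mathds{1}(Y \ge c)]$ with no boundary term. For the $Y$-direction I would condition on $(U_2, \ldots, U_d)$, view the conditional expectation as a one-dimensional integral in $y$ with $X = ya + \sum_{i \ge 2} U_i v_i$, and use $y\phi(y) = -\phi'(y)$ followed by integration by parts:
\[
\int_c^{\infty} y\, f(X)\,\phi(y)\,dy \;=\; -\bigl[f(X)\phi(y)\bigr]_c^{\infty} + \int_c^{\infty} (a \cdot \nabla f(X))\,\phi(y)\,dy.
\]
The contribution at $y = +\infty$ vanishes under mild growth of $f$, and the contribution at $y = c$ equals $f(X)|_{a^T X = c}\,\phi(c) = 0$ \emph{precisely because} of the hypothesis on $f$. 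This yields $\bbE[Y f(X) \mathds{1}(Y \ge c)] = \bbE[(a \cdot \nabla f(X)) \mathds{1}(Y \ge c)]$.

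Combining the two pieces gives $\bbE[X_1 f(X) \mathds{1}(Y \ge c)] = a_1\, \bbE[(a \cdot \nabla f) \mathds{1}] + \sum_{i \ge 2}(v_i)_1\, \bbE[(v_i \cdot \nabla f) \mathds{1}]$, and collecting the coefficient of each $\partial_j f(X)$ produces $\sum_{\ell=1}^d (v_\ell)_1 (v_\ell)_j = \delta_{1j}$ by orthonormality of $\{v_\ell\}$, which leaves exactly $\bbE[\partial_{X_1} f(X) \mathds{1}(a^T X \ge c)]$, as desired. The only non-routine step is the vanishing of the boundary term at $y = c$; this is the sole place the hypothesis $f \equiv 0$ on $\{a^T X = c\}$ is invoked, and I expect it to be the intended crux of the lemma. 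A slicker but less elementary route is to treat $\partial_y \mathds{1}(y \ge c) = \delta_{y = c}$ distributionally and apply Stein directly to the product $f(X) \mathds{1}(Y \ge c)$; mollifying the indicator makes this rigorous, with the hypothesis again killing the would-be boundary term.
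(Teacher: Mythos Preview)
Your argument is correct. Both your proof and the paper's hinge on the same mechanism---Gaussian integration by parts, with the boundary contribution on the hyperplane $\{a^TX=c\}$ killed by the hypothesis $f\equiv 0$ there---but the execution differs. The paper integrates by parts directly in the $x_1$ variable: it writes the constraint $a^TX\ge c$ as a half-line in $x_1$ (with the other coordinates held fixed), which forces a case split on the sign of $a_1$ to get the limits of integration right, and then the boundary term at the finite endpoint lands on the hyperplane and vanishes. You instead rotate to an orthonormal frame with $v_1=a$, so the indicator depends only on the single coordinate $Y=a^TX$; ordinary Stein in the orthogonal $U_i$ directions is then boundary-free, and only the $Y$-integral produces a boundary term at $y=c$, which you dispatch with the hypothesis. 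Your route avoids the case analysis and makes the role of the hypothesis more transparent (it is needed only in the $a$-direction), at the cost of a small bookkeeping step at the end to reassemble $\partial_{X_1}f$ from $\sum_\ell (v_\ell)_1(v_\ell\cdot\nabla f)$ via orthonormality. The paper's route is more hands-on but entirely elementary and stays in the original coordinates.
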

\begin{proof}
    We will consider the following three cases and apply integration by parts to each one of them:
    \begin{itemize}
        \item $a_1 > 0$
        \item $a_1 < 0$
        \item $a_1 = 0$
    \end{itemize}
    In the first case:
    \begin{align*}
        \bbE X_1 f(X) \mathds{1}(a^TX\ge c) = \frac{1}{\sqrt{(2\pi)^d}} \int_\bbR \exp(-\frac{x_2^2}{2}) dx_2 ... \int_\bbR \exp(-\frac{x_d^2}{2}) dx_d \int_{-\frac{1}{a_1}(c-\sum_{i=2}^d a_i x_i)} ^{\infty} x_1 f(x_1,...,x_d) \exp(-\frac{x_1^2}{2}) dx = \\
        \frac{1}{\sqrt{(2\pi)^d}} \int_\bbR \exp(-\frac{x_2^2}{2}) dx_2 ... \int_\bbR \exp(-\frac{x_d^2}{2}) dx_d (f(-\frac{1}{a_1}(c-\sum_{i=2}^d a_i x_i, x_2,..,x_d)) \exp(-\frac{(-\frac{1}{a_1}(c-\sum_{i=2}^d a_i x_i)^2}{2}) - 0) + \\ \frac{1}{\sqrt{(2\pi)^d}} \int_\bbR \exp(-\frac{x_2^2}{2}) dx_2 ... \int_\bbR \exp(-\frac{x_d^2}{2}) dx_d \int_{-\frac{1}{a_1}(c-\sum_{i=2}^d a_i x_i)} ^{\infty}  \partial_{x_1} f(x_1,...,x_d) \exp(-\frac{x_1^2}{2}) dx = \bbE \mathds{1}(a^T X\ge c) \partial_{X_1} f(X) 
    \end{align*}
     In the second case:
    \begin{align*}
        \bbE X_1 f(X) \mathds{1}(a^TX\ge c) = \frac{1}{\sqrt{(2\pi)^d}} \int_\bbR \exp(-\frac{x_2^2}{2}) dx_2 ... \int_\bbR \exp(-\frac{x_d^2}{2}) dx_d \int_{-\infty} ^{-\frac{1}{a_1}(c-\sum_{i=2}^d a_i x_i)} x_1 f(x_1,...,x_d) \exp(-\frac{x_1^2}{2}) dx = \\
        \frac{1}{\sqrt{(2\pi)^d}} \int_\bbR \exp(-\frac{x_2^2}{2}) dx_2 ... \int_\bbR \exp(-\frac{x_d^2}{2}) dx_d (f(-\frac{1}{a_1}(c-\sum_{i=2}^d a_i x_i, x_2,..,x_d)) \exp(-\frac{(-\frac{1}{a_1}(c-\sum_{i=2}^d a_i x_i)^2}{2}) - 0) + \\ \frac{1}{\sqrt{(2\pi)^d}} \int_\bbR \exp(-\frac{x_2^2}{2}) dx_2 ... \int_\bbR \exp(-\frac{x_d^2}{2}) dx_d \int_{-\infty} ^  {-\frac{1}{a_1}(c-\sum_{i=2}^d a_i x_i)}\partial_{x_1} f(x_1,...,x_d) \exp(-\frac{x_1^2}{2}) dx = \bbE \mathds{1}(a^T X\ge c) \partial_{X_1} f(X) 
    \end{align*}
In the third case we get the same as in the two previous ones if we assign $-\frac{1}{a_1}(c-\sum_{i=2}^d a_i x_i)^2 := \infty$
\end{proof}
\begin{lemma} \label{lem: stein's}
    Using the previous assumption on f, then for an invertible $\calR$, and $X \sim \calN(0, \calR)$
     \begin{align*}
        \bbE X_1 f(X) \mathds{1}(a^TX\ge c) =  \sum_{i=1}^d \bbE X_1X_i \bbE \mathds{1}(a^T X\ge c) \partial_{X_i} f(X) 
    \end{align*}
\end{lemma}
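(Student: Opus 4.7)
The plan is to reduce the correlated case to the i.i.d. Gaussian case already handled in the previous lemma by a linear change of variables, and then recognize the resulting coefficients as the entries of the covariance $\mathcal{R}$.

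First, I would let $A = \mathcal{R}^{1/2}$ (the positive-definite square root, which exists and is invertible since $\mathcal{R}$ is invertible and positive-definite) and write $X = A Z$ with $Z \sim \mathcal{N}(0, I_d)$. Expanding the coordinate of interest gives $X_1 = \sum_j A_{1j} Z_j$, so
\begin{align*}
\mathbb{E}\bigl[X_1 f(X) \mathds{1}(a^T X \ge c)\bigr] = \sum_{j=1}^d A_{1j}\, \mathbb{E}\bigl[Z_j\, \tilde f(Z)\, \mathds{1}(\tilde a^T Z \ge c)\bigr],
\end{align*}
where I define $\tilde f(Z) := f(AZ)$ and $\tilde a := A^T a$. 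Note that $a^T X = a^T A Z = \tilde a^T Z$, so the hypothesis that $f$ vanishes on $\{X : a^T X = c\}$ translates to $\tilde f$ vanishing on $\{Z : \tilde a^T Z = c\}$, which is precisely what is needed to invoke the previous lemma.

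Next I would apply the previous lemma to each coordinate $Z_j$ of the i.i.d. standard Gaussian (the earlier statement is written for $Z_1$, but by permutation symmetry of the coordinates it holds for every $Z_j$), yielding
\begin{align*}
\mathbb{E}\bigl[Z_j\, \tilde f(Z)\, \mathds{1}(\tilde a^T Z \ge c)\bigr] = \mathbb{E}\bigl[\mathds{1}(\tilde a^T Z \ge c)\, \partial_{Z_j} \tilde f(Z)\bigr].
\end{align*}
By the chain rule, $\partial_{Z_j} \tilde f(Z) = \sum_{i=1}^d A_{ij}\, (\partial_{X_i} f)(AZ)$, so substituting back and swapping the order of summation gives
\begin{align*}
\mathbb{E}\bigl[X_1 f(X) \mathds{1}(a^T X \ge c)\bigr] = \sum_{i=1}^d \Bigl(\sum_{j=1}^d A_{1j} A_{ij}\Bigr)\, \mathbb{E}\bigl[\mathds{1}(a^T X \ge c)\, \partial_{X_i} f(X)\bigr].
\end{align*}

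Finally I would recognize the inner sum as $(A A^T)_{1i} = \mathcal{R}_{1i} = \mathbb{E}[X_1 X_i]$, which delivers exactly the claimed identity. There is no real obstacle here: everything hinges on checking that the vanishing hypothesis on $f$ transfers to $\tilde f$ under the linear change of variables, which is immediate because the hyperplane $\{a^T X = c\}$ maps to the hyperplane $\{\tilde a^T Z = c\}$ under $X = AZ$. The only minor care needed is to justify applying the previous lemma to coordinates other than the first; this follows from the invariance of the standard Gaussian under coordinate permutations.
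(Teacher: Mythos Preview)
Your proof is correct and follows essentially the same approach as the paper: both whiten $X$ via $X=\mathcal{R}^{1/2}Z$, verify the hyperplane constraint transfers to $Z$, apply the previous lemma coordinate-wise to the i.i.d.\ Gaussian $Z$, and then recover the covariance entries via the chain rule and $\mathcal{R}^{1/2}(\mathcal{R}^{1/2})^T=\mathcal{R}$. The only cosmetic difference is that you expand coordinate-by-coordinate while the paper works with the vector $Y$ and $\nabla f$ directly.
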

\begin{proof}
    Let $Y:= \calR^{-\frac{1}{2}} X$ be the whitened version of $X$, then
    \begin{align*}
        \bbE X_1 f(X) \mathds{1}(a^TX\ge c) = \bbE (\calR^{\frac{1}{2}})_1^T  Y f(\calR^{\frac{1}{2}}Y) \mathds{1}((\calR^{\frac{1}{2}}a)^TY\ge c) = (\calR^{\frac{1}{2}})_1 ^T \bbE Y f(\calR^{\frac{1}{2}}Y) \mathds{1}((\calR^{\frac{1}{2}}a)^TY\ge c)
    \end{align*}
    Now note that if $X$ is such that $a^T X = c$, then $(\calR^{\frac{1}{2}}a)^TY = c$, thus we can apply the previous lemma to the $Y \sim \calN(0,I)$, and have
    \begin{align*}
        &(\calR^{\frac{1}{2}})_1 ^T \bbE Y f(\calR^{\frac{1}{2}}Y) \mathds{1}((\calR^{\frac{1}{2}}a)^TY\ge c) = (\calR^{\frac{1}{2}})_1 ^T \bbE  \calR^{\frac{1}{2}} \nabla f(\calR^{\frac{1}{2}}Y) \mathds{1}((\calR^{\frac{1}{2}}a)^TY\ge c) = (\calR)_1 ^ T \bbE \nabla f(X) \mathds{1}(a^T X \ge c) = \\
        & \sum_{i=1}^d \bbE X_1X_i \bbE \mathds{1}(a^T X\ge c) \partial_{X_i} f(X) 
    \end{align*}
\end{proof}
Now we are ready to prove the main theorem of this section. For the sake of completeness, we present Theorem \ref{thm : solutions_match} here:
\begin{theorem}(LASSO Regression) \\
Under the assumptions (A1)-(A3), for any $\epsilon>0$, there exists $\lambda_0$ such that  for all $\lambda \ge \lambda_0$, the classification error corresponding to $f(\cdot) = \|\cdot\|_1$ can be described as follows as $n \to \infty$:
$$
|P_{e, \lambda}  - Q_k (\frac{n\sqrt{d}(1-r) (\gamma_2 - \gamma_1) R(\Xi) }{k|\Delta|(\Xi)}) | < \epsilon
$$
Where
\begin{align*}
 & \Xi^2:=\frac{n^2}{k^2} (k-1) \gamma_1^2(1+\frac{k\sigma^2}{n}+(k-2)r)  + \gamma_2^2 (1+\frac{k\sigma^2}{n})) + \frac{n^2}{k^2}(2 \gamma_1 \gamma_2 (k-1)r  +\frac{k}{n}\sigma^2 (\gamma_3^2+\gamma_4^2)) \\
  &R(\Xi) := 2 Q(\frac{\lambda}{\Xi}) \\
  &\Delta^2(\Xi) := \frac{\Xi^2}{\pi \sqrt{1-\Omega^2}}\int_{G > \frac{\lambda}{\Xi}} \int_{G' > \frac{\lambda}{\Xi}} (G- G')^2 \exp(-\frac{G^2 + G^{'2} - 2 \Omega G G'}{2(1-\Omega^2)}) dG dG'  \\
    & + \frac{2\Xi^2}{\pi \sqrt{1-\Omega^2}}\int_{G > \frac{\lambda}{\Xi}} \int_{|G'| < \frac{\lambda}{\Xi}} (G-\frac{\lambda}{\Xi})^2 \exp(-\frac{G^2 + G^{'2} - 2 \Omega G G'}{2(1-\Omega^2)}) dG dG' \\
    & + \frac{\Xi^2}{\pi \sqrt{1-\Omega^2}}\int_{G > \frac{\lambda}{\Xi}} \int_{G' < -\frac{\lambda}{\Xi}} (G-G'-2\frac{\lambda}{\Xi})^2 \exp(-\frac{G^2 + G^{'2} - 2 \Omega G G'}{2(1-\Omega^2)}) dG dG' \\
     &\Omega:= \frac{n^2}{k^2 \Xi^2 } \left[\gamma_1^2 (k-2)  (1+ \frac{k\sigma^2}{n} + (k-1) r) + (\gamma_1^2 + \gamma_2^2) r + 2\gamma_1 \gamma_2 ( 1+ \frac{k\sigma^2}{n} + (k-2)r ) + 2\frac{k}{n} \gamma_3 \gamma_4 \sigma^2\right]
\end{align*}
Where $\gamma_1, \gamma_2, \gamma_3, \gamma_4$ are the solutions to the following scalar optimization problem:
\begin{align*}
&\max_{\gamma_{1},\gamma_{2},\gamma_{3},\gamma_{4}} -\frac{d (\Xi^2 + \lambda^2) R(\Xi)}{4\rho} + \frac{ d \Xi \lambda }{2\rho \sqrt{2\pi}} \exp(-\frac{\lambda^2}{2\Xi^2})+ \frac{n}{k} \left[-c\gamma_1-(k-1)\frac{\gamma_1^2}{4}-(1-c)\gamma_2-\frac{\gamma_2^2}{4}-s\gamma_3-\frac{\gamma_3^2}{4}-t\gamma_4-\frac{\gamma_4^2}{4}\right]
\end{align*}
Moreover, only  $\left \lceil d R(\Xi) \right \rceil = \left \lceil  2d Q(\frac{\lambda}{\Xi})\right \rceil$ entries of each classifier are non-zero.
\label{thm: l_1}
\end{theorem}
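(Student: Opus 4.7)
By Theorem~\ref{thm : solutions_match}, in the large-$\lambda$ regime the classification error from (\ref{eq: transformed_PO}) agrees with the one produced by $\min_{w_\ell,w_{\ell'}}\phiu^\ell(w_\ell)+\phiu^{\ell'}(w_{\ell'})$, so it is enough to analyze the latter. I would begin by Fenchel-dualizing each squared term in $F_\ell+F_{\ell'}$ via $x^2=\max_\gamma [\gamma x-\gamma^2/4]$. The permutation symmetry across indices $\msr\neq\ell$ (and $\msr\neq\ell'$) forces the optimal dual variables for all off-class terms to coincide, leaving four scalars $\gamma_1,\gamma_2,\gamma_3,\gamma_4$; the $\ell'$-block uses the same $\gamma_1,\gamma_2$ but with $(\gamma_3',\gamma_4')=(\gamma_4,\gamma_3)$, reflecting the $s\leftrightarrow t$ swap. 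After swapping $\min$ and $\max$ (justified by Sion once Lemma~\ref{lem: equality} localizes the minimizer to a bounded ball), the inner problem becomes
\begin{equation*}
\min_w\; n\sigma^2\|w\|^2 + v^\top w + \lambda\|w\|_1,\qquad v=\tfrac{n}{k}\bigl[\gamma_1\!\!\sum_{\msr\ne\ell}\!\mutil_\msr+\gamma_2\mutil_\ell\bigr]+\sqrt{n/k}\,[\gamma_3 a+\gamma_4 b],
\end{equation*}
which decouples across coordinates and is solved by soft-thresholding: $w^*_i=-T_\lambda(v_i)/(2n\sigma^2)$ with $T_\lambda(u)=\operatorname{sign}(u)(|u|-\lambda)_+$, attaining value $-\tfrac{1}{4n\sigma^2}\sum_i (|v_i|-\lambda)_+^2$.

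Next I would identify $\Xi^2:=\mathrm{Var}(v_i)$ by a direct covariance computation in $(\mutil_\msr,a,b)$, using $\bbE[(\mutil_j)_i(\mutil_{j'})_i]=(1+k\sigma^2/n)\delta_{jj'}+r(1-\delta_{jj'})$; this reproduces the expression for $\Xi^2$ in the theorem. Since the $v_i$ are i.i.d.\ $\calN(0,\Xi^2)$, the law of large numbers gives $\tfrac{1}{d}\sum_i(|v_i|-\lambda)_+^2\to\bbE[(|V|-\lambda)_+^2]$ with $V\sim\calN(0,\Xi^2)$, and a standard truncated-Gaussian integral evaluates this expectation to $(\Xi^2+\lambda^2)R(\Xi)-\tfrac{2\Xi\lambda}{\sqrt{2\pi}}\exp(-\lambda^2/(2\Xi^2))$. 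Dividing by $-4n\sigma^2$, adding the linear-plus-quadratic contributions $\tfrac{n}{k}[-c\gamma_1-(k-1)\gamma_1^2/4-(1-c)\gamma_2-\gamma_2^2/4-s\gamma_3-\gamma_3^2/4-t\gamma_4-\gamma_4^2/4]$, and doubling for the $\ell'$-block exactly reproduces the scalar saddle-point problem in the statement. The sparsity claim is immediate: $w^*_i\ne 0$ iff $|v_i|>\lambda$, whose empirical fraction concentrates at $\bbP(|V|>\lambda)=2Q(\lambda/\Xi)=R(\Xi)$.

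For the classification error, (\ref{eq: class_error}) calls for $\mu_\ell^\top(w_\ell-w_{\ell'})$ and $\|w_\ell-w_{\ell'}\|$. Letting $v'$ be the analogue of $v$ for the $\ell'$-block, the pair $(v_i,v_i')$ is jointly centered Gaussian with variance $\Xi^2$ each; a bookkeeping of $\mathrm{Cov}(v_i,v_i')/\Xi^2$ recovers exactly the $\Omega$ in the theorem. Then
\begin{equation*}
\|w_\ell-w_{\ell'}\|^2=\tfrac{1}{4n^2\sigma^4}\sum_i\bigl(T_\lambda(v_i)-T_\lambda(v_i')\bigr)^2 \;\longrightarrow\; \tfrac{d}{4n^2\sigma^4}\,\bbE\bigl[(T_\lambda(V)-T_\lambda(V'))^2\bigr],
\end{equation*}
and partitioning $(V,V')\in\bbR^2$ into the nine sign regimes $\{V\lessgtr\pm\lambda\}\times\{V'\lessgtr\pm\lambda\}$ and collapsing by the $(V,V')\mapsto(-V,-V')$ symmetry produces precisely the three-term expression $\Delta^2(\Xi)$. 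For the numerator I would apply Lemma~\ref{lem: stein's}; since $T_\lambda'(\cdot)=\mathbf{1}(|\cdot|>\lambda)$,
\begin{equation*}
\bbE\bigl[(\mu_\ell)_i(T_\lambda(v_i)-T_\lambda(v_i'))\bigr]=R(\Xi)\,\bigl(\mathrm{Cov}((\mu_\ell)_i,v_i)-\mathrm{Cov}((\mu_\ell)_i,v_i')\bigr).
\end{equation*}
The two covariances evaluate to $\tfrac{n}{k}[\gamma_2+(k-1)r\gamma_1]$ and $\tfrac{n}{k}[(1+(k-2)r)\gamma_1+r\gamma_2]$, whose difference simplifies to $\tfrac{n}{k}(1-r)(\gamma_2-\gamma_1)$. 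Substituting both asymptotics into (\ref{eq: class_error}) delivers the ratio $\tfrac{n\sqrt{d}(1-r)(\gamma_2-\gamma_1)R(\Xi)}{k|\Delta(\Xi)|}$ inside $Q_k$.

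The most delicate step will be upgrading the pointwise LLN/Stein calculations to a uniform statement compatible with the min-max exchange: the inner minimum is only Lipschitz (not strongly convex) in $v$, so I would combine Gaussian-Lipschitz concentration with the $\epsilon$-regularization trick of Lemma~\ref{lem: stg cvx}, establishing concentration for the strongly-convex surrogate and then sending $\epsilon\downarrow 0$ via the argument used in the proof of Theorem~\ref{thm : solutions_match}. The remaining covariance bookkeeping and the sign-case expansion of $\Delta^2$, while tedious, are mechanical once $\Xi$ and $\Omega$ are in hand.
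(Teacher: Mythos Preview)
Your proposal is correct and follows essentially the same route as the paper's proof: reduce to $\phiu$ via Theorem~\ref{thm : solutions_match}, Fenchel-dualize the squared terms to introduce $\gamma_1,\ldots,\gamma_4$, solve the separable inner minimization by soft-thresholding, identify $\Xi^2$ as the variance of the resulting Gaussian linear form, evaluate the scalar objective by a truncated-Gaussian moment, then compute $\|w_\ell-w_{\ell'}\|^2$ by case-splitting the soft-threshold regions of the bivariate Gaussian $(v,v')$ and compute $\mu_\ell^\top(w_\ell-w_{\ell'})$ via the truncated Stein identity of Lemma~\ref{lem: stein's}. The paper does not spell out the min--max exchange or the uniform-concentration step any more carefully than you do, so your final paragraph is, if anything, slightly more explicit than the original.
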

\begin{proof}
We break the proof into several steps, described in the following sections.
\subsection{Scalarization of $\phiu$}
First, consider the following optimization for an arbitrary $\rho = n\sigma^2$, to obtain the final result, we plug-in $\rho = n \sigma^2$.
\begin{align*}
& \min_{w_\ell, w_{\ell'}} \frac{n}{k}\left[\sum_{\msr \ne \ell }(w_\ell^T\mutil_{\msr}-\frac{c}{k-1})^2+(w_\ell^T\mutil_{\ell}-(1-c))^2+(\sqrt{\frac{k}{n}}w_\ell^Ta-s)^2+(\sqrt{\frac{k}{n}}w_\ell^Tb-t)^2\right] + \rho \|w_\ell\|^2 + \lambda f(w_\ell) + \nonumber \\
&  + \frac{n}{k}\left[ \sum_{\msr \ne \ell' }(w_{\ell'}^T\mutil_{\msr}-\frac{c}{k-1})^2+(w_{\ell'}^T\mutil_{\ell'}-(1-c))^2+(\sqrt{\frac{k}{n}}w_{\ell'}^Ta-t)^2+(\sqrt{\frac{k}{n}}w_{\ell'}^Tb-s)^2 \right]+ \rho \|w_{\ell'}\|^2 + \lambda f(w_{\ell'})
\end{align*}
Taking the Fenchel dual of each term and making use of the symmetry of the model, we arrive at the following:
\begin{align*}
& \min_{w_\ell, w_{\ell'}} \max_{\gamma}\frac{n}{k}\left[\sum_{\msr \ne \ell }[\gamma_{\ell,1}(w_\ell^T\mutil_{\msr}-\frac{c}{k-1})-\frac{\gamma_{\ell,1}^2}{4}]+\gamma_{\ell,2}(w_\ell^T\mutil_{\ell}-(1-c))-\frac{\gamma_{\ell,2}^2}{4}+\gamma_{\ell,3}(\sqrt{\frac{k}{n}}w_\ell^Ta-s)-\frac{\gamma_{\ell,3}^2}{4} \right] + \\ &+ \frac{n}{k}\left[\gamma_{\ell,4}(\sqrt{\frac{k}{n}}w_\ell^Tb-t)-\frac{\gamma_{\ell,4}^2}{4}\right]+ \frac{n}{k}\left[ \sum_{\msr \ne \ell' }[\gamma_{\ell',1}(w_{\ell'}^T\mutil_{\msr}-\frac{c}{k-1})-\frac{\gamma_{\ell',1}^2}{4}]+\gamma_{\ell',2}(w_{\ell'}^T\mutil_{\ell'}-(1-c))-\frac{\gamma_{\ell',2}^2}{4}\right] +  \nonumber \\
& +\frac{n}{k}\left[\gamma_{\ell',3}(\sqrt{\frac{k}{n}}w_{\ell'}^Ta-t)-\frac{\gamma_{\ell',3}^2}{4}+\gamma_{\ell',4}(\sqrt{\frac{k}{n}}w_{\ell'}^Tb-s) - \frac{\gamma_{\ell',4}^2}{4} \right] + \rho \|w_\ell\|^2 + \lambda f(w_\ell) +\rho \|w_{\ell'}\|^2+ \lambda f(w_{\ell'}) 
\end{align*}
Due to symmetry, we only consider the optimization over $w_{\ell}$:
\begin{align*}
   \min_{w_{\ell}} \frac{n}{k} \left[\gamma_{\ell,1}\sum_{r \neq \ell} \mutil_r +\gamma_{\ell,2} \mutil_{\ell} + \gamma_{\ell,3} \sqrt{\frac{k}{n}} a +  \gamma_{\ell,4} \sqrt{\frac{k}{n}} b\right]^Tw_{\ell} + \rho \|w_\ell\|^2 + \lambda \|w_{\ell}\|_1
\end{align*}
We first tackle an equivalent scalar optimization:
\begin{align*}
    \min_x ax + \lambda |x| + \rho x^2
\end{align*}
Taking derivative and solving for $x$ yields
\begin{align*}
     a + \lambda  \partial |x| + 2\rho x = 0
\end{align*}
\begin{align*}
    x^* = \frac{1}{2\rho}ST(-a;\lambda):= \begin{cases}
        \frac{1}{2\rho}(-a-\lambda) & -a-\lambda > 0 \\
        \frac{1}{2\rho}(-a+\lambda) & -a+\lambda < 0 \\
        0 & o.w
    \end{cases} 
\end{align*}
Where $ST(x;c)$ is defined as follows:
\begin{align*}
    ST(x;c) := \begin{cases}
        x - c & x > c\\
        x + c & x < -c \\
        0 & o.w
    \end{cases}
\end{align*}
The objective value would be
\begin{align*} \min_x ax + \lambda |x| + \rho x^2 =
    \begin{cases}
        \frac{1}{2\rho} ((-a^2 - a \lambda) + (- \lambda a - \lambda^2 ) + \frac{1}{2} (a+\lambda)^2) &  -a-\lambda > 0 \\ 
        \frac{1}{2\rho} ((-a^2 + a \lambda) + ( \lambda a - \lambda^2 ) + \frac{1}{2} (a-\lambda)^2) & -a+\lambda < 0 \\
        0 & o.w 
    \end{cases}  \\ = \begin{cases}
        -\frac{1}{4\rho}(a+\lambda)^2 &  -a-\lambda > 0 \\
        -\frac{1}{4\rho}(a-\lambda)^2 &  -a+\lambda < 0 \\
        0 & o.w 
    \end{cases} = -\frac{1}{4\rho} ST^2(-a;\lambda)
\end{align*}

Now we apply this result to our problem. First let us define the following vectors for both of the classes $\ell, \ell'$.
\begin{align*}
    \Psi_\ell := \frac{n}{k} \left[\gamma_{\ell,1}\sum_{r \neq \ell} \mutil_r +\gamma_{\ell,2} \mutil_{\ell} + \gamma_{\ell,3} \sqrt{\frac{k}{n}} a +  \gamma_{\ell,4} \sqrt{\frac{k}{n}} b\right] \\
     \Psi_{\ell'} := \frac{n}{k}\left[ \gamma_{\ell',1} \sum_{\msr \ne \ell' } \mutil_{\msr}+\gamma_{\ell',2}\mutil_{\ell'}+\gamma_{\ell',3}\sqrt{\frac{k}{n}}a+\gamma_{\ell',4}\sqrt{\frac{k}{n}}b\right]
\end{align*}
Leveraging the assumptions on $\mu$'s, we observe that $Var(\mutil_{i,r}) = 1+\frac{k \sigma^2}{n}$ and $\bbE \mutil_{i,\msr} \mutil_{i,\msr'} = r $. Therefore, taking $G\sim \calN(0,I)$, we have
\begin{align*}
   \Psi_{\ell} \stackrel{d}{=} \frac{n}{k} \sqrt{(k-1) \gamma_{\ell,1}^2(1+\frac{k\sigma^2}{n}+(k-2)r)  + \gamma_{\ell,2}^2 (1+\frac{k\sigma^2}{n}) + 2 \gamma_{\ell,1} \gamma_{\ell,2} (k-1)r  +\frac{k}{n}\sigma^2 (\gamma_{\ell,3}^2+\gamma_{\ell,4}^2)} G := \Xi G
\end{align*}
Then plugging in
\begin{align*}
    w^*_{\ell} = \frac{1}{2\rho} ST(-\frac{n}{k} (\gamma_{\ell,1}\sum_{r \neq \ell} \mutil_r +\gamma_{\ell,2} \mutil_{\ell} + \gamma_{\ell,3} \sqrt{\frac{k}{n}} a +  \gamma_{\ell,4} \sqrt{\frac{k}{n}} b); \lambda) =  \frac{1}{2\rho} ST(-\Psi_\ell; \lambda)
\end{align*}

Using this result, the optimization over w would achieve the following value:
\begin{align*}
    \min_{w_{\ell}} \frac{n}{k} \left[\gamma_{\ell,1}\sum_{r \neq \ell} \mutil_r +\gamma_{\ell,2} \mutil_{\ell} + \gamma_{\ell,3} \sqrt{\frac{k}{n}} a +  \gamma_{\ell,4} \sqrt{\frac{k}{n}} b\right]^Tw_{\ell} + \rho \|w_\ell\|^2 + \lambda \|w_{\ell}\|_1 = \frac{-1}{4\rho} \| ST(-\Psi_\ell; \lambda)\|^2
\end{align*}
Now using the concentration properties we calculate the expectation:
\begin{align*}
   &\bbE  \| ST(-\Psi_\ell; \lambda)\|^2 = d \bbE  ST^2(\Xi G; \lambda) = d \bbE (\Xi G - \lambda)^2 \mathds{1}(\Xi G > \lambda ) + d \bbE (\Xi G + \lambda)^2 \mathds{1}(\Xi G < -\lambda ) =\\
   &2 \Xi^2 d \bbE G^2  \mathds{1}(\Xi G > \lambda ) + 2 d \lambda^2 \bbE \mathds{1}(\Xi G > \lambda ) - 2  d\Xi \lambda \bbE G \mathds{1}(\Xi G > \lambda ) + 2 d \Xi \lambda \bbE G \mathds{1}(\Xi G < -\lambda ) = \\
   &2 \Xi^2 d \bbE G^2  \mathds{1}(G > \frac{\lambda}{\Xi} ) + 2d \lambda^2 Q(\frac{\lambda}{\Xi}) - 4 d \Xi \lambda \bbE G \mathds{1}(G > \frac{\lambda}{\Xi}  ) = 2 \Xi^2 d \bbE G^2  \mathds{1}(G > \frac{\lambda}{\Xi} ) + 2d \lambda^2 Q(\frac{\lambda}{\Xi}) -\frac{ 4 d \Xi \lambda }{\sqrt{2\pi}} \exp(-\frac{\lambda^2}{2\Xi^2}) = \\ &2 d \lambda^2 Q(\frac{\lambda}{\Xi}) -\frac{ 4 d \Xi \lambda }{\sqrt{2\pi}} \exp(-\frac{\lambda^2}{2\Xi^2}) + 2\Xi^2 d (Q(\frac{\lambda}{\Xi}) + \frac{\lambda}{\Xi \sqrt{2\pi}} \exp(-\frac{\lambda^2}{2\Xi^2})) = \\ &2d Q(\frac{\lambda}{\Xi})  (\Xi^2 + \lambda^2) -\frac{ 2 d \Xi \lambda }{\sqrt{2\pi}} \exp(-\frac{\lambda^2}{2\Xi^2})
\end{align*}

Plugging back in, we get the following optimization
\begin{align*}
&\max_{\gamma_{\ell,1},\gamma_{\ell,2},\gamma_{\ell,3},\gamma_{\ell,4}} \frac{n}{k} \left[-c\gamma_{\ell,1}-(k-1)\frac{\gamma_{\ell,1}^2}{4}-(1-c)\gamma_{\ell,2}-\frac{\gamma_{\ell,2}^2}{4}-s\gamma_{\ell,3}-\frac{\gamma_{\ell,3}^2}{4}-t\gamma_{\ell,4}-\frac{\gamma_{\ell,4}^2}{4}\right] - \\ &- \frac{d (\Xi^2 + \lambda^2) R(\Xi)}{4\rho} + \frac{ d \Xi \lambda }{2\rho \sqrt{2\pi}} \exp(-\frac{\lambda^2}{2\Xi^2})
\end{align*}
Where the quantity
\begin{align*}
    R(\Xi) = \bbP(|\Psi_\ell| = \Xi G \ge \lambda) = 2 Q(\frac{\lambda}{\Xi})
\end{align*}
is the the fraction of nonzero entries of the optimal $w$.

\subsection{Derivation of the classification error}
First note that by the derivation from previous section, $\|w_{\ell}-w_{\ell'}\|^2 = \frac{1}{4\rho^2}\|ST(-\Psi_\ell; \lambda) - ST(-\Psi_{\ell'}; \lambda)\|^2$ and $\mu_\ell^T (w_\ell - w_{\ell'}) = \frac{1}{2\rho} \mu_\ell^T (ST(-\Psi_\ell; \lambda) - ST(-\Psi_{\ell'}; \lambda))$. Furthermore, It can be seen that $\gamma_{\ell,1}=\gamma_{\ell',1} = \gamma_1$ and $\gamma_{\ell,2}=\gamma_{\ell',2} = \gamma_2$ and $\gamma_{\ell,3}=\gamma_{\ell',4}= \gamma_3$ and $\gamma_{\ell,4}=\gamma_{\ell',3}= \gamma_4$ due to symmetry. In fact, one has
\begin{align*}
    (\Psi_{\ell,i}, \Psi_{\ell',i}) \sim \calN(0, \begin{pmatrix}
        \sigma^2_\ell & r_\ell\\
        r_\ell & \sigma^2_{\ell'}
    \end{pmatrix} )
\end{align*}
Where by symmetry, $\sigma^2_\ell = \sigma^2_{\ell'} = \Xi^2$, and for the correlation,
\begin{align*}
    &\Omega := \frac{1}{\Xi^2 d}\bbE \Psi_\ell^T \Psi_{\ell'} = \frac{n^2}{k^2 \Xi^2} \left[\gamma_1^2 (\sum_{r \neq \ell, \ell'} \mutil_r) ^2 + (\gamma_1^2 + \gamma_2^2) \mutil_\ell \mutil_{\ell'}\right] +  \frac{n}{k \Xi^2} \gamma_3 \gamma_4 \left[a^2 + b ^2\right] + \\
    & + \frac{n^2}{k^2 \Xi^2} \left[\gamma_1 \gamma_2 (\mutil_\ell^2 + \mutil_{\ell'}^2) + \gamma_1^2 ( \sum_{r \neq \ell, \ell'} \mutil_\ell \mutil_r +  \sum_{r \neq \ell, \ell'} \mutil_{\ell'} \mutil_r ) + \gamma_1 \gamma_2 (\sum_{r \neq \ell, \ell'} \mutil_\ell \mutil_r +  \sum_{r \neq \ell, \ell'} \mutil_{\ell'} \mutil_r ) \right]  = \\
     &= \frac{n^2}{k^2 \Xi^2} \left[\gamma_1^2 (k-2)  (1+ \frac{k\sigma^2}{n} + (k-3) r) + (\gamma_1^2 + \gamma_2^2) r + 2\gamma_1 \gamma_2 ( 1+ \frac{k\sigma^2}{n}) + 2(k-2)r \gamma_1^2 + 2 \gamma_1 \gamma_2 (k-2) r + 2\frac{k}{n} \gamma_3 \gamma_4 \sigma^2\right] = \\
     &\frac{n^2}{k^2 \Xi^2 } \left[\gamma_1^2 (k-2)  (1+ \frac{k\sigma^2}{n} + (k-3) r) + (\gamma_1^2 + \gamma_2^2) r + 2\gamma_1 \gamma_2 ( 1+ \frac{k\sigma^2}{n} + (k-2)r ) + 2(k-2)r \gamma_1^2 + 2\frac{k}{n} \gamma_3 \gamma_4 \sigma^2\right] =\\
     &\frac{n^2}{k^2 \Xi^2 } \left[\gamma_1^2 (k-2)  (1+ \frac{k\sigma^2}{n} + (k-1) r) + (\gamma_1^2 + \gamma_2^2) r + 2\gamma_1 \gamma_2 ( 1+ \frac{k\sigma^2}{n} + (k-2)r ) + 2\frac{k}{n} \gamma_3 \gamma_4 \sigma^2\right]
\end{align*}
Thus 
\begin{align*}
    (\Psi_{\ell,i}, \Psi_{\ell',i} ) \sim \calN(0, \begin{pmatrix}
        \Xi^2 & \Omega \Xi^2 \\
        \Omega \Xi^2 & \Xi^2
    \end{pmatrix} )
\end{align*}
And we can take $ (\Psi_{\ell,i}, \Psi_{\ell',i} ) \stackrel{d}{=} (G,G' ) \sim \calN(0, \begin{pmatrix}
        1 & \Omega \\
        \Omega  & 1
    \end{pmatrix} )$. Now we focus on the calculation of the classification error which requires the characterization of $\frac{\mu_\ell^T (w^*_{\ell'}-w^*_{\ell})}{\|w^*_{\ell'}-w^*_{\ell}\|}$. Beginning with $\|w^*_{\ell'}-w^*_{\ell}\|$, we have from concentration.
\begin{align*}
    &\frac{1}{d}\|ST(-\Psi_\ell; \lambda) - ST(-\Psi_{\ell'}; \lambda)\|^2 =  \bbE \left[(\Psi_{\ell,i} -\Psi_{\ell',i})^2 \mathds{1}(-\Psi_{\ell,i} > \lambda, -\Psi_{\ell',i} > \lambda)\right] + \\& +  \bbE \left[(\Psi_{\ell,i} -\Psi_{\ell',i})^2 \mathds{1}(-\Psi_{\ell,i} <- \lambda, -\Psi_{\ell',i} <- \lambda)\right] + 
    \bbE \left[(\Psi_{\ell,i} + \lambda)^2 \mathds{1}(-\Psi_{\ell,i} > \lambda, |\Psi_{\ell',i}| < \lambda)\right] + \\ & + \bbE \left[(\Psi_{\ell',i} + \lambda)^2 \mathds{1}(|\Psi_{\ell,i}| < \lambda, -\Psi_{\ell',i} > \lambda)\right] +  
    \bbE \left[(\Psi_{\ell,i} -\lambda)^2 \mathds{1}(-\Psi_{\ell,i} < -\lambda, |\Psi_{\ell',i}| < \lambda)\right] + \\& + \bbE \left[(\Psi_{\ell',i} -\lambda)^2 \mathds{1}(|\Psi_{\ell,i}| < \lambda, -\Psi_{\ell',i} < -\lambda)\right] + 
     \bbE \left[(\Psi_{\ell',i} -\Psi_{\ell,i} - 2 \lambda)^2 \mathds{1}(-\Psi_{\ell,i} > \lambda, -\Psi_{\ell',i} < -\lambda)\right] + \\& + \bbE \left[(\Psi_{\ell,i} -\Psi_{\ell',i} + 2 \lambda)^2 \mathds{1}(-\Psi_{\ell,i} < -\lambda, -\Psi_{\ell',i} > \lambda)\right] = \\ 
     &2 \bbE \left[(\Psi_{\ell,i} -\Psi_{\ell',i})^2 \mathds{1}(-\Psi_{\ell,i} > \lambda, -\Psi_{\ell',i} > \lambda)\right] + 4 \bbE \left[(\Psi_{\ell,i} -\lambda)^2 \mathds{1}(-\Psi_{\ell,i} < -\lambda, |\Psi_{\ell',i}| < \lambda)\right] + \\ & 2 \bbE \left[(\Psi_{\ell',i} -\Psi_{\ell,i} - 2 \lambda)^2 \mathds{1}(-\Psi_{\ell,i} > \lambda, -\Psi_{\ell',i} < -\lambda)\right] = 2 \Xi^2 \bbE \left[(G-G')^2 \mathds{1} (G < -\frac{\lambda}{\Xi}, G' < -\frac{\lambda}{\Xi}) \right] + \\ & + 4 \Xi^2 \bbE\left[(G-\frac{\lambda}{\Xi})^2 \mathds{1}(G>\frac{\lambda}{\Xi}, |G'| < \frac{\lambda}{\Xi})\right]  + 2 \Xi^2 \bbE \left[(G-G'+2\frac{\lambda}{\Xi})^2\mathds{1}(G<-\frac{\lambda}{\Xi}, G'> \frac{\lambda}{\Xi})\right]
\end{align*}

We calculate each term separately. Let $(G,G') \sim \calN(0, \calR) $ with $\calR = \begin{pmatrix}
    1 & \Omega \\
    \Omega & 1
\end{pmatrix}$
and $\calR^{-1}= \frac{1}{1-\Omega^2}\begin{pmatrix}
    1 & -\Omega \\
    -\Omega & 1
\end{pmatrix}$. Using this, one can write

\begin{align*}
    2 \Xi^2 \bbE \left[(G-G')^2 \mathds{1} (G < -\frac{\lambda}{\Xi}, G' < -\frac{\lambda}{\Xi}) \right] = \frac{\Xi^2}{\pi \sqrt{1-\Omega^2}}\int_{G > \frac{\lambda}{\Xi}} \int_{G' > \frac{\lambda}{\Xi}} (G- G')^2 \exp(-\frac{G^2 + G^{'2} - 2 \Omega G G'}{2(1-\Omega^2)})
\end{align*}
\begin{align*}
    4 \Xi^2 \bbE\left[(G-\frac{\lambda}{\Xi})^2 \mathds{1}(G>\frac{\lambda}{\Xi}, |G'| < \frac{\lambda}{\Xi})\right] = \frac{2\Xi^2}{\pi \sqrt{1-\Omega^2}}\int_{G > \frac{\lambda}{\Xi}} \int_{|G'| < \frac{\lambda}{\Xi}} (G-\frac{\lambda}{\Xi})^2 \exp(-\frac{G^2 + G^{'2} - 2 \Omega G G'}{2(1-\Omega^2)})
\end{align*}

\begin{align*}
    2 \Xi^2 \bbE \left[(G-G'+2\frac{\lambda}{\Xi})^2\mathds{1}(G<-\frac{\lambda}{\Xi}, G'> \frac{\lambda}{\Xi})\right] = \frac{\Xi^2}{\pi \sqrt{1-\Omega^2}}\int_{G > \frac{\lambda}{\Xi}} \int_{G' < -\frac{\lambda}{\Xi}} (G-G'-2\frac{\lambda}{\Xi})^2 \exp(-\frac{G^2 + G^{'2} - 2 \Omega G G'}{2(1-\Omega^2)})
\end{align*}

Combining results yields
\begin{align*}
    &\frac{1}{d}\|ST(-\Psi_\ell; \lambda) - ST(-\Psi_{\ell'}; \lambda)\|^2 = \frac{\Xi^2}{\pi \sqrt{1-\Omega^2}}\int_{G > \frac{\lambda}{\Xi}} \int_{G' > \frac{\lambda}{\Xi}} (G- G')^2 \exp(-\frac{G^2 + G^{'2} - 2 \Omega G G'}{2(1-\Omega^2)}) dG dG'  \\
    & + \frac{2\Xi^2}{\pi \sqrt{1-\Omega^2}}\int_{G > \frac{\lambda}{\Xi}} \int_{|G'| < \frac{\lambda}{\Xi}} (G-\frac{\lambda}{\Xi})^2 \exp(-\frac{G^2 + G^{'2} - 2 \Omega G G'}{2(1-\Omega^2)}) dG dG' \\
    & + \frac{\Xi^2}{\pi \sqrt{1-\Omega^2}}\int_{G > \frac{\lambda}{\Xi}} \int_{G' < -\frac{\lambda}{\Xi}} (G-G'-2\frac{\lambda}{\Xi})^2 \exp(-\frac{G^2 + G^{'2} - 2 \Omega G G'}{2(1-\Omega^2)}) dG dG'
\end{align*}

For the inner product, we utilize Lemma \ref{lem: stein's} and calculate the each term in $\mu_\ell^T (w^*_{\ell'}-w^*_{\ell})$ separately. First, we have
\begin{align*}
    &\mu_\ell^T w^*_\ell = \frac{d}{2\rho} \bbE \mu_{\ell,i} ST(-\Psi_{\ell,i};\lambda) = \frac{d}{2\rho} \sum_{\msr = 1}^k \bbE  \mu_{\ell,i} \mu_{\msr,i} \left[\bbE \mathds{1}(-\Psi_{\ell,i}>\lambda) \partial_{\mu_{\msr,i}}  (-\Psi_{\ell,i}-\lambda) + \bbE \mathds{1}(-\Psi_{\ell,i}<-\lambda)\partial_{\mu_{\msr,i}}  (-\Psi_{\ell,i}+\lambda) \right] = \\ & -\frac{dn}{2\rho k} (k-1) r \gamma_1 \bbE \mathds{1}(|\Psi_{\ell,i}| > \lambda)  - \frac{dn}{2\rho k} \gamma_2 \bbE \mathds{1}(|\Psi_{\ell,i}| > \lambda) = - \frac{dn}{2\rho k} R ((k-1)r \gamma_1 + \gamma_2)
\end{align*}
Also similarly,
\begin{align*}
     &\mu_\ell^T w^*_{\ell'} = \frac{d}{2\rho} \bbE \mu_{\ell,i} ST(-\Psi_{\ell',i};\lambda) = \frac{d}{2\rho} \sum_{\msr = 1}^k \bbE  \mu_{\ell,i} \mutil_{\msr,i} \left[\bbE \mathds{1}(-\Psi_{\ell',i}>\lambda) \partial_{\mu_{\msr,i}}  (-\Psi_{\ell',i}-\lambda) + \bbE \mathds{1}(-\Psi_{\ell',i}<-\lambda)\partial_{\mu_{\msr,i}}  (-\Psi_{\ell,i}+\lambda) \right] = \\
     & - r\frac{nd}{2\rho k}  \gamma_2 \bbE \mathds{1}(|\Psi_{\ell',i}| > \lambda) -  \frac{nd}{2\rho k}  (k-2) r \gamma_1 \bbE \mathds{1}(|\Psi_{\ell',i}| > \lambda) - \frac{nd}{2\rho k}  \gamma_1 \bbE \mathds{1}(|\Psi_{\ell',i}| > \lambda) = -\frac{nd}{2\rho k} R ( \gamma_1 (1 + (k-2) r ) + \gamma_2 r )
\end{align*}
Thus this yields the following
\begin{align*}
    \mu_\ell^T (w^*_{\ell'}-w^*_{\ell}) = -\frac{nd}{2\rho k} R (\gamma_1 (1 + (k-2) r ) + \gamma_2 r - (k-1)r \gamma_1 - \gamma_2) = -\frac{nd}{2\rho k}R (1-r) (\gamma_1 - \gamma_2)
\end{align*}
Therefore, summarizing,
\begin{align*}
    & \frac{\mu_\ell^T (w^*_{\ell'}-w^*_{\ell})}{\|w^*_{\ell'}-w^*_{\ell}\|} =  \frac{\frac{n\sqrt{d}}{k}R (1-r) (\gamma_2 - \gamma_1)}{|\Delta|(\Xi)} \\
    & \Delta^2(\Xi) := \frac{\Xi^2}{\pi \sqrt{1-\Omega^2}}\int_{G > \frac{\lambda}{\Xi}} \int_{G' > \frac{\lambda}{\Xi}} (G- G')^2 \exp(-\frac{G^2 + G^{'2} - 2 \Omega G G'}{2(1-\Omega^2)}) dG dG'  \\
    & + \frac{2\Xi^2}{\pi \sqrt{1-\Omega^2}}\int_{G > \frac{\lambda}{\Xi}} \int_{|G'| < \frac{\lambda}{\Xi}} (G-\frac{\lambda}{\Xi})^2 \exp(-\frac{G^2 + G^{'2} - 2 \Omega G G'}{2(1-\Omega^2)}) dG dG' \\
    & + \frac{\Xi^2}{\pi \sqrt{1-\Omega^2}}\int_{G > \frac{\lambda}{\Xi}} \int_{G' < -\frac{\lambda}{\Xi}} (G-G'-2\frac{\lambda}{\Xi})^2 \exp(-\frac{G^2 + G^{'2} - 2 \Omega G G'}{2(1-\Omega^2)}) dG dG'
\end{align*}
Now we apply Theorem \ref{thm : solutions_match} and observe that with high probability, for a large enough $\lambda$,
$$
|P_{e, \lambda}  - Q_k (\frac{n\sqrt{d}(1-r) (\gamma_2 - \gamma_1) R(\Xi) }{k|\Delta|(\Xi)}) | < \epsilon
$$
\end{proof}
\section{Proof of Corollary \ref{cor: l1}}
By the result from Theorem $\ref{cor: dist_match}$, we observe that the sparsity rate predicted for $\phiu$ and $\tilde{\Phi}$ can be made arbitrarily close as $\lambda \rightarrow \infty$. 

Now to prove the Corollary \ref{cor: l1}, observe that we have from the previous theorem:
    \begin{align*}
        dR = \frac{2\rho k \mu_\ell^T (w^*_{\ell'}-w^*_{\ell})}{n(1-r)(\gamma_2 - \gamma_1)}
    \end{align*}
    First note that by the Fenchel dual argument, 
    \begin{align*}
        \gamma_1 = 2(\mutil_\ell^Tw^*_{\ell'}-\frac{c}{k-1}), \quad \gamma_2 = 2(\mutil_\ell^Tw^*_{\ell}-(1-c))
    \end{align*}
    Now, by taking $\lambda$ large enough, one can have $|\mutil_\ell^Tw^*_{\ell'}| < \epsilon$ and $|\mutil_\ell^Tw^*_{\ell}| < \epsilon$. This is possible since both terms are present in the optimization and increasing $\lambda$ would make $w^*_{\ell} = 0$ a feasible solution. This implies $\gamma_2 - \gamma_1 \xrightarrow{\lambda \rightarrow \infty} \frac{ck}{k-1}-1$. Furthermore, the condition $\sigma \ll \sqrt{n}$ ensures that $\mutil_\ell^Tw^*_{\ell'}$ and $\mu_\ell^Tw^*_{\ell'}$ stay close. Thus $\mu_\ell^T (w^*_{\ell'}-w^*_{\ell})\xrightarrow{\lambda\rightarrow\infty}0$. Therefore 
    \begin{align*}
        dR = \frac{2\rho k \mu_\ell^T (w^*_{\ell'}-w^*_{\ell})}{n(1-r)(\gamma_2 - \gamma_1)} \xrightarrow{\lambda \rightarrow \infty} 0
    \end{align*}

\section{Proof of Theorem \ref{thm: linf}}
\begin{theorem}
    ($\|\cdot\|_\infty$ - Regularization)
    Under the assumptions (A1)-(A3), the classification error corresponding to $f(\cdot) = \|\cdot\|_\infty$ can be described as follows in the large $\lambda$ regime with probability approaching $1$ as $n \to \infty$:
    $$
    \lim_{\lambda \to \infty} P_{e, \lambda} =  \lim_{\lambda \to \infty} Q_k ( \frac{n\sqrt{d} (1-r) (\gamma_2 - \gamma_1) (1-R(\Xi)) }{2k\rho|\Delta|(\Xi)})
    $$
    Where
    \begin{align*}
         &\Xi^2:=\frac{n^2}{k^2} (k-1) \gamma_1^2(1+\frac{k\sigma^2}{n}+(k-2)r)  + \gamma_2^2 (1+\frac{k\sigma^2}{n})) + \frac{n^2}{k^2}(2 \gamma_1 \gamma_2 (k-1)r  +\frac{k}{n}\sigma^2 (\gamma_3^2+\gamma_4^2)) \\
        & R(\Xi) := 2 Q(\frac{2\rho \delta}{\Xi \lambda}) \\
        & \Delta(\Xi)^2 :=  \frac{\Xi^2 }{8 \rho^2 \pi \sqrt{1-\Omega^2}} \int_{|G|<\frac{2\rho \delta}{\Xi \lambda}} \int_{|G'|<\frac{2\rho \delta}{\Xi \lambda}} (G- G')^2 \exp(-\frac{G^2 + G^{'2} - 2 \Omega G G'}{2(1-\Omega^2)}) dG dG' + \\
    &\frac{4 \delta^2}{\lambda^2 \pi  \sqrt{1-\Omega^2}} \int_{G<-\frac{2\rho \delta}{\Xi \lambda}} \int_{G'>\frac{2\rho \delta}{\Xi \lambda}} \exp(-\frac{G^2 + G^{'2} - 2 \Omega G G'}{2(1-\Omega^2)}) dG dG'+ \\
    &\frac{ \Xi^2}{4 \rho^2 \pi \sqrt{1-\Omega^2}} \int_{|G|<\frac{2\rho \delta}{\Xi \lambda}} \int_{G'<-\frac{2\rho \delta}{\Xi \lambda}} (G+ \frac{2\rho \delta}{\lambda \Xi})^2 \exp(-\frac{G^2 + G^{'2} - 2 \Omega G G'}{2(1-\Omega^2)}) dG dG' + \\
    &\frac{\Xi^2}{4 \rho^2 \pi \sqrt{1-\Omega^2}} \int_{|G|<\frac{2\rho \delta}{\Xi \lambda}} \int_{G'>\frac{2\rho \delta}{\Xi \lambda}} (G- \frac{2\rho \delta}{\lambda \Xi})^2 \exp(-\frac{G^2 + G^{'2} - 2 \Omega G G'}{2(1-\Omega^2)}) dG dG' \\
    & \Omega:= \frac{n^2}{k^2 \Xi^2 } \left[\gamma_1^2 (k-2)  (1+ \frac{k\sigma^2}{n} + (k-1) r) + (\gamma_1^2 + \gamma_2^2) r + 2\gamma_1 \gamma_2 ( 1+ \frac{k\sigma^2}{n} + (k-2)r ) + 2\frac{k}{n} \gamma_3 \gamma_4 \sigma^2\right]
    \end{align*}
In which $\gamma_1, \gamma_2, \gamma_3, \gamma_4$ are the solutions to the following scalar optimization problem:
\begin{align*}
        & \min_{\delta \ge 0}  \max_{\gamma_1,\gamma_2,\gamma_3,\gamma_4} \delta +\frac{d\delta^2 \rho R(\Xi)}{\lambda^2} - (1-R) \frac{d \Xi^2}{4\rho} - \frac{d\delta\Xi}{\lambda\sqrt{2\pi}}\exp(-\frac{2\rho^2 \delta^2}{\Xi^2 \lambda^2}) \\
        & + \frac{n}{k} \left[-c\gamma_1-(k-1)\frac{\gamma_1^2}{4}-(1-c)\gamma_2-\frac{\gamma_2^2}{4}-s\gamma_3-\frac{\gamma_3^2}{4}-t\gamma_4-\frac{\gamma_4^2}{4}\right]
\end{align*} 
With $\rho:=n\sigma^2$. Moreover, $\zeta$ of the coordinates of the weights are equal to $\frac{\delta}{\lambda}$ and $\zeta$ of the weights are equal to $-\frac{\delta}{\lambda}$ and the rest of them take values between $-\frac{\delta}{\lambda}$ and $\frac{\delta}{\lambda}$ with $\zeta = \left \lceil \frac{R(\Xi)}{2} \right \rceil = \left \lceil Q(\frac{2\delta\rho}{\lambda \Xi}) \right \rceil$
\label{thm: linf_complete}
\end{theorem}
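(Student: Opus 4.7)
The plan is to follow the same CGMT-based strategy used in Theorem \ref{thm: l1}, applied to $\phiu^{\ell}(w_\ell) + \phiu^{\ell'}(w_{\ell'})$, but now with $f(w) = \|w\|_\infty$. By Theorem \ref{thm : solutions_match} it suffices to characterize the classification error arising from this upper-bound sandwich problem in the $\lambda \to \infty$ regime. The key structural difference from the LASSO proof is that the subgradient structure of $\|\cdot\|_\infty$ is dual to that of $\|\cdot\|_1$, so instead of a soft-threshold the per-coordinate solution will be a clipped (hard-projected) quantity.

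First I would reformulate the $\ell_\infty$ penalty through the variational identity $\lambda \|w\|_\infty = \min_{\delta \ge 0}\{\delta : \lambda |w_i| \le \delta \text{ for all } i\}$, which introduces an outer scalar minimization over $\delta \ge 0$ and replaces the penalty by an additive $\delta$ together with the box constraint $|w_i| \le \delta/\lambda$. Next, exactly as in the LASSO derivation, I would apply Fenchel duality on each of the four quadratic data-fit terms, introducing dual scalars $\gamma_1,\gamma_2,\gamma_3,\gamma_4$ common to both classes by symmetry. This collapses the $w_\ell$-minimization into a decoupled family of one-dimensional box-constrained problems
\begin{equation*}
    \min_{|x|\le \delta/\lambda}\; \Psi_{\ell,i}\, x + \rho\, x^2,\qquad \rho := n\sigma^2,
\end{equation*}
where $\Psi_\ell$ is the aggregate random vector introduced in the LASSO proof. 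The optimum is the clipped ridge estimator $x^* = \Pi_{[-\delta/\lambda,\delta/\lambda]}\!\bigl(-\Psi_{\ell,i}/(2\rho)\bigr)$, which is the dual counterpart of the soft-threshold that appeared before.

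Substituting this closed-form scalar minimum back and using $\Psi_{\ell,i}\stackrel{d}{\to}\mathcal{N}(0,\Xi^2)$ with the same $\Xi^2$ as in Theorem \ref{thm: l_1}, the $d$-coordinate sum concentrates to $d$ times a one-dimensional Gaussian expectation, split along the region $\{|\Psi_{\ell,i}| \le 2\rho\delta/\lambda\}$ (interior of the box) and its complement (saturated at $\pm \delta/\lambda$). A direct computation yields the three terms $-(1-R)\,d\Xi^2/(4\rho)$, $d\delta^2\rho R/\lambda^2$, and the Gaussian tail correction $-d\delta\Xi/(\lambda\sqrt{2\pi})\exp(-2\rho^2\delta^2/(\Xi^2\lambda^2))$, with $R(\Xi)=2Q(2\rho\delta/(\lambda\Xi))$ being exactly the saturation probability. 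Together with the additive $\delta$ and the standard Fenchel scalar terms this produces the stated $\min_{\delta\ge 0}\max_{\gamma}$ objective and the stated description of the weight distribution: a fraction $R/2$ at each of $\pm\delta/\lambda$ and the rest spread in $(-\delta/\lambda,\delta/\lambda)$.

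To extract the classification error I would then compute $\mu_\ell^T(w_{\ell'}-w_\ell)$ and $\|w_{\ell'}-w_\ell\|^2$ from the explicit clipped form of $w^*$. Since $(\Psi_{\ell,i},\Psi_{\ell',i})$ is jointly Gaussian with variance $\Xi^2$ and correlation $\Omega$ (already identified in the LASSO proof), the normalized squared norm concentrates to $\Delta^2(\Xi)$, and the four bivariate integrals defining $\Delta^2(\Xi)$ correspond precisely to the four regions of $(G,G')$-space on which the map $(\Psi_{\ell,i},\Psi_{\ell',i})\mapsto w_{\ell,i}^*-w_{\ell',i}^*$ takes a different piecewise-smooth form (both interior, mixed interior/boundary with either sign, and opposite boundaries). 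The inner product $\mu_\ell^T(w_{\ell'}-w_\ell)$ follows from the Gaussian integration-by-parts identity of Lemma \ref{lem: stein's}; the derivative of the clipped map is supported only on the interior region, which is exactly why the numerator carries the factor $(1-R(\Xi))$. Inserting these into \eqref{eq: class_error} produces the claimed $Q_k$-expression.

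The main obstacle I anticipate is the careful bookkeeping of the four piecewise regions in the computation of $\Delta^2(\Xi)$, together with verifying that the outer $\min_\delta$ commutes with the inner $\max_\gamma$ (strong duality in the saddle-point problem, which should follow from convex-concavity once the box constraint has been absorbed into the $\delta$ variable). A secondary technical point is the application of Stein's lemma to the clipped map, which is only Lipschitz and not everywhere differentiable; this should follow by standard mollification, but requires that the boundary set $\{|\Psi|=2\rho\delta/\lambda\}$ have measure zero, which it does almost surely.
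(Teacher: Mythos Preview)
Your proposal is correct and follows essentially the same approach as the paper: variational rewriting of $\lambda\|w\|_\infty$ via $\min_{\delta\ge 0}$ with a box constraint, Fenchel dualization of the quadratic terms, solving the resulting scalar box-constrained quadratic to get the clipped (projected) solution, concentration to obtain the scalarized $\min_\delta\max_\gamma$ objective, and then computing $\|w_\ell-w_{\ell'}\|^2$ via the four bivariate regions of $(G,G')$ and $\mu_\ell^T(w_{\ell'}-w_\ell)$ via Lemma~\ref{lem: stein's}, with the $(1-R)$ factor arising exactly because the derivative of the clipping map is supported on the interior. The paper does not explicitly justify the min--max swap or the Stein step on the merely Lipschitz map either, so your anticipated obstacles are not gaps relative to the original proof.
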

\begin{proof}
Similar to the $\|\cdot\|_1$ case, we proceed by dividing the proof into several steps.
\subsection{Scalarization of $\phiu$}
Introducing $\delta$ for $\lambda \|w_{\ell}\|_{\infty}$, we have for arbitrary $\rho > 0$
\begin{align*}
\min_{\lambda \|w_{\ell}\|_{\infty}\le \delta} \max_{\gamma}\frac{n}{k}\left[\sum_{\msr \ne \ell }[\gamma_{\ell,1}(w_\ell^T\mutil_{\msr}-\frac{c}{k-1})-\frac{\gamma_{\ell,1}^2}{4}]+\gamma_{\ell,2}(w_\ell^T\mutil_{\ell}-(1-c))-\frac{\gamma_{\ell,2}^2}{4} \right] + \\
 + \frac{n}{k} \left [\gamma_{\ell,3}(\sqrt{\frac{k}{n}}w_\ell^Ta-s) - \frac{\gamma_{\ell,3}^2}{4}+\gamma_{\ell,4}(\sqrt{\frac{k}{n}}w_\ell^Tb-t)-\frac{\gamma_{\ell,4}^2}{4}\right] + \rho\|w_\ell\|^2 +  \delta
\end{align*}
The optimization over $w_\ell$ is
\begin{align*}
   \min_{\|w_{\ell}\|_{\infty} \le \frac{\delta}{\lambda}} \frac{n}{k} \left[\gamma_{\ell,1}\sum_{r \neq \ell} \mutil_r +\gamma_{\ell,2} \mutil_{\ell} + \gamma_{\ell,3} \sqrt{\frac{k}{n}} a +  \gamma_{\ell,4} \sqrt{\frac{k}{n}} b\right]^Tw_{\ell}  + \rho \|w_\ell\|^2 = \min_{\|w_{\ell}\|_{\infty} \le \frac{\delta}{\lambda}} \Psi_\ell ^ T w_{\ell}  + \rho \|w_\ell\|^2
\end{align*}
For the counterpart scalar optimization $\min_{|x|<b} ax + \rho x^2$ we have:
\begin{align*}
    x^* &= \begin{cases}
        b & \frac{-a}{2\rho} > b \\
        \frac{-a}{2\rho} & -b\le \frac{-a}{2\rho}\le b \\
        -b & \frac{-a}{2\rho} < -b
    \end{cases}\\
    \min_{|x|<b} ax + \rho x^2 &=  \begin{cases}
        ab + \rho b^2 & \frac{-a}{2\rho} > b \\
        \frac{-a^2}{4\rho} & -b\le \frac{-a}{2\rho}\le b \\
        -ab + \rho b^2 & \frac{-a}{2\rho} < -b
    \end{cases}
\end{align*}
Applying to our problem, then for the solution we have:
\begin{align*}
    w^*_{\ell,i} = \begin{cases}
        \frac{\delta}{\lambda} & \frac{-\Psi_{\ell,i}}{2\rho} >  \frac{\delta}{\lambda} \\
        \frac{-\Psi_{\ell,i}}{2\rho} & -\frac{\delta}{\lambda} \le \frac{-\Psi_{\ell,i}}{2\rho} \le  \frac{\delta}{\lambda} \\
        -\frac{\delta}{\lambda} & \frac{-\Psi_{\ell,i}}{2\rho} <  -\frac{\delta}{\lambda}
    \end{cases} = \frac{-\Psi_{\ell,i}}{2\rho} - (\frac{-\Psi_{\ell,i}}{2\rho} - \frac{\delta}{\lambda}) \mathds{1}(\frac{-\Psi_{\ell,i}}{2\rho} >  \frac{\delta}{\lambda}) - (\frac{-\Psi_{\ell,i}}{2\rho} + \frac{\delta}{\lambda}) \mathds{1}(\frac{-\Psi_{\ell,i}}{2\rho} <  -\frac{\delta}{\lambda}) 
\end{align*}
Let 
\begin{align*}
    R = \bbP(|\Psi_\ell| = \Xi G \ge \frac{2\rho \delta}{\lambda}) = 2 Q(\frac{2\rho \delta}{\Xi \lambda})
\end{align*}
All in all
\begin{align*}
    &\min_{\|w_{\ell}\|_{\infty} \le \frac{\delta}{\lambda}} \Psi_\ell ^ T w_{\ell}  + \rho \|w_\ell\|^2 = d \frac{\delta}{\lambda} \bbE \left[ - \Xi G + \rho \frac{\delta}{\lambda}  \right]\mathds{1}(\frac{\Xi G}{2 \rho} > \frac{\delta}{\lambda}) + d \frac{\delta}{\lambda} \bbE \left[ \Xi G + \rho \frac{\delta}{\lambda}  \right]\mathds{1}(\frac{\Xi G}{2 \rho} <- \frac{\delta}{\lambda}) \\
    &- \frac{d}{4\rho} \bbE \Xi^2 G^2  \mathds{1}(-\frac{\delta}{\lambda}\le \frac{\Xi G}{2 \rho} \le \frac{\delta}{\lambda}) = \\
    &\frac{Rd\delta^2 \rho}{\lambda^2} + \frac{d\delta \Xi}{\lambda} \bbE ( -G \mathds{1}(G > \frac{2 \rho\delta}{\Xi \lambda}) + G \mathds{1}(G < -\frac{2 \rho\delta}{\Xi \lambda})) - \frac{d \Xi^2}{4\rho} \bbE G^2 \mathds{1}( -\frac{2 \rho\delta}{\Xi \lambda} \le G \le  \frac{2 \rho\delta}{\Xi \lambda})) = \\
    &\frac{Rd\delta^2 \rho}{\lambda^2} - \frac{d\delta\Xi}{\lambda}\bbE |G| \mathds{1}(|G| > \frac{2 \rho\delta}{\Xi \lambda}) - \frac{d \Xi^2}{4\rho} \bbE G^2 \mathds{1}( -\frac{2 \rho\delta}{\Xi \lambda} \le G \le  \frac{2 \rho\delta}{\Xi \lambda}) = \\
    &\frac{Rd\delta^2 \rho}{\lambda^2} - \frac{2d\delta\Xi}{\lambda\sqrt{2\pi}}\exp(-\frac{2\rho^2 \delta^2}{\Xi^2 \lambda^2}) -  \frac{d\Xi^2}{4 \rho} \left[1-2Q(\frac{2 \rho\delta}{\Xi \lambda}) - \sqrt{\frac{2}{\pi}} \frac{2 \rho\delta}{\Xi \lambda} \exp(-\frac{2\rho^2 \delta^2}{\Xi^2 \lambda^2})\right] = \\
    &\frac{Rd\delta^2 \rho}{\lambda^2} - \frac{d \Xi^2}{4\rho} - \frac{d\delta\Xi}{\lambda\sqrt{2\pi}}\exp(-\frac{2\rho^2 \delta^2}{\Xi^2 \lambda^2}) + \frac{Rd\Xi^2}{4\rho}
\end{align*}
Where we had by definition $\Xi =  \frac{n}{k} \sqrt{(k-1) \gamma_{\ell,1}^2(1+\frac{k\sigma^2}{n}+(k-2)r)  + \gamma_{\ell,2}^2 (1+\frac{k\sigma^2}{n}) + 2 \gamma_{\ell,1} \gamma_{\ell,2} (k-1)r  +\frac{k}{n}\sigma^2 (\gamma_{\ell,3}^2+\gamma_{\ell,4}^2)} $
Summarizing, the objective is:
\begin{align*}
    \min_{\delta \ge 0}  \max_{\gamma_{\ell,1},\gamma_{\ell,2},\gamma_{\ell,3},\gamma_{\ell,4}} \frac{n}{k} \left[-c\gamma_{\ell,1}-(k-1)\frac{\gamma_{\ell,1}^2}{4}-(1-c)\gamma_{\ell,2}-\frac{\gamma_{\ell,2}^2}{4}-s\gamma_{\ell,3}-\frac{\gamma_{\ell,3}^2}{4}-t\gamma_{\ell,4}-\frac{\gamma_{\ell,4}^2}{4}\right] + \delta \\ +\frac{Rd\delta^2 \rho}{\lambda^2} - (1-R) \frac{d \Xi^2}{4\rho} - \frac{d\delta\Xi}{\lambda\sqrt{2\pi}}\exp(-\frac{2\rho^2 \delta^2}{\Xi^2 \lambda^2})
\end{align*}
\subsection{Derivation of the classification error}
For the classification error similar to the previous case, we tackle the problem by determining $\| w^*_{\ell} -  w^*_{\ell'} \|$ and $\mu_\ell^T (w^*_{\ell'}-w^*_{\ell})$. To start off, we have for $\| w^*_{\ell} -  w^*_{\ell'} \|$:
\begin{align*}
    &\| w^*_{\ell} -  w^*_{\ell'} \|^2 = d \bbE ( w^*_{\ell,i} -  w^*_{\ell',i} )^2 = d \bbE(\frac{\Psi_{\ell,i}}{2\rho} - \frac{\Psi_{\ell',i}}{2\rho})^2 \mathds{1}(|\frac{\Psi_{\ell,i}}{2\rho}| < \frac{\delta}{\lambda},|\frac{\Psi_{\ell',i}}{2\rho}| < \frac{\delta}{\lambda}) +\\ &+ 2d \bbE (\frac{\delta}{\lambda} + \frac{\delta}{\lambda})^2 \mathds{1}(-\frac{\Psi_{\ell,i}}{2\rho} > \frac{\delta}{\lambda}, -\frac{\Psi_{\ell',i}}{2\rho} <- \frac{\delta}{\lambda}) + 2d \bbE (\frac{\Psi_{\ell,i}}{2\rho} + \frac{\delta}{\lambda})^2 \mathds{1}(|\frac{\Psi_{\ell,i}}{2\rho}| < \frac{\delta}{\lambda},-\frac{\Psi_{\ell',i}}{2\rho} > \frac{\delta}{\lambda}) + \\
    & + 2d \bbE (\frac{\Psi_{\ell,i}}{2\rho} - \frac{\delta}{\lambda})^2 \mathds{1}(|\frac{\Psi_{\ell,i}}{2\rho}| < \frac{\delta}{\lambda},-\frac{\Psi_{\ell',i}}{2\rho} < -\frac{\delta}{\lambda}) = \\
    &d \frac{\Xi^2}{4\rho^2} \bbE (G - G')^2 \mathds{1}(|G| < \frac{2\rho \delta}{\Xi\lambda},|G'| < \frac{2\rho \delta}{\Xi\lambda}) + 8d \frac{\delta^2}{\lambda^2} \bbE \mathds{1}(-G > \frac{2\rho \delta}{\Xi\lambda},-G' < -\frac{2\rho \delta}{\Xi\lambda}) + \\
    &+ 2d \frac{\Xi^2}{4\rho^2} \bbE (G +  \frac{2\rho \delta}{\Xi\lambda})^2 \mathds{1}(|G| < \frac{2\rho \delta}{\Xi\lambda},-G' > \frac{2\rho \delta}{\Xi\lambda})+ 2d \frac{\Xi^2}{4\rho^2} \bbE (G -  \frac{2\rho \delta}{\Xi\lambda})^2 \mathds{1}(|G| < \frac{2\rho \delta}{\Xi\lambda},-G' < -\frac{2\rho \delta}{\Xi\lambda})
\end{align*}
Expanding each term
\begin{align*}
    & d \frac{\Xi^2}{4\rho^2} \bbE (G - G')^2 \mathds{1}(|G| < \frac{2\rho \delta}{\Xi\lambda},|G'| < \frac{2\rho \delta}{\Xi\lambda}) = \frac{d \Xi^2 }{8 \rho^2 \pi \sqrt{1-\Omega^2}} \int_{|G|<\frac{2\rho \delta}{\Xi \lambda}} \int_{|G'|<\frac{2\rho \delta}{\Xi \lambda}} (G- G')^2 \exp(-\frac{G^2 + G^{'2} - 2 \Omega G G'}{2(1-\Omega^2)}) dG dG' \\
    & 8d \frac{\delta^2}{\lambda^2} \bbE \mathds{1}(-G > \frac{2\rho \delta}{\Xi\lambda},-G' < -\frac{2\rho \delta}{\Xi\lambda}) = \frac{4 d \delta^2}{\lambda^2 \pi  \sqrt{1-\Omega^2}} \int_{G<-\frac{2\rho \delta}{\Xi \lambda}} \int_{G'>\frac{2\rho \delta}{\Xi \lambda}} \exp(-\frac{G^2 + G^{'2} - 2 \Omega G G'}{2(1-\Omega^2)}) dG dG'\\
    \end{align*}
\begin{align*}
     2d \frac{\Xi^2}{4\rho^2} \bbE (G +  \frac{2\rho \delta}{\Xi\lambda})^2 \mathds{1}(|G| < \frac{2\rho \delta}{\Xi\lambda},-G' & > \frac{2\rho \delta}{\Xi\lambda}) =  \\ &\frac{d \Xi^2}{4 \rho^2 \pi \sqrt{1-\Omega^2}} \int_{|G|<\frac{2\rho \delta}{\Xi \lambda}} \int_{G'<-\frac{2\rho \delta}{\Xi \lambda}} (G+ \frac{2\rho \delta}{\lambda \Xi})^2 \exp(-\frac{G^2 + G^{'2} - 2 \Omega G G'}{2(1-\Omega^2)}) dG dG' \\
      2d \frac{\Xi^2}{4\rho^2} \bbE (G -  \frac{2\rho \delta}{\Xi\lambda})^2 \mathds{1}(|G| < \frac{2\rho \delta}{\Xi\lambda},-G' &< -\frac{2\rho \delta}{\Xi\lambda})  =  \\ & \frac{d \Xi^2}{4 \rho^2 \pi \sqrt{1-\Omega^2}} \int_{|G|<\frac{2\rho \delta}{\Xi \lambda}} \int_{G'>\frac{2\rho \delta}{\Xi \lambda}} (G- \frac{2\rho \delta}{\lambda \Xi})^2 \exp(-\frac{G^2 + G^{'2} - 2 \Omega G G'}{2(1-\Omega^2)}) dG dG' 
\end{align*}
Combining all of the terms implies
\begin{align*}
    &\| w^*_{\ell} -  w^*_{\ell'} \|^2 = \frac{d \Xi^2 }{8 \rho^2 \pi \sqrt{1-\Omega^2}} \int_{|G|<\frac{2\rho \delta}{\Xi \lambda}} \int_{|G'|<\frac{2\rho \delta}{\Xi \lambda}} (G- G')^2 \exp(-\frac{G^2 + G^{'2} - 2 \Omega G G'}{2(1-\Omega^2)}) dG dG' + \\
    &\frac{4 d \delta^2}{\lambda^2 \pi  \sqrt{1-\Omega^2}} \int_{G<-\frac{2\rho \delta}{\Xi \lambda}} \int_{G'>\frac{2\rho \delta}{\Xi \lambda}} \exp(-\frac{G^2 + G^{'2} - 2 \Omega G G'}{2(1-\Omega^2)}) dG dG'+ \\
    &\frac{d \Xi^2}{4 \rho^2 \pi \sqrt{1-\Omega^2}} \int_{|G|<\frac{2\rho \delta}{\Xi \lambda}} \int_{G'<-\frac{2\rho \delta}{\Xi \lambda}} (G+ \frac{2\rho \delta}{\lambda \Xi})^2 \exp(-\frac{G^2 + G^{'2} - 2 \Omega G G'}{2(1-\Omega^2)}) dG dG' + \\
    &\frac{d \Xi^2}{4 \rho^2 \pi \sqrt{1-\Omega^2}} \int_{|G|<\frac{2\rho \delta}{\Xi \lambda}} \int_{G'>\frac{2\rho \delta}{\Xi \lambda}} (G- \frac{2\rho \delta}{\lambda \Xi})^2 \exp(-\frac{G^2 + G^{'2} - 2 \Omega G G'}{2(1-\Omega^2)}) dG dG'  
\end{align*}
For the inner product, we use the Lemma \ref{lem: stein's}:
\begin{align*}
    &\mu_\ell^T w^*_\ell = d \bbE \mu_{\ell,i} w^*_{\ell,i}) = d \sum_{\msr = 1}^k \bbE  \mu_{\ell,i} \mu_{\msr,i} \bbE \left[ \partial_{\mu_{\msr,i}} \frac{-\Psi_{\ell,i}}{2\rho} - 
    \mathds{1}(\frac{-\Psi_{\ell,i}}{2\rho} >  \frac{\delta}{\lambda})\partial_{\mu_{\msr,i}}(\frac{-\Psi_{\ell,i}}{2\rho} - \frac{\delta}{\lambda})  - \mathds{1}(\frac{-\Psi_{\ell,i}}{2\rho} <  -\frac{\delta}{\lambda}) \partial_{\mu_{\msr,i}}(\frac{-\Psi_{\ell,i}}{2\rho} + \frac{\delta}{\lambda})  \right] = \\
    & d (k-1) r \frac{-n \gamma_1 }{2k\rho } \bbE \left[ 1 - \mathds{1}(\frac{-\Psi_{\ell,i}}{2\rho} >  \frac{\delta}{\lambda}) -  \mathds{1}(\frac{-\Psi_{\ell,i}}{2\rho} <  -\frac{\delta}{\lambda})\right] + d \frac{-n\gamma_2}{2k\rho} \bbE \left[ 1 - \mathds{1}(\frac{-\Psi_{\ell,i}}{2\rho} >  \frac{\delta}{\lambda}) -  \mathds{1}(\frac{-\Psi_{\ell,i}}{2\rho} <  -\frac{\delta}{\lambda})\right] =  \\
    &-\frac{nd(1-R)}{2k\rho}((k-1)r \gamma_1 + \gamma_2)
\end{align*}
Moreover,
\begin{align*}
    \mu_\ell^T w^*_{\ell'} = d & \bbE \mu_{\ell,i} w^*_{\ell',i} = \\&=d \sum_{\msr = 1}^k \bbE  \mu_{\ell,i} \mu_{\msr,i} \bbE \left[ \partial_{\mu_{\msr,i}} \frac{-\Psi_{\ell',i}}{2\rho} - 
    \mathds{1}(\frac{-\Psi_{\ell',i}}{2\rho} >  \frac{\delta}{\lambda})\partial_{\mu_{\msr,i}}(\frac{-\Psi_{\ell',i}}{2\rho} - \frac{\delta}{\lambda})  - \mathds{1}(\frac{-\Psi_{\ell',i}}{2\rho} <  -\frac{\delta}{\lambda}) \partial_{\mu_{\msr,i}}(\frac{-\Psi_{\ell',i}}{2\rho} + \frac{\delta}{\lambda})  \right] = 
\end{align*}
\begin{align*}
    & d (k-2) r \frac{-n\gamma_1}{2k\rho} \bbE \left[ 1 - \mathds{1}(\frac{-\Psi_{\ell',i}}{2\rho} >  \frac{\delta}{\lambda}) -  \mathds{1}(\frac{-\Psi_{\ell',i}}{2\rho} <  -\frac{\delta}{\lambda})\right] + d  \frac{-n\gamma_1}{2k\rho} \bbE \left[ 1 - \mathds{1}(\frac{-\Psi_{\ell',i}}{2\rho} >  \frac{\delta}{\lambda}) -  \mathds{1}(\frac{-\Psi_{\ell',i}}{2\rho} <  -\frac{\delta}{\lambda})\right] +  \\ & + d \frac{-nr\gamma_2}{2k\rho} \bbE \left[ 1 - \mathds{1}(\frac{-\Psi_{\ell',i}}{2\rho} >  \frac{\delta}{\lambda}) -  \mathds{1}(\frac{-\Psi_{\ell',i}}{2\rho} <  -\frac{\delta}{\lambda})\right] = 
    -\frac{dn(1-R)}{2k\rho}(\gamma_1(1+(k-2)r)  + r\gamma_2)
\end{align*}
All in all
\begin{align*}
    \mu_\ell^T (w^*_{\ell'}-w^*_{\ell}) = -dn\frac{1-R}{2k\rho} (\gamma_1 (1 + (k-2) r ) + \gamma_2 r - (k-1)r \gamma_1 - \gamma_2) = -dn\frac{1-R}{2k\rho} (1-r)  (\gamma_1 - \gamma_2)
\end{align*}
Summarizing the results yields
\begin{align*}
    & \frac{\mu_\ell^T (w^*_{\ell'}-w^*_{\ell})}{\|w^*_{\ell'}-w^*_{\ell})\|} =  \frac{n\sqrt{d} (1-r) (\gamma_2 - \gamma_1) (1-R(\Xi)) }{2k\rho\Delta(\Xi)} \\
    & \Delta(\Xi)^2 :=  \frac{\Xi^2 }{8 \rho^2 \pi \sqrt{1-\Omega^2}} \int_{|G|<\frac{2\rho \delta}{\Xi \lambda}} \int_{|G'|<\frac{2\rho \delta}{\Xi \lambda}} (G- G')^2 \exp(-\frac{G^2 + G^{'2} - 2 \Omega G G'}{2(1-\Omega^2)}) dG dG' + \\
    &\frac{4 \delta^2}{\lambda^2 \pi  \sqrt{1-\Omega^2}} \int_{G<-\frac{2\rho \delta}{\Xi \lambda}} \int_{G'>\frac{2\rho \delta}{\Xi \lambda}} \exp(-\frac{G^2 + G^{'2} - 2 \Omega G G'}{2(1-\Omega^2)}) dG dG'+ \\
    &\frac{ \Xi^2}{4 \rho^2 \pi \sqrt{1-\Omega^2}} \int_{|G|<\frac{2\rho \delta}{\Xi \lambda}} \int_{G'<-\frac{2\rho \delta}{\Xi \lambda}} (G+ \frac{2\rho \delta}{\lambda \Xi})^2 \exp(-\frac{G^2 + G^{'2} - 2 \Omega G G'}{2(1-\Omega^2)}) dG dG' + \\
    &\frac{\Xi^2}{4 \rho^2 \pi \sqrt{1-\Omega^2}} \int_{|G|<\frac{2\rho \delta}{\Xi \lambda}} \int_{G'>\frac{2\rho \delta}{\Xi \lambda}} (G- \frac{2\rho \delta}{\lambda \Xi})^2 \exp(-\frac{G^2 + G^{'2} - 2 \Omega G G'}{2(1-\Omega^2)}) dG dG'  
\end{align*}
\end{proof}
\section{Proof of Corollary \ref{cor: linf}}
Similar to the previous case by the result from Theorem $\ref{cor: dist_match}$, we observe that the $\zeta$ predicted from $\phiu$ matches $\Phi$.

Moreover, from the previous theorem, we have
    \begin{align*}
       \frac{d (1-2 \zeta)}{2} =  \frac{k\rho \mu_\ell^T (w^*_{\ell'}-w^*_{\ell})}{ n (1-r) (\gamma_2 - \gamma_1)}  
    \end{align*}
    Similarly to the case $f(\cdot) = \|\cdot\|_1$, we have $\gamma_2 - \gamma_1 \xrightarrow{\lambda \rightarrow \infty} \frac{ck}{k-1}-1$ and $\mu_\ell^T (w^*_{\ell'}-w^*_{\ell})\xrightarrow{\lambda\rightarrow\infty}0$ which implies $\lim_{\lambda \to \infty} \zeta = \frac{1}{2}$.

\section{Additional Experiments}

Fig \ref{fig:l2_1200_600}, \ref{fig:l1_1200_600}, \ref{fig:linf_1200_600} showcase more plots for datasets generated from GMMs. They are produced in the same way, follow the same notation as the plots from the main body of the paper and turn out to be very similar to those qualitatively but we include them here for providing further evidence towards our observations. 

Moreover, in addition to synthetic data we experimented with MNIST as well. We took the number of classes $k = 5$ as before and then picked $100$ first points corresponding to each digit from $0$ to $4$ from the MNIST train set, so that the total number of data we train on is $n = 500$. After that, we varied $\lambda$ and trained classifiers for the constructed dataset by solving (\ref{eq:est}) using CVXPY for $f(\cdot) = \|\cdot\|_2^2$ and $f(\cdot) = \|\cdot\|_\infty$ and evaluated their performances by evaluating the test errors on the part of the MNIST test set corresponding to the selected digits. Same as for the GMMs, we observed that the solutions for $f(\cdot) = \|\cdot\|_\infty$ can be compressed to one bit and for large enough values of $\lambda$ and their performances are only marginally lower than the performances of the solutions corresponding to $f(\cdot) = \|\cdot\|_2^2$. We noticed that the picture does not depend much on $c$, so we provide the results we obtained for $c = 0.4$ here. The reader can find the results of this experiment in Fig. \ref{fig: MNIST}. The plot on the left hand side depicts the test errors of the true and compressed solutions for $f(\cdot) = \|\cdot\|_\infty$ and the plot on the right hand side depicts the test errors corresponding to $f(\cdot) = \|\cdot\|^2_2$).  

\label{sec: exp}

\begin{figure}%
    \centering
    \subfloat{\includegraphics[width = 3.25 in]{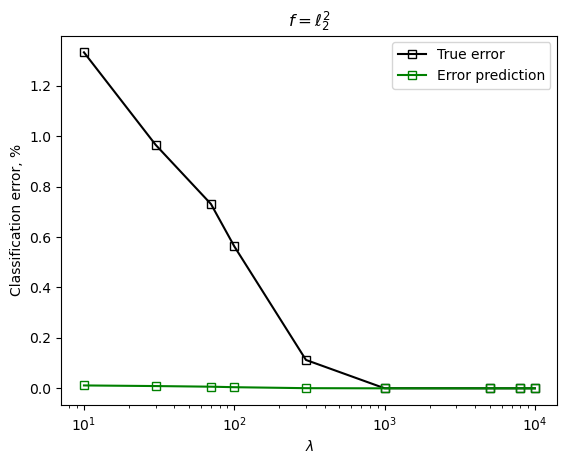}}%
    \subfloat{\includegraphics[width = 3.25 in]{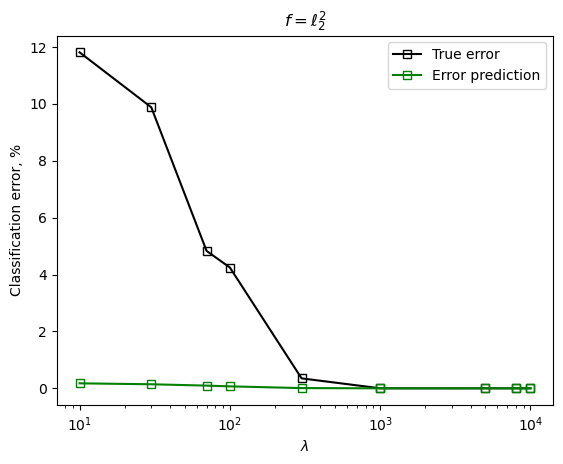}}%
    \caption{We took $d = 1200$, $n = 300$, $k = 5$, $r = 0.7$, $c =0.2$, $\sigma = 1$ for the first plot and  $d = 600$, $n = 300$, $k = 5$, $r = 0.7$, $c =0.2$, $\sigma = 1$ for the second. In both cases, the prediction underestimates the true error for the smaller values of $\lambda$ but matches it for the greater values of $\lambda$, as expected. } %
    \label{fig:l2_1200_600}
\end{figure}
\begin{figure}[htp]
    \centering
    \subfloat{\includegraphics[width = 3.25 in]{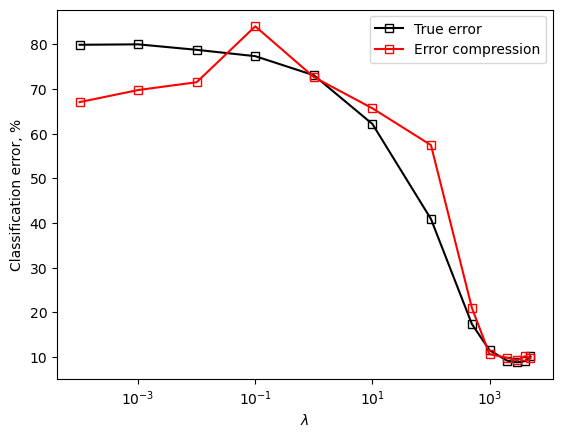}}
    \subfloat{\includegraphics[width = 3.25 in]{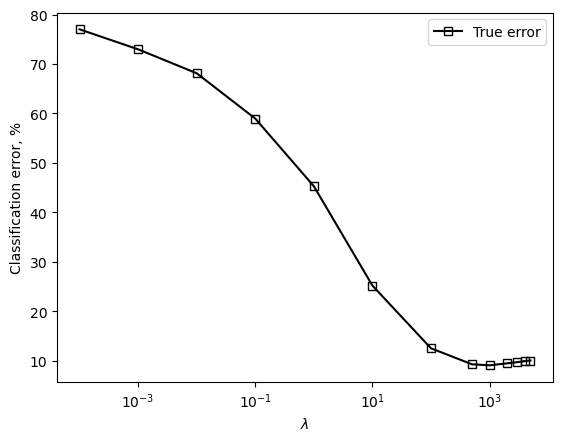}}
    \caption{MNIST dataset. Optimal errors for both classifiers are approximately equal to $0.09$. }
    \label{fig: MNIST}
\end{figure}

\begin{figure}[htp]
    \centering
    \subfloat{\includegraphics[width = 3.25 in]{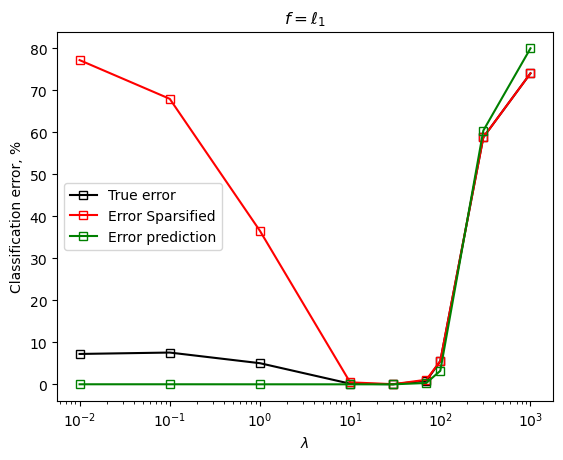}}
    \subfloat{\includegraphics[width = 3.25 in]{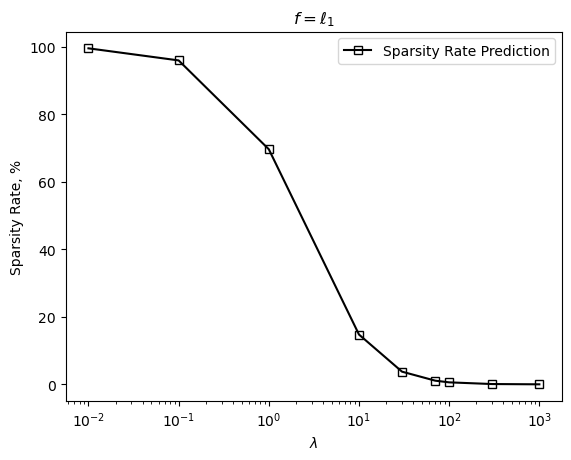}}
    
    \subfloat{\includegraphics[width = 3.25 in]{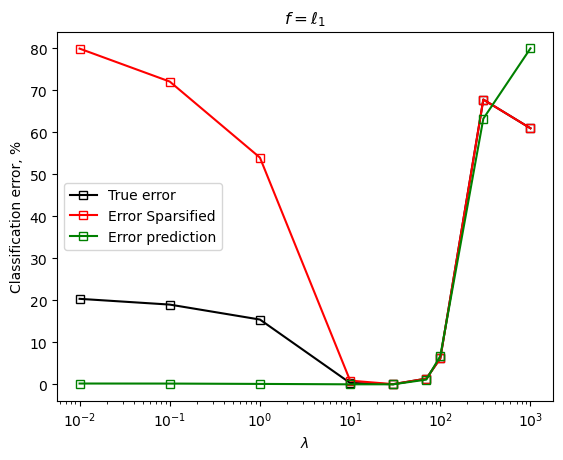}}
    \subfloat{\includegraphics[width = 3.25 in]{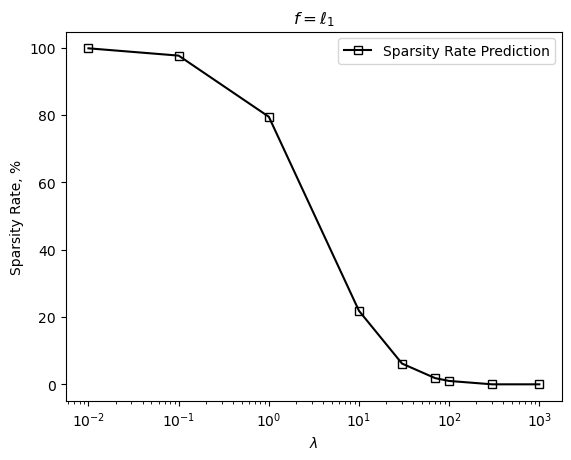}}
    \caption{We took $d = 1200$, $n = 300$, $k = 5$, $r = 0.7$, $c =0.2$ and $\sigma = 1$ for the plots on the top and $d = 600$, $n = 300$, $k = 5$, $r = 0.7$, $c =0.2$ for the plots on the bottom. They illustrate that for these parameters it is possible to sparsify the weights by 33X and 16X respectively, while keeping the classification error very low. }
    \label{fig:l1_1200_600}
\end{figure}

\begin{figure}[htp]
    \centering
    \subfloat{\includegraphics[width = 3.25 in]{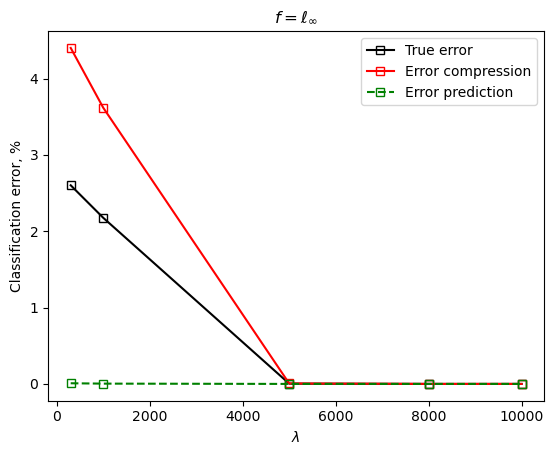}}
    \subfloat{\includegraphics[width = 3.25 in]{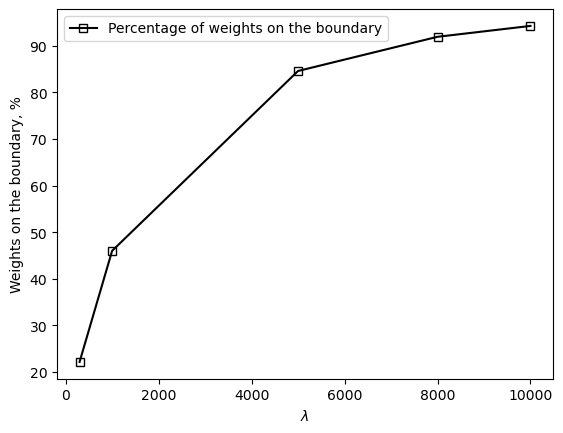}}
    
    \subfloat{\includegraphics[width = 3.25 in]{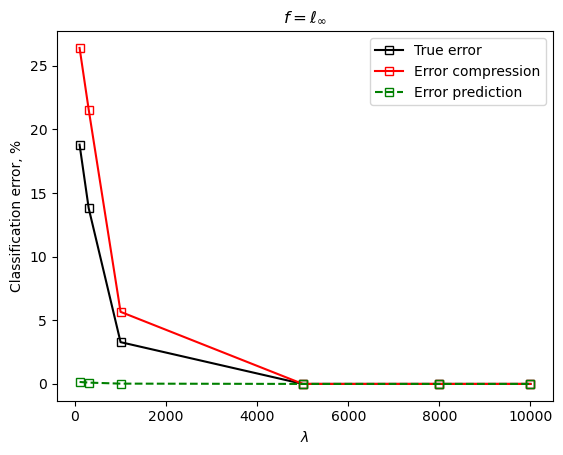}}
    \subfloat{\includegraphics[width = 3.25 in]{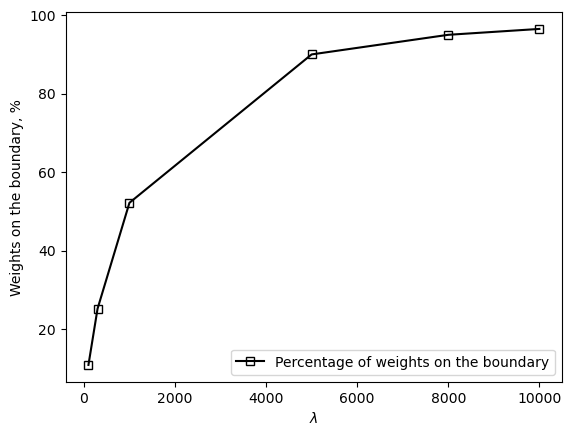}}
    \caption{We took $d = 1200$, $n = 300$, $k = 5$, $r = 0.7$, $c =0$, $\sigma = 1$ for the plots on the top and $d = 600$, $n = 300$, $k = 5$, $r = 0.7$, $c =0.2$, $\sigma = 1$ for the plots on the bottom. They illustrate that for these parameters it is possible to compress each weight to one bit while keeping the classification error very low. }
    \label{fig:linf_1200_600}
\end{figure}

\end{document}